\renewcommand{\algorithmiccomment}[1]{\bgroup\hfill$\triangleright$~#1\egroup}
\newtheorem{fact}{Fact}
\newtheorem{defn}{Definition}
\newtheorem{example}{Example}
\newtheorem{thm}{Theorem}
\newtheorem*{thm*}{Theorem}
\newtheorem{lemma}{Lemma}
\newtheorem{prop}{Proposition}
\newtheorem{assumption}{Assumption}
\newtheorem{remark}{Remark}
\newtheorem*{question*}{Question}
\newtheorem*{answer*}{Answer}
\newtheorem*{solution*}{Solution}
\newtheorem*{nextstep*}{Next Step}
\newtheorem*{issue*}{Issue}
\newcommand{\rref}[2][]{\prettyref{#2}}
\newcommand{\Var}{\textrm{Var}}
\newcommand{\Cov}{\textrm{Cov}}
\newcommand{\Unif}{\mathsf{Unif}}
\newcommand\independent{\protect\mathpalette{\protect\independenT}{\perp}}
\def\independenT#1#2{\mathrel{\rlap{$#1#2$}\mkern2mu{#1#2}}}
\newcommand{\kron}{\mathbbm{1}}
\newcommand{\inv}{{-1}}
\DeclareMathOperator*{\argmax}{arg\,max}
\DeclareMathOperator{\pa}{pa}
\newcommand{\Pa}{\overline{\textrm{pa}}}
\newcommand{\an}{{\textrm{an}}}
\newcommand{\T}{^\top}
\newcommand{\minusT}{{^{-\top}}}
\newcommand{\bbE}{\mathbb{E}}
\newcommand{\bbR}{\mathbb{R}}
\newcommand{\bbP}{\mathbb{P}}
\newcommand{\hatA}{\widehat{A}}
\newcommand{\hatB}{\widehat{B}}
\newcommand{\hatC}{\widehat{C}}
\newcommand{\hatD}{\widehat{D}}
\newcommand{\hatH}{\widehat{H}}
\newcommand{\hatM}{\widehat{M}}
\newcommand{\hatQ}{\widehat{Q}}
\newcommand{\hatR}{\widehat{R}}
\newcommand{\hati}{\widehat{i}}
\newcommand{\hatt}{\widehat{t}}
\newcommand{\hLambda}{\widehat{\Lambda}}
\newcommand{\bc}{{\mathbf{c}}}
\newcommand{\be}{{\mathbf{e}}}
\newcommand{\bh}{{\mathbf{h}}}
\newcommand{\bq}{{\mathbf{q}}}
\newcommand{\br}{{\mathbf{r}}}
\newcommand{\bv}{{\mathbf{v}}}
\newcommand{\boldalpha}{{\bm{\alpha}}}
\newcommand{\boldeta}{{\bm{\eta}}}
\newcommand{\cA}{{\mathcal{A}}}
\newcommand{\cG}{{\mathcal{G}}}
\newcommand{\cI}{{\mathcal{I}}}
\newcommand{\cN}{{\mathcal{N}}}
\newcommand{\cO}{{\mathcal{O}}}
\newcommand{\cX}{{\mathcal{X}}}
\newcommand{\hatG}{\widehat{G}}
\newcommand{\hatOmega}{\widehat{\Omega}}
\newcommand{\rank}{{\textnormal{rank}}}
\newcommand{\rowspan}{{\textnormal{rowspan}}}
\newcommand{\proj}{{\textnormal{proj}}}
\newcommand{\bzero}{\mathbf{0}}
\newcommand{\IterativeDifferenceProjection}{\textsc{IterativeDifferenceProjection}}
\newcommand{\IdentifyPartialOrder}{\textsc{ID-PartialOrder}}
\newcommand{\IdentifyAncestors}{\textsc{ID-Ancestors}}
\newcommand{\IterativeDifferenceProjectionNoisy}{\textsc{IterativeDifferenceProjectionFiniteSample}}
\newcommand{\IdentifyPartialOrderNoisy}{\textsc{IdentifyPartialOrderFiniteSample}}
\newcommand{\IdentifyAncestorsNoisy}{\textsc{IdentifyAncestorsFiniteSample}}
\newcommand{\cholesky}{\textsc{cholesky}}
\newcommand{\VEC}{\textnormal{vec}}
\newcommand{\normalize}{\textnormal{normalize}}
\newcommand{\tTheta}{\tilde{\Theta}}
\newcommand{\hatbq}{\widehat{\bq}}
\newcommand{\An}{\overline{\an}}
\newcommand{\Sig}{\textnormal{Sig}}
\newcommand{\source}{\textnormal{source}}
\newcommand{\Bsigma}{B^{(\sigma)}}
\newcommand{\Hsigma}{H^{(\sigma)}}
\newcommand{\Csigma}{C^{(\sigma)}}
\newcommand{\barcG}{\overline{\cG}}
\newcommand{\hatcG}{\widehat{\cG}}
\newcommand{\hTheta}{\widehat{\Theta}}
\newcommand{\tildeB}{\tilde{B}}
\newcommand{\st}{\textnormal{s.t.}}
\icmltitlerunning{Linear Causal Disentanglement via Interventions}
\begin{document}

\twocolumn[
\icmltitle{Linear Causal Disentanglement via Interventions}


\icmlsetsymbol{equal}{*}

\begin{icmlauthorlist}
\icmlauthor{Chandler Squires}{equal,broad,lids}
\icmlauthor{Anna Seigal}{equal,broad,seas}
\icmlauthor{Salil Bhate}{broad}
\icmlauthor{Caroline Uhler}{broad,lids}
\end{icmlauthorlist}

\icmlaffiliation{lids}{Laboratory for Information and Decision Systems, MIT}
\icmlaffiliation{seas}{School of Engineering and Applied Sciences, Harvard University}
\icmlaffiliation{broad}{Broad Institute of MIT and Harvard}

\icmlcorrespondingauthor{Chandler Squires}{csquires@mit.edu}

\icmlkeywords{Causal Structure Learning, Causal Representation Learning}

\vskip 0.3in
]


\printAffiliationsAndNotice{\icmlEqualContribution} 

\begin{abstract}
Causal disentanglement seeks a representation of data involving latent variables that are related via a causal model. 
A representation is identifiable if both the latent model and the transformation from latent to observed variables are unique.
In this paper, we study observed variables that are a linear transformation of a linear latent causal model.
Data from interventions are necessary for identifiability: if one latent variable is missing an intervention, we show that there exist distinct models that cannot be distinguished.
Conversely, we show that a single intervention on each latent variable is sufficient for identifiability.
Our proof uses a generalization of the RQ decomposition of a matrix that replaces the usual orthogonal and upper triangular conditions with analogues depending on a partial order on the rows of the matrix, with partial order determined by a latent causal model.
We corroborate our theoretical results with a method for causal disentanglement. 
We show that the method accurately recovers a latent causal model on synthetic and semi-synthetic data and we illustrate a use case on a dataset of single-cell RNA sequencing measurements.


\end{abstract}

\addtocontents{toc}{\protect\setcounter{tocdepth}{0}}
\section{Introduction}\label{sec:intro}

The goal of representation learning is to find a description of data that is interpretable, useful for reasoning, and generalizable.
Such a representation \textit{disentangles} the data into conceptually distinct variables.
Traditionally, conceptual distinctness of variables has meant statistical independence.
This is the setting of independent component analysis \citep{comon1994independent}.
However, human reasoning often involves variables that are not statistically independent. 
For example, the presence of a sink and the presence of a mirror in an image.
It is therefore natural to generalize conceptual distinctness to variables that are \textit{causally autonomous}; i.e., interventions can be performed on each variable separately.
This motivates \textit{causal disentanglement} \citep{yang2021causalvae}, the recovery of causally autonomous variables from data.

In this paper, we study the identifiability of causal disentanglement; i.e., 
its uniqueness. We adopt a generative perspective, as in \citep{bengio2013representation,moran2021identifiable}.
We assume that the observed variables are generated in two steps.
First, latent variables $Z$ are sampled from a distribution $\bbP(Z)$.
Then, the observed variables $X$ are the image of the latent variables under a deterministic mixing function.
We assume that the latent variables are generated according to a linear structural equation model and that the mixing function is an injective linear map.
Recent work has studied identifiability of various settings in representation learning \citep{khemakhem2020variational,ahuja2021properties}.
A common assumption for identifiability is that variables are observed across multiple contexts, each affecting the latent distribution $\bbP(Z)$ but not the mixing function.
In our setup, each context is either an \textit{intervention} on a latent variable, or is \textit{observational}, i.e., has no interventions.
We use the same terminology for interventions as \citet{squires22causal}.
From most to least general, a \textit{soft} intervention on $Z_i$ changes the dependency of $Z_i$ on its direct causes, a \textit{perfect} intervention removes this dependency but allows for stochasticity of $Z_i$, and a \textit{do}-intervention sets $Z_i$ to a deterministic value.

Our main result is that our linear causal disentanglement setup is identifiable if, in addition to an observational context, for each latent variable $Z_i$, there is a context where $Z_i$ is the 
intervened variable under a perfect intervention; see \rref{sec:theoretical-sufficiency}.
This is a \textit{sufficient} condition for identifiability.
Furthermore, we show that the condition of at least one intervention per latent node is \textit{necessary} in the worst case: if some latent node is not intervened in any context, then there exist latent causal representations that are not identifiable; see \rref{sec:theoretical-necessity}.
Our focus in this paper is on identifiability guarantees.
Nonetheless, we convert our proofs into a method for causal disentanglement in the finite-sample setting.
In \rref{sec:experimental}, we apply the method to synthetic and semi-synthetic data and show that it recovers the generative model, and we compute a linear causal disentanglement on a single-cell RNA sequencing dataset.

\subsection{Motivating Example}

Consider two latent variables $Z = (Z_1, Z_2)$.
Assume that $X=(X_1, X_2)$ is observed in two contexts $k \in \{ 0, 1\}$, that $X = G Z$ in both contexts, and that in context $k$,
{
\setlength{\abovedisplayskip}{5pt}
\setlength{\belowdisplayskip}{5pt}
\begin{equation*}
    Z = A_k Z + \Omega_k^{1/2} \varepsilon \quad \text{for} \quad 
\varepsilon \sim \cN(0, I).
\end{equation*}
Let
\begin{equation*}
\begin{aligned}
    A_0 
    = A_1 =
    &\begin{bmatrix}
    0 & -1
    \\
    0 & \phantom{-}0
    \end{bmatrix},
    \quad
    \Omega_0 
    = 
    \begin{bmatrix}
    1 & 0
    \\
    0 & 1
    \end{bmatrix},
    \\
    \Omega_1
    = 
    &\begin{bmatrix}
    1 & 0
    \\
    0 & 1/4
    \end{bmatrix},
    \quad
    G
    = 
    \begin{bmatrix}
    -2 & \phantom{-}2
    \\
    \phantom{-}2 & -1
    \end{bmatrix}.
\end{aligned}
\end{equation*}

Context $k=1$ is an intervention on $Z_2$ that changes its variance.
}
The covariance of $X$ in contexts $k = 0$ and $k=1$ are, respectively,
\[
\Sigma_0 
= 
\begin{bmatrix}
\phantom{-}20 & -16
\\
-16 & \phantom{-}13
\end{bmatrix}
\qquad \text{and} \qquad 
\Sigma_1
= 
\begin{bmatrix}
\phantom{-}8 & -7
\\
-7 & 25/4
\end{bmatrix},
\]
since $\Sigma_k = G(I - A_k)^\inv \Omega_k (I - A_k)\minusT G\T$.
However, the following parameters give the same covariance matrices:
\begin{align*}
\hatA_0 
= 
\hatA_1
=
\begin{bmatrix}
0 & 0
\\
0 & 0
\end{bmatrix},
\quad
\hatOmega_0 
&= 
\begin{bmatrix}
1 & 0
\\
0 & 1
\end{bmatrix},
\\
\hatOmega_1
= 
\begin{bmatrix}
1 & 0
\\
0 & 1/4
\end{bmatrix},
\quad
\hatG
&= 
\begin{bmatrix}
-2 & \phantom{-}4
\\
\phantom{-}2 & -3
\end{bmatrix}.
\end{align*}
The second set of parameters imply independence of $Z_1$ and $Z_2$, since $(\hatA_0)_{1,2} = 0$, whereas the original parameters imply non-independence since $(A_0)_{1,2} \neq 0$.
This non-identifiability holds for generic $A_0$, $A_1$, $\Omega_0$, $\Omega_1$, and $G$, where $A_1$ comes from an intervention on $Z_2$, see \rref{sec:theoretical-necessity}.
This non-identifiability extends to any number of latent variables $d$: we show that, in the worst case, non-identifiability holds when fewer than $d+1$ contexts are observed.

\subsection{Related Work}\label{sec:related}

The growing field of causal representation learning blends techniques from multiple lines of work.
Chief among these are identifiable representation learning, causal structure learning, and latent DAG (directed acyclic graph) learning.

\textbf{Identifiable representation learning.}
The identifiability of the linear independent component analysis (ICA) model was given in \citet{comon1994independent}. 
Identifiability of a nonlinear ICA model is studied in \citep{hyvarinen2019nonlinear,khemakhem2020variational}, 
in the presence of auxiliary variables.
The ICA model imposes the stringent condition that the latent variables are independent (in the linear setting) or conditionally independent given the auxiliary variables (in the nonlinear setting).
Recent works on identifiable representation learning \citep{ahuja2021properties,zimmermann2021contrastive} introduce structure on the data generating process to circumvent the independence condition.
However, they do not consider latent variables that are causally related.

\begin{table*}[t]
    \centering
    \begin{tabular}{@{}r|lll@{}}
    \toprule
        & Setting 
        & 
        Graphical Conditions 
        &
        Identification Result
        \\
        \hline
        \hline
        \citet{silva2006learning} 
        & 
        \textbf{LNG} 
        & 
        All children pure
        & 
        Identified up to Markov equivalence.
        \\
        \hline
        \citet{halpern2015anchored} 
        & 
        \textbf{Dis} 
        & 
        1 pure child per latent
        & 
        Identified up to Markov equivalence.
        \\
        \hline
        \citet{cai2019triad} 
        & \textbf{LNG} 
        & 
        2 pure children per latent 
        & 
        Fully identified.
        \\
        \hline
        \citet{xie2020generalized} 
        & 
        \textbf{LNG} 
        & 
        2 pure children per latent 
        & 
        Fully identified.
        \\
        \hline
        \citet{xie2022identification} 
        & 
        \textbf{LNG} 
        & 
        2 pure children per latent*
        & 
        Fully identified.
        \\
        \hline
        \citet{kivva2021learning} 
        & 
        \textbf{Dis} 
        & 
        No twins
        & 
        Identified up to Markov equivalence.
        \\
        \hline
        \citet{liu2022weight}
        & 
        \textbf{LG} 
        & 
        None
        & 
        Fully identified from 2$|V|$
        \\
        &&&
        perfect interventions if $|E| \leq |V|$.
        \\
        \hline
        \citet{ahuja2022interventional}
        & 
        \textbf{Poly} 
        & 
        None
        & 
        Fully identified from $|V|$ do-interventions.
        \\
        \hline
        \textcolor{violet}{This paper} 
        & 
        \textbf{LNG} or \textbf{LG} 
        & 
        None
        &
        Fully identified from $|V|$ perfect interventions.
        \\
    \bottomrule
    \end{tabular}
    \caption{
    \textbf{Settings from prior works on learning latent DAG models.}
    \textbf{LNG} is short for \textit{linear non-Gaussian}, \textbf{LG} for \textit{linear Gaussian}, \textbf{Dis} for \textbf{discrete}, and \textbf{Poly} for \textit{polynomial mixing}.
    In *, pure children are allowed to be latent.
    The number of nodes and the number of edges in the latent graph are denoted $|V|$ and $|E|$, respectively.
    }
    \label{tab:prior-work}
    \vspace{-0.2in}
\end{table*}

\textbf{Causal Structure Learning.}
Causal structure is identifiable up to an \textit{equivalence class} that depends on the available interventional data \citep{verma1990equivalence,hauser2012characterization,yang2018characterizing,squires2020permutation,jaber2020causal}.
See \citet{squires22causal} for a recent review.
A key line of work \citep{eberhardt2005number,hyttinen2013experiment} characterizes the interventions necessary and sufficient to ensure that the causal structure is fully identifiable; i.e., that the equivalence class is of size one.
In particular, \citet{eberhardt2005number} showed that $d - 1$ interventions are in the worst case necessary to fully identify a causal DAG model on $d$ nodes.
The current paper extends this line of work to DAG models over \textit{latent} variables.

\textbf{Learning latent DAG models.}
The task of learning a DAG over latent variables dates back at least to \citet{silva2006learning}.
They introduced the notion of a \textit{pure child}: an observed variable $X_i$ with only one latent parent, such $X_i$ is also called an \textit{anchor} \citep{halpern2015anchored,saeed2020anchored}.
The method of \citet{silva2006learning} requires that all observed variables are pure children.
Recent works relax this assumption by studying the linear non-Gaussian setting, where all latent and observed variables are linear functions of their parents plus independent non-Gaussian noise.
For example, \citet{cai2019triad} propose a method which learns a latent DAG under the assumption that each latent variable has at least two pure children.
The pure child assumption can be extended to allow subsets of latent variables with the same observed children, as in \citet{xie2020generalized}, which introduces the Generalized Independent Noise condition.
This condition was used by \citet{xie2022identification} to permit latent variables with no observed children; i.e., a hierarchical latent model.

Other works consider the discrete setting, requiring that the latent and observed variables are discrete \citep{halpern2015anchored} or that the latent variables are discrete \citep{kivva2021learning}.
The paper \citet{kivva2021learning} relaxes the pure child assumption, as follows. The children of node $Z_i$ are the variables with a directed edge from $Z_i$. 
The \textit{no twins} assumption says that the observed children of any two latent nodes are distinct sets.
A similar assumption called \textit{strong non-redundancy} appears in \citet{adams2021identification}, which considers models whose latent variables can be downstream of observed variables.
See \rref{appendix:additional-related}.
These works require sparsity in the map between latent and observed variables: they do not allow all observed variables to be children of all latent variables, which is the setting of the present paper.

A number of recent works discard the sparsity requirement.
\citet{ahuja2022weakly} and \citet{brehmer2022weakly} learn a latent DAG from paired counterfactual data.
%
In contrast, we study \textit{unpaired} data, which is more realistic in applications such as biology \citep{stark2020scim}.
To the best of our knowledge, only two works consider unpaired data without sparsity assumptions.
\citet{liu2022weight} study a linear Gaussian model over $d$ latent variables, a nonlinear mixing function, and vector-valued contexts.
Their identifiability result only applies to our setting if the latent graph has at most as many edges as nodes, see \rref{appendix:weight-variant-comparison}.
In that case, their result implies that $2 d$ interventions suffice for identifiability.
We strengthen this result, showing that (i) $d$ interventions are sufficient and (ii) no restrictions on the latent graph are required.
Moreover, we show that $d$ interventions are, in the worst case, necessary. 
Such necessary conditions do not appear in prior work on identifying latent DAGs.
Finally, contemporaneous work \citep{ahuja2022interventional} shows that a latent DAG is identifiable from the more restricted class of do-interventions, but allow non-linear relationships.
See \rref{tab:prior-work} for a summary of prior work.

\section{Setup}\label{sec:setup}

We consider $d$ latent variables $Z = (Z_1, \ldots, Z_d)$, generated according to a linear structural equation model.
We index contexts by $k \in \{ 0 \} \cup [K]$, where $[K] := \{1, \ldots, K \}$.
The linear structural equation models in each context are related: context $k = 0$ is \textit{observational} data, while contexts $k \in [K]$ are \textit{interventional} data.
We now state the assumptions for our model; see also \rref{fig:model}.

\begin{assumption}\label{assumption:data-generating}
\hfill
{
\setlength{\abovedisplayskip}{7pt}
\setlength{\belowdisplayskip}{5pt}
\begin{itemize}
    \item[(a)] \textbf{Linear latent model}: Let $\cG$ be a DAG with nodes ordered so that an edge $j \to i$ implies $j > i$.
    The latent variables $Z$ follow a linear structural equation model: in context $k$, the latent variables $Z$ satisfy
    \begin{equation*}\label{eqn:z-structural-equation}
        Z = A_k Z + \Omega_k^{1/2} \varepsilon,
        \quad\quad
        \Cov(\varepsilon) = I_d,
    \end{equation*}
    where $I_d \in \bbR^{d \times d}$ is the identity matrix, $\Omega_k \in \bbR^{d \times d}$ is diagonal with positive entries, and $A_k \in \bbR^{d \times d}$ has $(A_k)_{ij} \neq 0$ if and only if there is an edge $j \to i$ in $\cG$.
    That is, in context $k$,
    \begin{equation}\label{eqn:z-equation-b}
        Z = B_k^\inv \varepsilon,
        \quad
        \textnormal{where} \quad B_k = \Omega_k^{-1/2} (I_d - A_k).
    \end{equation}
    
    \item[(b)] \textbf{Generic single-node interventions:} For each $k \in [K]$, there exists $i_k \in \{1, \ldots, d \}$ such that
    \begin{equation*}
        B_k = B_0 + \be_{i_k} \bc_k^\top,
    \end{equation*}
    further, $(B_k)^\top \be_{i_k}$ is not a multiple of $(B_0)^\top \be_{i_k}$ unless $i_k$ has no parents in $\cG$.

    \item[(c)] \textbf{Linear observations:} Fix $p \geq d$. There is a full rank matrix $G \in \bbR^{p \times d}$ such that
        $X = G Z$
    in every context $k$.
    Let $H := G^\dagger$ denote its Moore-Penrose pseudoinverse.
    We set the entry of largest absolute value in each row of $H$ to 1.
    If multiple entries in a row have same absolute value we set the leftmost entry to be positive.
\end{itemize}
}
\end{assumption}

Our strongest results hold under one additional assumption.

\begin{assumption}\label{assumption:perfect-interventions}
\textbf{Perfect interventions:} For each $k \in [K]$, there exists $i_k \in \{1, \ldots, d \}$ such that
{
\setlength{\abovedisplayskip}{5pt}
\setlength{\belowdisplayskip}{5pt}
\begin{align*}
    B_k = B_0 + \be_{i_k} \bc_k^\top,
    \quad\quad
\end{align*}
}
where $\bc_k = \lambda_k \be_{i_k} - B_0^\top \be_{i_k}$ for some $\lambda_k > 0$.
\end{assumption}

\begin{remark}[The parts of~\rref{assumption:data-generating} that hold without loss of generality]
Taking $\Var(\varepsilon_i) = 1$ for all $i$ holds without loss of generality, since scaling can be absorbed into the matrix $\Omega_k$.
A linear structural equation model is \emph{causally sufficient} if $\varepsilon_i \independent \varepsilon_j$ for all $i \neq j$.
Thus, for a causally sufficient linear structural equation model, we have $\Cov(\varepsilon) = I_d$ in \rref{assumption:data-generating}(a) without loss of generality.
The ordering of nodes in \rref{assumption:data-generating}(a)
is also without loss of generality: a permutation of the latent nodes can be absorbed into~$G$.
Our ordering makes the matrices $A_k$ upper triangular.

The scaling of $H$ in \rref{assumption:data-generating}(c) is
without loss of generality, as follows.
If $\{ B_k \}_{k=0}^K$ and $H$ satisfy \rref{assumption:data-generating} then $X = (B_k H)^\dagger \varepsilon$. 
Consider the matrices $\{ B_k \Lambda \}_{k=0}^K$ and $\Lambda^\inv H$, for $\Lambda$ diagonal with positive entries.
Observe that $X' = (B_k \Lambda \Lambda^{-1} H)^\dagger \varepsilon$, 
has the same distribution as $X$ in context $k$.
The alternative
matrices satisfy \rref{assumption:data-generating}, except for the scaling condition on $H$.
\rref{assumption:data-generating}(c) therefore
fixes the scaling indeterminacy of each node.

The genericity condition in \rref{assumption:data-generating}(b) automatically holds for perfect interventions.
It fails to hold only for soft interventions that changes the variance but not the edge weights\footnote{i.e., $(\Omega_k)_{i_k,i_k} \neq (\Omega_0)_{i_k,i_k}$ and $(A_k)^\top \be_{i_k} = (A_0)^\top \be_{i_k}$}.
We show the importance of the genericity assumption for the identifiability of causal disentanglement in \rref{appendix:unfaithful-counterexample}.
\end{remark}

\begin{figure}
    \centering
    \includegraphics[width=0.48\textwidth]{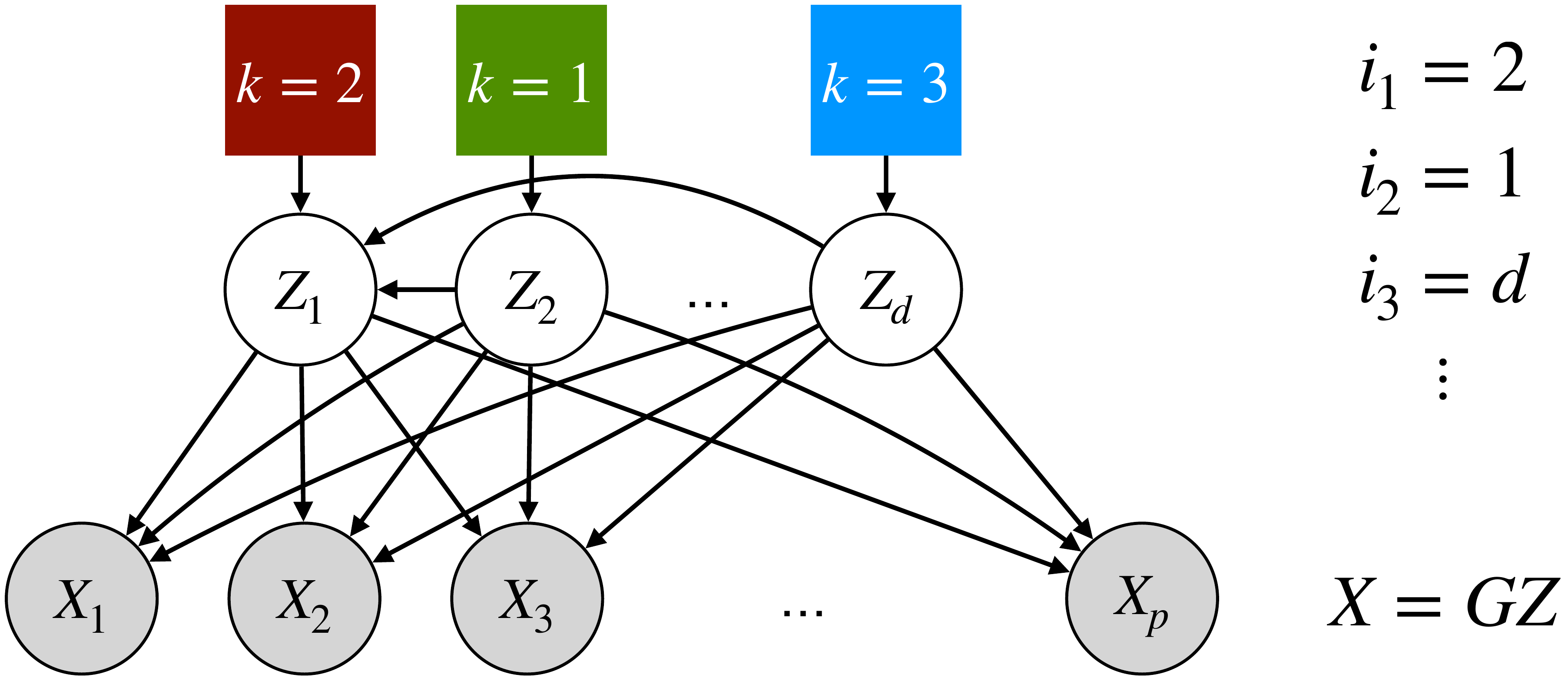}
    \caption{
    \textbf{The proposed setup.}
    The latent variables $Z = (Z_1, \ldots, Z_d)$ follows a linear DAG model, with contexts $k = 1, \ldots, K$ being single node interventions on targets $i_1, \ldots, i_K$.
    The observed variables $X = (X_1, \ldots, X_p)$ are an injective linear function of the latent variables $X = G Z$, where $G \in \bbR^{p \times d}$ does not vary across contexts.
    }
    \label{fig:model}
    \vspace{-.4cm}
\end{figure}

We give an example of a setting in which \rref{assumption:data-generating} might apply.
%
%
Suppose $Z$ is the internal state of a cell (e.g., the concentrations of proteins, the locations of organelles, etc.) and that each context is an exposure to a different small molecule.
\rref{assumption:data-generating}(b) posits that each small molecule has a highly \textit{selective} effect, modifying only one cellular mechanism.
\rref{assumption:perfect-interventions} posits that each small molecule completely disrupts the modified mechanism.
While one does not expect all small molecules to be highly selective, one could filter based on selectivity.

In \rref{appendix:hypothesis-testing}, we describe a hypothesis test to test implications of \rref{assumption:data-generating}(b), and show empirically that this test effectively determines model membership.

The covariance of $X$ in context $k$ is rank deficient when $d < p$, since $X = GZ$.
We therefore define the precision matrix of $X$ in context $k$ to be the pseudoinverse of the covariance matrix, $\Theta_k := \bbE[X X^\top]^\dagger$. Then
{
\begin{equation}\label{eqn:theta-k}
    \Theta_k = H^\top B_k^\top B_k H,
\end{equation}
}
by \rref{prop:pseudoinverse-precision} in \rref{appendix:precision-matrix}, since $\Cov_k(Z) = (B_k^\top B_k)^\inv$. 
%

We consider an unknown latent DAG. Each candidate DAG has unknown weights on its edges, unknown variances on its nodes, unknown new weights under each intervention, and an unknown mixing map to the observed variables.
That is, our goal is to decompose the precision matrices $\{ \Theta_k \}_{k=0}^K$ in \rref{eqn:theta-k} to recover $H$ and $\{ B_k \}_{k=0}^K$.

We recall some graph theoretic notions.
The parents of node $i$ are $\pa_\cG(i) = \{ j \mid j \to i ~\textnormal{in}~\cG \}$, and we define $\Pa_\cG(i) := \pa_\cG(i) \cup \{ i \}$.
Similarly, $\an_\cG(i)$ denotes the \textit{ancestors} of $i$ in $\cG$, the vertices $j$ with a directed path from $j$ to $i$. We define $\An_\cG(i) := \an_\cG(i) \cup \{ i \}$ and $\An_\cG(\cI) := \cup_{i \in \cI}~\An_\cG(i)$.
The source nodes of $\mathcal{G}$ are the nodes $i$ with $\pa_\cG(i) = \varnothing$.
We drop the subscript $\cG$ when the graph is clear from context.

The \textit{transitive closure} of $\cG$, denoted $\barcG$, is the DAG with $\pa_{\barcG}(i) = \an_{\cG}(i)$.
Given a DAG $\cG$, define the partial order $\prec_\cG$ to be $i~\prec_\cG~j$ if and only if $j \in \an_\cG(i)$.
Thus, the transitive closure is the graph with $j \to i$ whenever $i \prec_\cG j$.

To decompose the precision matrices in \rref{eqn:theta-k}, we introduce a matrix decomposition defined on a partial order. 
Recall that the RQ decomposition writes $H \in \bbR^{d \times p}$ as $H= R Q$ for an upper triangular $R \in \bbR^{d \times d}$ and orthogonal $Q \in \bbR^{d \times p}$.
We generalize the RQ decomposition\footnote{The RQ decomposition is used here (rather than the more familiar QR decomposition) because it gives an expression for the \textit{rows} of $H$, which each correspond to one latent variable.}.

\begin{defn}[The partial order RQ decomposition]\label{defn:partial-order-rq-decomposition}
Given a partial order $\prec$, the partial order RQ decomposition writes $H \in \bbR^{d \times p}$ as $H = R Q$, where $R \in \bbR^{d \times d}$ satisfies $R_{ii} \geq 0$ and $R_{ij} = 0$ unless $i \preceq j$, and where $\bq_i$, the $i$-th row of $Q \in \bbR^{d \times p}$, is norm one and orthogonal to $\langle \bq_j : i \prec j \rangle$.
\end{defn}
We recover the usual reduced RQ decomposition~\citep{trefethen1997numerical} when $\prec$ is the total order $1 < \cdots < d$.
We construct the partial order RQ decomposition in \rref{appendix:partial-order-rq}.
Finally, given a positive definite matrix $M \in \bbR^{d \times d}$, the \textit{Cholesky factor} $U \in 
\bbR^{d 
\times d}$, denoted $\cholesky(M)$, is the unique upper triangular matrix with positive diagonal such that $M = U^\top U$.

\section{Identifiability of Causal Disentanglement}

We establish the sufficiency and worst case necessity of one intervention per latent node for identifiability of our causal disentanglement problem. 
The following result describes recovery of $\cG$. Later, we discuss identifiability of the parameters in our setup.
Since the labeling of latent nodes is unimportant, $\cG$ is recovered if it is found up to relabeling.

\begin{thm}
\label{thm:G_and_Gbar}
Assume the setup in Assumption~\ref{assumption:data-generating} with  $d$ latent variables. Then $d$ interventions are sufficient and, in the worst case, necessary to recover $\barcG$ from $\{ \Theta_k \}_{k \in K}$.
If \rref{assumption:perfect-interventions} also holds, then $d$ interventions are sufficient and, in the worst case, necessary to recover $\cG$ from $\{ \Theta_k \}_{k \in K}$.
\end{thm}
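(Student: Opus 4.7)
I will prove sufficiency and necessity separately, and within each split the recovery of $\barcG$ (under Assumption~\ref{assumption:data-generating} alone) from the recovery of $\cG$ (which uses the additional Assumption~\ref{assumption:perfect-interventions}). The starting observation for sufficiency is that $B_k = B_0 + \be_{i_k}\bc_k^\top$ changes only row $i_k$ of $B_0$, so $\Theta_k - \Theta_0 = H^\top(B_k^\top B_k - B_0^\top B_0) H$ is a symmetric rank-at-most-two update whose column span lies in the span of $\{H_{j,\cdot}^\top : j \in \Pa_\cG(i_k)\}$. A direct expansion gives $B_k^\top B_k - B_0^\top B_0 = \ba\bb^\top + \bb\ba^\top + \bb\bb^\top$ with $\ba = B_0^\top \be_{i_k}$ and $\bb = \bc_k$. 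If $i_k$ is a source of $\cG$, the structural support constraints on row $i_k$ of $B_0$ and of $B_k$ force both $\ba$ and $\bb$ to be scalar multiples of $\be_{i_k}$, so $\Theta_k - \Theta_0$ has rank one with column span exactly $\mathrm{span}(H_{i_k,\cdot}^\top)$. Otherwise, the genericity clause of Assumption~\ref{assumption:data-generating}(b) makes $\ba$ and $\bb$ linearly independent, so the update has rank two.

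\textbf{Peeling and perfect-intervention sharpening.} I would then recurse in the spirit of the \IterativeDifferenceProjection{} routine named in the paper: the indices $k$ whose difference is rank one identify which interventions target a current source, the one-dimensional column span pins down the corresponding row of $H$ up to the scalar fixed by Assumption~\ref{assumption:data-generating}(c), and projecting all $\Theta_k$ onto the orthogonal complement of the recovered rows yields a problem of the same form on $d-1$ variables; this exposes further source interventions as rank-one until every latent node is peeled. The recovered rows of $H$ together with the peeling order furnish the partial order RQ decomposition of Definition~\ref{defn:partial-order-rq-decomposition}, from which $B_0$, and hence the ancestral structure $\barcG$, is read off by a Cholesky-type step. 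Under Assumption~\ref{assumption:perfect-interventions} the sharper constraint $(B_k)_{i_k,\cdot} = \lambda_k\be_{i_k}^\top$ makes row $i_k$ of $B_k H$ equal to $\lambda_k H_{i_k,\cdot}$, so $H_{i_k,\cdot}$ can be isolated inside the two-dimensional column span of $\Theta_k - \Theta_0$ at every intervened node, not only at sources. This identifies $H$ exactly and therefore the nonzero pattern of $B_0 = \Omega_0^{-1/2}(I - A_0)$, giving $\cG$ rather than only $\barcG$.

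\textbf{Necessity and main obstacle.} For necessity, I extend the two-variable counterexample from the introduction: if some $i^*$ is never intervened, the missing equation leaves a one-parameter family of decompositions obtained by mixing $H_{i^*,\cdot}$ into rows indexed by non-descendants of $i^*$ and compensating in $B_0$; two members of this family produce identical $\{\Theta_k\}_{k=0}^K$ but can be chosen with different edge sets at $i^*$, killing identifiability of $\cG$ under Assumption~\ref{assumption:perfect-interventions} and of $\barcG$ under Assumption~\ref{assumption:data-generating} alone. The main obstacle is justifying the peel-and-project recursion: after orthogonal projection one must verify that the reduced matrices still satisfy the genericity hypothesis, that the rank-one characterization of newly-exposed sources is preserved, and that no spurious rank drops occur at non-source interventions. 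Separately, teasing apart why generic interventions suffice only for $\barcG$ while perfect interventions yield $\cG$ requires a careful account of the algebraic degrees of freedom shared between a direct edge and a composed ancestral edge.
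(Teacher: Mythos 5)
Your architecture matches the paper's (the rank-one/rank-two dichotomy of \rref{prop:key-identity}, an iterative project-and-peel recovery, a Cholesky step under perfect interventions, and an explicit alternative solution for necessity as in \rref{prop:non-identifiability}), but two steps in the sufficiency half have genuine gaps. First, your route to $\barcG$ under \rref{assumption:data-generating} alone is broken: you propose to read off $B_0$ ``by a Cholesky-type step'' and extract the ancestral structure from it, but $B_0$ --- indeed even its support --- is \emph{not} identifiable without perfect interventions (\rref{appendix:soft-intervention-counterexample} exhibits soft interventions under which the edge set changes while the precision matrices do not). Moreover, if $B_0$ were recoverable you would get $\cG$, not merely $\barcG$, so your account is internally inconsistent. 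The paper instead recovers $\barcG$ combinatorially: when node $k$ becomes exposable (its projected difference turns one-dimensional) you only learn that $\pa(i_k)$ is contained in the already-peeled set, not which peeled nodes are actually ancestors; that requires the leave-one-out test of \rref{algm:identify-ancestors}, justified by \rref{lemma:rowspan-inclusion}(a) and (c), which checks whether dropping $\bq_i$ from the projection basis preserves one-dimensionality. Without this, your peeling yields a topological order consistent with $\barcG$ but not $\barcG$ itself.

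Second, your claim that the one-dimensional projected span ``pins down the corresponding row of $H$'' is true only for source nodes. At later stages the span is projected orthogonally to the recovered directions, so both generators $\lambda_k \bh_{i_k}$ and $\sum_{i \in \Pa(i_k)} (B_0)_{i_k,i}\bh_i$ project to the \emph{same} line, namely $\langle \bq_{i_k}\rangle$ from the partial order RQ factor of \rref{defn:partial-order-rq-decomposition}; you recover a row of $Q$, not of $H$. You correctly sense that perfect interventions let one isolate $\bh_{i_k}$ inside the two-dimensional span, but you never say how, and this is the crux of \rref{thm:main-identifiability}: after orthogonalization, $\cholesky((\hatQ^\dagger)^\top \Theta_k \hatQ^\dagger)$ equals $B_k R$ up to signed permutation, and the perfect intervention forces $B_k R - B_0 R$ to have a single nonzero row, from which the row $R_{i_k}$ of $R$ (hence $\bh_{i_k} = R_{i_k} Q$) is read off up to the scale $\lambda_k$, fixed afterwards by \rref{assumption:data-generating}(c). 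Your necessity sketch is in the right spirit and matches the paper's construction (absorb the parents of the unintervened node into the mixing map); to conclude non-identifiability of $\barcG$ rather than only $\cG$, note that the alternative solution makes the unintervened node a source, which changes the transitive closure whenever that node has at least one parent.
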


\subsection{Preliminaries}\label{sec:reduction}
We note the following basic fact, where $\bv^{\otimes 2} := \bv \bv^\top$:
\begin{fact}\label{fact:rank-one-decomposition}
    Let $B \in \bbR^{d \times d}$.
    Then $B^\top \! B = \sum_{i=1}^d (B^\top \be_i)^{\otimes 2}$.
\end{fact}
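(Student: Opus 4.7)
The plan is a one-line calculation using the resolution-of-identity $\sum_{i=1}^d \be_i \be_i^\top = I_d$. First I would rewrite the outer product in the summand as
\[
(B^\top \be_i)^{\otimes 2} \;=\; (B^\top \be_i)(B^\top \be_i)^\top \;=\; B^\top \be_i \be_i^\top B,
\]
using the definition $\bv^{\otimes 2} := \bv \bv^\top$ and the identity $(B^\top \be_i)^\top = \be_i^\top B$. Next, since $B^\top$ and $B$ do not depend on the summation index $i$, I would pull them outside the sum to obtain
\[
\sum_{i=1}^d (B^\top \be_i)^{\otimes 2} \;=\; B^\top \Bigl( \sum_{i=1}^d \be_i \be_i^\top \Bigr) B \;=\; B^\top I_d \, B \;=\; B^\top B,
\]
which is the desired identity.

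There is no real obstacle here; the result is a restatement of the fact that a Gram matrix $B^\top B$ equals the sum of rank-one outer products formed from its rows (here, $B^\top \be_i$ is the $i$-th row of $B$, viewed as a column vector). As a sanity check one can instead verify the $(j,k)$-entry directly: $(B^\top B)_{jk} = \sum_i B_{ij} B_{ik}$, while $(B^\top \be_i \be_i^\top B)_{jk} = B_{ij} B_{ik}$, so summing over $i$ matches. Either route yields the statement immediately.
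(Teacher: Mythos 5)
Your proof is correct and is essentially identical to the paper's: both insert the resolution of identity $\sum_{i=1}^d \be_i \be_i^\top = I_d$ between $B^\top$ and $B$ and distribute the sum. The extra entrywise sanity check is fine but adds nothing beyond the paper's one-line argument.
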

We give a proof in \rref{appendix:reduction}.
This fact gives the key identity that drives our identifiability results.

\begin{prop}\label{prop:key-identity}
    Consider the setup in \rref{assumption:data-generating}.
    Then, for any $k \in [K]$,
    \begin{equation*}
        \Theta_k - \Theta_0 = 
        (H^\top B_k^\top \be_{i_k})^{\otimes 2} - (H^\top B_0^\top \be_{i_k})^{\otimes 2}.
    \end{equation*}
    In particular the rank of the difference $\Theta_k - \Theta_0$ is $1$ if $i_k$ is a source node in $\mathcal{G}$, and $2$ otherwise.
\end{prop}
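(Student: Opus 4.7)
The plan is to combine~\rref{eqn:theta-k} with \rref{fact:rank-one-decomposition} and the rank-one perturbation structure of the intervention in \rref{assumption:data-generating}(b). Writing $\Theta_k - \Theta_0 = H^\top (B_k^\top B_k - B_0^\top B_0) H$ and expanding each Gram matrix as $B_k^\top B_k = \sum_{j=1}^d (B_k^\top \be_j)^{\otimes 2}$, I would then use that $B_k - B_0 = \be_{i_k} \bc_k^\top$ is supported on row $i_k$; transposing shows $B_k^\top \be_j = B_0^\top \be_j$ for every $j \neq i_k$. Every term in the difference of sums therefore cancels except $j = i_k$, and conjugating the two surviving outer products by $H$ (via the identity $H^\top v v^\top H = (H^\top v)(H^\top v)^\top$) delivers the stated formula.

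For the rank claim, set $u := H^\top B_k^\top \be_{i_k}$ and $v := H^\top B_0^\top \be_{i_k}$, so the difference equals $u u^\top - v v^\top$, a symmetric matrix whose image lies in $\mathrm{span}(u,v)$; hence its rank is at most $2$, with equality iff $u$ and $v$ are linearly independent. Since $G \in \bbR^{p \times d}$ has full column rank and $p \geq d$, its pseudoinverse $H = G^\dagger$ satisfies $H G = I_d$, so $H^\top$ is injective on $\bbR^d$; linear independence of $u, v$ therefore reduces to linear independence of $B_k^\top \be_{i_k}$ and $B_0^\top \be_{i_k}$. When $i_k$ has parents in $\cG$, the genericity clause of \rref{assumption:data-generating}(b) asserts exactly that these two vectors are not proportional, yielding rank $2$. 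When $i_k$ is a source, row $i_k$ of both $A_0$ and $A_k$ is zero, so $B_k^\top \be_{i_k}$ and $B_0^\top \be_{i_k}$ are both scalar multiples of $\be_{i_k}$; the difference then collapses to a (nonzero) scalar multiple of $(H^\top \be_{i_k})^{\otimes 2}$, which has rank $1$.

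The argument is essentially algebraic bookkeeping once the rank-one perturbation structure of the intervention is identified. The only subtlety is matching the two regimes of the rank claim to the dichotomy in \rref{assumption:data-generating}(b): in the non-source case the genericity clause is used positively, to rule out a coincidental proportionality of $B_k^\top \be_{i_k}$ and $B_0^\top \be_{i_k}$; in the source case the row structure of $B_k$ and $B_0$ forces both vectors onto the $\be_{i_k}$-axis, so the genericity clause plays no role (and indeed is not imposed there). The only input that cannot be derived purely from the rank-one form of the intervention is the injectivity of $H^\top$, which comes directly from the full-rank assumption on $G$ in \rref{assumption:data-generating}(c).
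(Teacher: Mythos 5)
Your proposal is correct and follows essentially the same route as the paper's proof: apply \rref{fact:rank-one-decomposition} to both Gram matrices, cancel all terms except $j = i_k$ using the rank-one perturbation structure, and conjugate by $H$; the rank dichotomy is then settled by the source/non-source case split exactly as in the paper. Your treatment of the rank claim is slightly more explicit than the paper's (you spell out the injectivity of $H^\top$ coming from $G$ having full column rank, which the paper leaves implicit), but this is an elaboration of the same argument rather than a different one.
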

\begin{proof}
By \rref{assumption:data-generating}, $B_k^\top \be_i = B_0^\top \be_i$ for all $i \neq i_k$.
Using \rref{fact:rank-one-decomposition}, we have 
\[
    B_k^\top B_k - B_0^\top B_0 
    = 
    (B_k^\top \be_{i_k})^{\otimes 2} 
    - 
    (B_0^\top \be_{i_k})^{\otimes 2}.
\]
Recall from \rref{eqn:theta-k} that $\Theta_k = H^\top B_k^\top B_k H$.
The result follows from left-multiplying both sides of the displayed equation by $H^\top$ and right-multiplying by $H$.
This shows that ${\rm rank}(\Theta_k - \Theta_0) \leq 2$. For a source node, both vectors $B_k^\top \be_{i_k}$ and $B_0^\top \be_{i_k}$ have just one entry non-zero and ${\rm rank}(\Theta_k - \Theta_0) = 1$. Otherwise, the vectors have more than one entry non-zero and, by the genericity condition in \rref{assumption:data-generating}(b), the difference $\Theta_k - \Theta_0$ has rank two.
\end{proof}

We can reduce a more general causal disentanglement problem to our setting, as we explain in \rref{appendix:reduction}.
First, we can count the latent dimension, since $\rank(\Theta_k) = d$ for any~$k$.
Second, we can identify which environments correspond to interventions on the same intervention target, see \rref{prop:identify-same-target}.
Finally, we can identify which environment is observational using rank constraints, see \rref{prop:identify-observational}.
Thus, we assume without loss of generality that $d$ is known, that the observational environment is known, and that each node is only intervened on in one context.

\subsection{Sufficiency}
\label{sec:theoretical-sufficiency}

We define $S(\cG)$ to be the set of permutations on $d$ letters such that $\sigma(j) > \sigma(i)$ for all edges $j \to i$.
For example, if $\cG$ is a complete graph then $S(\cG)$ contains only the identity.
If $\cG$ has no edges then $S(\cG)$ is the group of permutations on $d$ letters. 
The permutation matrix corresponding to permutation $\sigma$ is $P_\sigma \in \bbR^{d \times d}$ with $(P_\sigma)_{ij} = \kron_{\{ i = \sigma(j) \}}$. 
Our main sufficiency result is the following.

\begin{thm}
\label{thm:main_id_non_constructive}
Assume the set-up in Assumptions \ref{assumption:data-generating} and \ref{assumption:perfect-interventions} with one intervention on each latent node. Then the graph $\cG$, the intervention targets $i_k$, and the parameters are identifiable up to $S(\cG)$: given a solution $(B_0, \ldots, B_K, H)$, the set of solutions is $\{ ( P_\sigma B_0 P_\sigma\T, \ldots,
P_\sigma B_K P_\sigma\T, P_\sigma H) : \sigma 
\in S(\cG) \}$. 
\end{thm}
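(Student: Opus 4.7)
The plan is to reduce identification of $(B_0, \ldots, B_K, H)$ to identification of the rows of $H$, and then to extract those rows inductively from the subspaces $V_k := \textnormal{col}(\Theta_k - \Theta_0)$.

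Under Assumption~\ref{assumption:perfect-interventions}, substituting $\bc_k = \lambda_k \be_{i_k} - B_0^\top \be_{i_k}$ into $B_k = B_0 + \be_{i_k} \bc_k^\top$ gives $B_k^\top \be_{i_k} = \lambda_k \be_{i_k}$ with $\lambda_k > 0$, and hence $H^\top B_k^\top \be_{i_k} = \lambda_k \bh_{i_k}$, where $\bh_i \in \bbR^p$ denotes the $i$-th row of $H$. Combined with Proposition~\ref{prop:key-identity}, this yields $V_k = \textnormal{span}(\bh_{i_k}, \bu_k)$ with $\bu_k := H^\top B_0^\top \be_{i_k} \in \textnormal{span}\{\bh_j : j \in \Pa_\cG(i_k)\}$; by the sparsity condition in Assumption~\ref{assumption:data-generating}(a), the coefficient of $\bh_j$ in $\bu_k$ is nonzero for every $j \in \Pa_\cG(i_k)$.

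I would then identify the rows of $H$ by induction on $\prec_\cG$. Source nodes are exactly those intervention targets $i_k$ for which $\dim V_k = 1$; in that case $V_k = \textnormal{span}(\bh_{i_k})$, and the scaling convention in Assumption~\ref{assumption:data-generating}(c) pins down $\bh_{i_k}$ uniquely, including sign. Inductively, pick any intervention $k$ whose target $i_k$ has all its parents already identified; since $\bh_{i_k} \notin \textnormal{span}\{\bh_j : j \in \pa_\cG(i_k)\}$ by the linear independence of the rows of $H$, the quotient of the two-dimensional $V_k$ modulo $\textnormal{span}\{\bh_j : j \in \pa_\cG(i_k)\}$ is one-dimensional and determines $\bh_{i_k}$ up to a scalar, which the scaling convention then fixes. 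Once every row of $H$ is recovered, each $B_k$ is uniquely determined from $\Theta_k = H^\top B_k^\top B_k H$ by the partial-order RQ construction of Definition~\ref{defn:partial-order-rq-decomposition}: writing $H = RQ$ with $R$ triangular w.r.t.\ any topological extension of $\prec_\cG$, the matrix $B_k$ is the Cholesky-style factor of the induced Gram matrix compatible with that ordering. The DAG $\cG$ is then the sparsity pattern of $I_d - \Omega_0^{1/2} B_0$, and each target $i_k$ is the unique index at which $B_k$ and $B_0$ differ.

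The main obstacle is verifying that the induction goes through, which rests on three facts: (i) the rows $\bh_1, \ldots, \bh_d$ are linearly independent, which is immediate from $G$ having full column rank; (ii) at every stage there is some yet-to-be-processed target $i_k$ all of whose parents are already processed, which follows because $\cG$ is a DAG and each latent node is intervened on exactly once; and (iii) the coordinates $(B_0^\top \be_{i_k})_j$ are nonzero on every $j \in \Pa_\cG(i_k)$, which is exactly the defining sparsity condition in Assumption~\ref{assumption:data-generating}(a) and prevents accidental collapse of $V_k$ into a proper subspace of the parent span. Finally, the $S(\cG)$ ambiguity arises because the construction only recovers the rows of $H$ as an unordered collection equipped with the partial order $\prec_\cG$; any linear extension $\sigma \in S(\cG)$ of this partial order to a total order yields an equivalent labeled solution $(P_\sigma B_0 P_\sigma^\top, \ldots, P_\sigma B_K P_\sigma^\top, P_\sigma H)$, and conversely any other valid solution must coincide with the recovered one up to such a relabeling.
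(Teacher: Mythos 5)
Your overall blueprint (induct along $\prec_\cG$, classify source nodes by $\rank(\Theta_k-\Theta_0)=1$, and peel off one new row per step using the already-identified parent span) is the same as the paper's, and your base case and the reduction from $H$ to the $B_k$'s via a triangular Cholesky-type factor are fine. But the inductive step for a non-source target has a genuine gap. You claim that the one-dimensional quotient of $V_k=\rowspan(\Theta_k-\Theta_0)=\SPAN(\bh_{i_k},\bu_k)$ modulo $W:=\SPAN\{\bh_j : j\in\pa_\cG(i_k)\}$ determines $\bh_{i_k}$ up to a scalar. It does not: it determines only the coset $\bh_{i_k}+ (V_k\cap W)$, and $V_k\cap W$ is one-dimensional, spanned by $\bu_k-(B_0)_{i_k,i_k}\bh_{i_k}=\sum_{j\in\pa(i_k)}(B_0)_{i_k,j}\bh_j$. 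So from $V_k$ and $W$ you recover $\bh_{i_k}$ only up to scaling \emph{and} up to adding an arbitrary multiple of that parent combination; the row-normalization convention in Assumption~\ref{assumption:data-generating}(c) fixes the scale but cannot fix the additive shift. Equivalently, the decomposition $\Theta_k-\Theta_0=(\lambda_k\bh_{i_k})^{\otimes 2}-\bu_k^{\otimes 2}$ is not unique — any hyperbolic rotation of the pair $(\lambda_k\bh_{i_k},\bu_k)$ inside $V_k$ yields the same difference matrix — so knowing that such a decomposition exists does not let you read off $\bh_{i_k}$ locally.

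This is exactly the difficulty the paper's two-stage construction is built to resolve. What \emph{is} determined up to sign at each step is the projection of $V_k$ onto the orthogonal complement of the previously recovered directions, i.e.\ the row $\bq_{i_k}$ of the partial order RQ decomposition $H=RQ$ (Algorithm~\ref{algm:identify-ancestors}, Proposition~\ref{prop:orthogonal-correctness}); the actual rows of $H$ (the matrix $R$) are then recovered in a second pass from the differences of Cholesky factors $\hatC_k-\hatC_0$ of $(\hatQ^\dagger)^\top\Theta_k\hatQ^\dagger$, where global upper-triangularity plus uniqueness of the Cholesky factor, together with the perfect-intervention identity $(\hatD_k)_{\hati_k}+(\hatC_0)_{\hati_k}=\pm\lambda_k(RP_\sigma^\top)_{\sigma(i_k)}$, finally isolate each row of $R$ up to the scalar $\lambda_k$ that the normalization then fixes. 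To repair your proof you either need to import this second stage (or an equivalent global argument that uses all the $\Theta_k$ simultaneously), since no purely local argument from $V_k$ and the parent span alone can break the additive ambiguity. You should also make explicit how you certify, without knowing $\cG$, that all parents of a candidate target have already been processed (the paper does this with the dimension test of Lemma~\ref{lemma:rowspan-inclusion}), and include the converse direction showing every alternative solution arises from some $\sigma\in S(\cG)$ (Proposition~\ref{prop:sigma_solutions} plus the uniqueness of the algorithm's output), which your last sentence asserts but does not argue.
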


Theorem~\ref{thm:main_id_non_constructive} says that solutions to the causal disentanglement problem are unique up to permutations of the latent nodes that preserve the property that $j \to i$ implies $j > i$.
First, we verify that each permutation in $S(\cG)$ gives a solution. 

\begin{prop}
\label{prop:sigma_solutions}
Assume the set-up in \rref{assumption:data-generating}.
Given a solution
 $(B_0, \ldots, B_K, H)$ to \rref{eqn:theta-k} for $k \in \{0\} \cup [K]$,
the matrices 
 $( P_\sigma B_0 P_\sigma\T, \ldots,
P_\sigma B_K P_\sigma\T, P_\sigma H)$
are a valid solution whenever $\sigma 
\in S(\cG)$. 
\end{prop}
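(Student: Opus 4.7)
The plan is to verify that every defining condition of the original solution is preserved under the transformation $B_k \mapsto P_\sigma B_k P_\sigma^\top$ and $H \mapsto P_\sigma H$. Because $P_\sigma^\top P_\sigma = I$, invariance of $\Theta_k$ is immediate: the expression $(P_\sigma H)^\top (P_\sigma B_k P_\sigma^\top)^\top (P_\sigma B_k P_\sigma^\top)(P_\sigma H)$ telescopes to $H^\top B_k^\top B_k H = \Theta_k$ via \rref{eqn:theta-k}. So the bulk of the work lies in checking that the transformed tuple still satisfies the structural conditions in \rref{assumption:data-generating}.

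I would then decompose $P_\sigma B_k P_\sigma^\top$ in the form required by \rref{assumption:data-generating}(a). Writing $B_k = \Omega_k^{-1/2}(I - A_k)$, set $\Omega'_k := P_\sigma \Omega_k P_\sigma^\top$ (still diagonal with positive entries, since $P_\sigma$-conjugation of a diagonal matrix just reorders its diagonal) and $A'_k := P_\sigma A_k P_\sigma^\top$, giving $P_\sigma B_k P_\sigma^\top = (\Omega'_k)^{-1/2}(I - A'_k)$. The sparsity pattern of $A'_k$ is exactly that of $\cG$ relabeled by $\sigma$; the defining property of $S(\cG)$---namely $\sigma(j) > \sigma(i)$ whenever $j \to i$ in $\cG$---is precisely what ensures that $A'_k$ remains strictly upper triangular, so the topological-order requirement of \rref{assumption:data-generating}(a) carries over.

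For the intervention conditions in \rref{assumption:data-generating}(b), applying $P_\sigma (\cdot) P_\sigma^\top$ to $B_k = B_0 + \be_{i_k} \bc_k^\top$ and using $P_\sigma \be_{i_k} = \be_{\sigma(i_k)}$ yields $P_\sigma B_0 P_\sigma^\top + \be_{\sigma(i_k)}(P_\sigma \bc_k)^\top$, so the new target is $\sigma(i_k)$. Since $P_\sigma$ is invertible, the genericity clause on $B_k^\top \be_{i_k}$ versus $B_0^\top \be_{i_k}$ transfers verbatim (and the ``no parents'' caveat also transfers, since $i_k$ is a source of $\cG$ iff $\sigma(i_k)$ is a source of the relabeled graph). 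Finally, $P_\sigma H$ is just a row permutation of $H$, so the row-wise scaling condition of \rref{assumption:data-generating}(c) is inherited.

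I do not expect any significant obstacle: the argument is essentially a bookkeeping verification that every defining property of the original solution is equivariant under conjugation by $P_\sigma$. The one step that needs more than a line is the second, where one must translate the hypothesis $\sigma \in S(\cG)$ into the statement that $A'_k$ respects the ordering used in \rref{assumption:data-generating}(a); everything else reduces to the identities $P_\sigma^\top P_\sigma = I$ and $P_\sigma \be_j = \be_{\sigma(j)}$.
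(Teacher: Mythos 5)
Your proposal is correct and follows essentially the same route as the paper's proof: verify that $\Theta_k$ is invariant because $P_\sigma^\top P_\sigma = I$, check that conjugation by $P_\sigma$ preserves upper-triangularity precisely because $\sigma \in S(\cG)$, and note that Assumptions \ref{assumption:data-generating}(b) and (c) transfer with the new target $\sigma(i_k)$ and permuted rows of $H$. The only difference is presentational: you carry out the triangularity check at the level of $A_k$ and $\Omega_k$ and spell out the rank-one update $\be_{\sigma(i_k)}(P_\sigma \bc_k)^\top$, where the paper verifies the same facts directly on $B_k$ and asserts the transfer of (b) without detail.
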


\begin{proof}
Let $\{ B_k \}_{k=0}^K$ and $H$ satisfy \rref{assumption:data-generating} and \rref{eqn:theta-k}.
Define $\Bsigma_k = P_\sigma B_k P_\sigma^\top$ and $\Hsigma = P_\sigma H$ for $\sigma \in S(\cG)$.
Then $\Theta_k = {\Hsigma}^\top \! {\Bsigma_k}^\top \! \Bsigma_k \Hsigma$.
The matrices $\Bsigma_k$ are upper triangular, as follows.
For all $i, j \in [p]$, we have $(\Bsigma_k)_{\sigma(i), \sigma(j)} = (B_k)_{ij}$. 
%
Hence $\Bsigma_k$ is upper triangular when $(B_k)_{ij} = 0$ for all $i, j \in [p]$ with $\sigma(i) > \sigma(j)$. 
This holds since $\sigma \in S(\cG)$.
%
%
Moreover, these matrices also satisfy \rref{assumption:data-generating}(b) with the intervention target $\sigma(i_k)$ in context $k$.
Finally, $\Hsigma$ satisfies \rref{assumption:data-generating}(c), since we just permute the rows of $H$.
\end{proof}

\begin{algorithm}[t]
	\caption{\IdentifyAncestors}\label{algm:identify-ancestors}
	\begin{algorithmic}[1]
		\STATE \textbf{Input:} $\Theta_k$, $\Theta_0$, $\{ \hatbq_i \}_{i \in \cI}$
		\STATE \textbf{Output:} Vector $\hatbq_k$, ancestor set $\cA$
		\STATE Let $\cA = \cI$
		\FOR{$i \in \cI$}
		    \STATE Let $W_{\neg i} = \langle \hatbq_i: j \in \cI \setminus \{ i \} \rangle$
		    \STATE Let $V_{\neg i} = \proj_{W_i^\perp} \rowspan(\Theta_k - \Theta_0)$
    		\STATE If $\dim(V_{\neg i}) = 1$, let $\cA = \cA \setminus \{ i \}$ \label{line:dimension-test-ancestors}
		\ENDFOR
		\STATE Let $W = \langle \hatbq_a : a \in \cA \rangle$
		\STATE Let $V = \proj_{W^\perp} \rowspan(\Theta_k - \Theta_0)$
		\STATE Take $\hatbq_k$ with first nonzero entry positive and $\| \hatbq_k \|_2 = 1$, such that $\langle \hatbq_k \rangle = V$
		\STATE \textbf{return} $\hatbq_k$, $\cA$
	\end{algorithmic}
\end{algorithm}

\begin{algorithm}[t]
	\caption{\IdentifyPartialOrder}\label{algm:identify-partial-order}
	\begin{algorithmic}[1]
		\STATE \textbf{Input:} Precision matrices $(\Theta_0, \Theta_1, \ldots, \Theta_K)$, rank $d$
		\STATE \textbf{Output:} Factor $\hatQ$, partial order $\prec$
		\STATE Let $\cI_0 = \{ \}$, $\hatQ = \bzero_{d \times d}$
		\FOR{$t = 1, \ldots K$}
		    \STATE Let $W_t = \langle \hatbq_i : i \in \cI_{t-1} \rangle$
		    \STATE Let $V_k = \proj_{W_t^\perp} \rowspan(\Theta_k - \Theta_0)$ for $k \not\in \cI_{t-1}$
    		\STATE Pick $k$ such that $\dim(V_k) = 1$\label{line:dimension-test-partial-order}
    		\STATE Let $\hatbq_k, \cA = \IdentifyAncestors(\Theta_k, \Theta_0, \{ \hatbq_i \}_{i \in \cI_{t-1}})$\label{line:pick-next-row-q}
    		\STATE Add $a' \succ k$ for any $a' \succeq a$, $a \in \cA$
    		\label{line:add-ancestors}
    		\STATE Let $\cI_t = \cI_{t-1} \cup \{ k \}$, $\hatQ_t = [\hatbq_k; \hatQ_{t-1}]$
		\ENDFOR
		\STATE \textbf{return} $\hatQ$, $\prec$
	\end{algorithmic}
\end{algorithm}

We give a constructive proof of \rref{thm:main_id_non_constructive} via an algorithm to recover $H$ and $\{ B_k \}_{k=0}^K$ from $\{ \Theta_k \}_{k=0}^K$\footnote{
We only use the second moment of $X$.
We do not use the first moment since we assume $\bbE[\varepsilon] = 0$, and we do not use higher moments since in the worst case (Gaussian noise), they contain no additional information.
}.
The computational complexity of the algorithm is given in \rref{appendix:computational-complexity}.
The bulk of the algorithm is devoted to recovering $H$.
First, we recover the partial order $\prec_\cG$
(i.e., the DAG $\barcG$), together with the matrix $Q$ from a partial order RQ decomposition of $H$, up to signs and permutations of rows, in \rref{algm:identify-partial-order}.
The subroutine \rref{algm:identify-ancestors} recovers the ancestors of a node and its corresponding row of $Q$.
Then we recover $R$ in \rref{algm:iterative-difference-projection}. 
Having recovered $H$, the matrices $\{ B_k \}_{k=0}^K$ are found via the Cholesky decomposition.

%
%

We show that \rref{algm:identify-partial-order} returns $Q$ from the partial order RQ decomposition of $H$, up to a permutation $\sigma \in S(\cG)$ and a matrix in $\Sig_d$, the $d \times d$ diagonal matrices with diagonal entries $\pm 1$.

\begin{prop}
\label{prop:orthogonal-correctness}
Assume the setup in \rref{assumption:data-generating}, and that every latent node is intervened on; i.e, $\{ i_k \}_{k=1}^K = [d]$.
Let $(\hatQ, \prec)$ be the output of \rref{algm:identify-partial-order}.
Then $\prec$ is the partial order $\prec_\cG$.
Moreover, let $H = R Q$ be the partial order RQ decomposition of $H$ for the partial order $\prec_\cG$.
Then $\hatQ = S P_\sigma Q$ for some $\sigma \in S(\cG)$ and $S \in \Sig_d$.
\end{prop}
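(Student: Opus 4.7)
The plan is to prove the claim by induction on the iteration $t$ of \rref{algm:identify-partial-order}, maintaining two invariants: (i) the set $\cN_{t-1} := \{i_k : k \in \cI_{t-1}\}$ of already-identified intervention targets is ancestral in $\cG$ (closed under taking ancestors), and (ii) $\hatbq_k = \pm \bq_{i_k}$ for each $k \in \cI_{t-1}$, where $\bq_i$ is the $i$-th row of $Q$ in the partial order RQ decomposition $H = RQ$. Under these invariants, $W_t = \langle \bq_\ell : \ell \in \cN_{t-1}\rangle$ and the base case $t=1$ is vacuous.

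The heart of the argument is analyzing $V_k = \proj_{W_t^\perp}\rowspan(\Theta_k - \Theta_0)$ for $k \notin \cI_{t-1}$. By \rref{prop:key-identity}, this rowspan is spanned by $H^\top B_k^\top \be_{i_k}$ and $H^\top B_0^\top \be_{i_k}$. Since $B_0, B_k$ are upper triangular, each such vector expands as $\sum_{j \in \Pa(i_k)} (B_\cdot)_{i_k, j} H_{j,:}^\top$, and each $H_{j,:}^\top = \sum_{\ell \in \An(j)} R_{j, \ell} \bq_\ell^\top$ by the partial order RQ decomposition. Collecting terms, the rowspan lies in $\langle \bq_\ell : \ell \in \An(i_k)\rangle$, with the coefficient of $\bq_{i_k}$ in each of the two vectors equal to $(B_\cdot)_{i_k, i_k} R_{i_k, i_k} > 0$. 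Hence $\dim V_k = 1$ precisely when $\an(i_k) \subseteq \cN_{t-1}$, in which case $V_k = \langle \bq_{i_k}\rangle$; otherwise some ancestor component survives, and the genericity hypothesis in \rref{assumption:data-generating}(b) — which forbids $B_k^\top \be_{i_k}$ from being a scalar multiple of $B_0^\top \be_{i_k}$ when $i_k$ has parents — forces $\dim V_k = 2$. Since $\cN_{t-1}$ is ancestral, a source of the induced subgraph on unidentified nodes always exists, so the algorithm can always pick a valid $k$; such a selection extends $\cN_{t-1}$ ancestrally, and the sign convention returns $\hatbq_k = \pm \bq_{i_k}$, restoring both invariants.

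The same expansion verifies correctness of \rref{algm:identify-ancestors}: if $i_i \in \an(i_k)$, then projecting out $\{\hatbq_j : j \in \cI \setminus \{i\}\}$ leaves both $\bq_{i_i}$ and $\bq_{i_k}$ in the image, giving $\dim V_{\neg i} = 2$, while $i_i \notin \an(i_k)$ yields a zero $\bq_{i_i}$-coefficient and $\dim V_{\neg i} = 1$. Thus $\cA$ recovers exactly the contexts in $\cI_{t-1}$ whose targets are ancestors of $i_k$, and line 9 of \rref{algm:identify-partial-order} records the corresponding relations in $\prec$.

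After $d$ iterations every node is identified, and $\prec$ coincides with $\prec_\cG$ under the bijection $k \mapsto i_k$. Assembling the rows of $\hatQ$, we get $\hatQ = S P_\sigma Q$ where $S \in \Sig_d$ absorbs the row signs and $\sigma \in S(\cG)$ because targets were added in a topological order of $\cG$ (parents are necessarily added before children, so the permutation that sends $i_k$ to its position in $\hatQ$ sends parents to larger indices, as required by $S(\cG)$). The main obstacle is the dimension analysis in the second paragraph: one must track the $\bq_\ell$-expansion carefully and invoke genericity exactly once to rule out the accidental rank drops that would let a non-source masquerade as a source; the remainder is essentially bookkeeping of signs and of the iteration order.
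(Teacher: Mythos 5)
Your argument is correct and follows essentially the same route as the paper's: an induction over the iterations in which the set of already-identified targets is ancestrally closed, the rank-two identity $\rowspan(\Theta_k-\Theta_0)=\langle H^\top B_k^\top\be_{i_k},\, H^\top B_0^\top\be_{i_k}\rangle$ from \rref{prop:key-identity}, and the expansion $\bh_j\in\langle\bq_\ell:\ell\in\An(j)\rangle$ coming from the partial order RQ decomposition --- the paper simply packages this expansion into a standalone statement (\rref{lemma:rowspan-inclusion}) rather than inlining it. The one slight overstatement is your claim that $\cA$ is \emph{exactly} the set of identified ancestors of $i_k$: for non-parent ancestors the relevant $\bq_{i_i}$-coefficient can cancel, so one can only guarantee $\pa(i_k)\subseteq\cA\subseteq\an(i_k)$ (as the paper does), but the transitive-closure step in \rref{line:add-ancestors} makes this immaterial for the conclusion.
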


The following lemma relates the partial order of $\cG$ to the linear spaces $\rowspan(\Theta_k - \Theta_0)$. 

\begin{lemma}\label{lemma:rowspan-inclusion}
Assume the setup in \rref{assumption:data-generating}.
Let $H = R Q$ be a partial order RQ decomposition of $H$.
Let $i_k$ be the intervention target of context $k$ and let $\cI \subseteq [d]$.
Then
\vspace{-0.3cm}
\begin{enumerate}
    \item[(a)] $\rowspan(\Theta_k - \Theta_0) 
    \subseteq 
    \langle \bh_i : i \in \cI \rangle$
    if and only if $\Pa(i_k) \subseteq \cI$,
    
    \item[(b)] $\rowspan(\Theta_k - \Theta_0) \subseteq \langle \bq_i : i \in \An(i_k) \rangle$, and
    
    \item[(c)] $\rowspan(\Theta_k - \Theta_0) \not\subseteq \langle \bq_i : i \in \cI \rangle$ if $\Pa(i_k) \not\subseteq \cI$.
\end{enumerate}
\end{lemma}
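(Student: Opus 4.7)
All three parts rest on \rref{prop:key-identity}, which gives
\[ \Theta_k - \Theta_0 \;=\; \bv \bv\T - \bu \bu\T, \qquad \bu := H\T B_0\T \be_{i_k},\ \ \bv := H\T B_k\T \be_{i_k}. \]
Thus $\rowspan(\Theta_k - \Theta_0) \subseteq \SPAN\{\bu\T, \bv\T\}$, with equality whenever $\bu, \bv$ are linearly independent, which by \rref{assumption:data-generating}(b) holds unless $i_k$ is a source. Because $(B_0)_{i_k, j}$ and $(B_k)_{i_k, j}$ are nonzero exactly for $j \in \Pa(i_k)$, expanding $B_0\T \be_{i_k}$ and $B_k\T \be_{i_k}$ gives
\[ \bu\T = \sum_{j \in \Pa(i_k)} (B_0)_{i_k, j}\, \bh_j, \qquad \bv\T = \sum_{j \in \Pa(i_k)} (B_k)_{i_k, j}\, \bh_j, \]
where $\bh_j$ denotes the $j$-th row of $H$; these two identities drive everything.

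For \textbf{(a)}, the ``if'' direction is immediate. For ``only if'', pick $j^* \in \Pa(i_k) \setminus \cI$; linear independence of the rows of $H$ together with $(B_0)_{i_k, j^*} \neq 0$ imply $\bu\T \not\in \langle \bh_i : i \in \cI \rangle$, and $\bu\T$ lies in $\rowspan(\Theta_k - \Theta_0)$ (as a generator in both the source and non-source case).

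For \textbf{(b)}, I substitute the partial order RQ decomposition $\bh_j = \sum_{\ell \in \An(j)} R_{j, \ell}\, \bq_\ell$ into the two expansions: since $j \in \Pa(i_k)$ implies $\An(j) \subseteq \An(i_k)$, each $\bh_j$ appearing in $\bu\T, \bv\T$ lies in $\langle \bq_\ell : \ell \in \An(i_k) \rangle$, and the containment follows.

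Part \textbf{(c)} is the main obstacle. Pick $j^* \in \Pa(i_k) \setminus \cI$ and aim to produce a rowspan vector with nonzero $\bq_{j^*}$-coefficient. Combining the two expansions with the partial order RQ decomposition, this coefficient equals $\sum_{j \in \Pa(i_k),\, j \preceq j^*} (B_0)_{i_k, j}\, R_{j, j^*}$ (and analogously for $\bv\T$). When $j^* = i_k$, no parent of $i_k$ is $\preceq i_k$ in the partial order, so only $j = i_k$ contributes and yields the strictly positive value $(B_0)_{i_k, i_k} R_{i_k, i_k}$, placing $\bu\T$ outside $\langle \bq_i : i \in \cI \rangle$. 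When $j^* \in \pa(i_k)$, I choose $j^*$ to be $\prec$-minimal in $\Pa(i_k) \setminus \cI$, so that any other $j \in \Pa(i_k)$ with $j \preceq j^*$ lies in $\cI$; then simultaneous vanishing of the $\bq_{j^*}$-coefficients of $\bu\T$ and $\bv\T$ becomes a homogeneous linear system in the column-$j^*$ entries of $R$ whose coefficient matrix has the two linearly independent rows $((B_0)_{i_k, j})_j$ and $((B_k)_{i_k, j})_j$ from \rref{assumption:data-generating}(b), and combined with $R_{j^*, j^*} > 0$ this yields the required contradiction. The main technical difficulty is controlling the off-diagonal contributions from partial-order-descendants of $j^*$ inside $\cI$; I expect to handle these by leveraging both equations simultaneously and reducing to the invertible principal submatrix of $R$ on $\{j \in \Pa(i_k): j \preceq j^*\}$.
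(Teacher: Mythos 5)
Your parts (a) and (b) are correct and essentially identical to the paper's proof (same use of \rref{prop:key-identity} and the expansion of $B_0\T\be_{i_k}$, $B_k\T\be_{i_k}$ over $\Pa(i_k)$). The divergence is in (c): the paper argues by span containments (deduce $\bh_j \in \langle \bq_i : i \in \cI\rangle + \langle \bh_i : i \in \Pa(i_k)\setminus\{j\}\rangle$, strip off the $\bq_i$ indexed by descendants of $j$, and contradict the full rank of $H$), while you work coordinate-wise in the $\bq$-basis. Your first case ($j^* = i_k$, giving the coordinate $(B_0)_{i_k,i_k}R_{i_k,i_k} > 0$) is complete and correct. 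Your second case is a genuine gap, and it cannot be closed: writing $T = \{j \in \Pa(i_k) : j \preceq j^*\}$, the two homogeneous equations $c_0 = c_k = 0$ in the unknowns $(R_{j,j^*})_{j \in T}$ are underdetermined as soon as $|T| \geq 3$, and admissible $H$ realize solutions with $R_{j^*,j^*} > 0$. In fact statement (c) is \emph{false} as written. Take $d = p = 3$, $\cG$ the complete DAG ($3 \to 2 \to 1$, $3 \to 1$), a perfect intervention on $i_1 = 1$ with $\lambda_1 = 1$, and
\begin{equation*}
B_0 = \begin{bmatrix} 1 & 1 & 1 \\ 0 & 1 & 1 \\ 0 & 0 & 1 \end{bmatrix},
\qquad
B_1 = \begin{bmatrix} 1 & 0 & 0 \\ 0 & 1 & 1 \\ 0 & 0 & 1 \end{bmatrix},
\qquad
H = \begin{bmatrix} 1 & 0 & 0 \\ 0 & 1 & -1 \\ 0 & 0 & 1 \end{bmatrix},
\end{equation*}
which satisfies \rref{assumption:data-generating} and \rref{assumption:perfect-interventions} and whose partial order RQ decomposition is $R = H$, $Q = I_3$, so $\bq_i = \be_i\T$. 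Then $B_0 H$ has rows $(1,1,0),(0,1,0),(0,0,1)$ and $B_1 H = I_3$, whence $\rowspan(\Theta_1 - \Theta_0) = \langle \be_1, \be_2\rangle = \langle \bq_1, \bq_2\rangle$. With $\cI = \{1,2\}$ the containment holds even though $3 \in \Pa(1) \setminus \cI$; this is exactly your second case with $j^* = 3$, $R_{13} = 0$, $R_{23} = -1$, $R_{33} = 1$.

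You should not feel alone here: the paper's own proof of (c) breaks at the corresponding step — the inference ``thus $\bh_j \in \langle \bq_i : i \in \cI_{nd}\rangle + \langle \bh_i : i \in \Pa(i_k)\setminus\{j\}\rangle$'' does not follow from $\bq_i \perp \bh_j$ for $i \in \cI_d$, because the vectors $\bh_i$, $i \in \Pa(i_k)\setminus\{j\}$, are not orthogonal to those $\bq_i$. The statement is salvaged, and your argument becomes a complete proof, by adding the hypothesis that $\cI$ is closed under taking ancestors, i.e.\ $\An(\cI) = \cI$ — which holds for every set $\cI_{t-1}$ arising in \rref{algm:identify-partial-order}, the only place (c) is invoked. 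Under that hypothesis, $\Pa(i_k) \not\subseteq \cI$ forces $i_k \notin \cI$, and your first case alone finishes the proof. So: keep Case 1, add the ancestral-closure hypothesis, and delete Case 2.
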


\begin{proof}
Let $\bh_i$ denote the $i$th row of $H$, and let $\boldalpha_{j,i_k} := \sum_{i \in \Pa(i_k)} (B_j)_{i_k,i} \bh_i$.
By \rref{prop:key-identity}, we have
\begin{equation}\label{eqn:rowspan-difference}
    \rowspan(\Theta_k - \Theta_0) = \left\langle \boldalpha_{k,i_k}, \boldalpha_{0,i_k} \right\rangle.
\end{equation}

\rref{eqn:rowspan-difference} shows that $\rowspan(\Theta_k - \Theta_0) \subset \langle \bh_i : i \in \Pa(i_k) \rangle$. This linear space is contained in $\langle \bh_i : i \in \cI \rangle$ whenever $\Pa(i_k) \subseteq \cI$. 
Conversely, assume there exists $j \in \Pa(i_k)$ with $j \notin \cI$. Then, containment of $\rowspan(\Theta_k - \Theta_0)$ in $\langle \bh_i : i \in \cI \rangle$ cannot hold: containment implies $(B_0)_{i_k, j} \bh_j \in \langle \bh_i : i \in [p] \backslash \{ j \} \rangle$, a contradiction since $H$ is full row rank and $(B_0)_{i_k, j} \neq 0$.
Hence (a) holds.

By definition of the partial order RQ decomposition, we have $\bh_i \in \langle \bq_j \mid j \in \An(i) \rangle$.
Thus, by transitivity of the ancestorship relation, (b) holds.
Conversely, assume there exists $j \in \Pa(i_k)$ with $j \notin \cI$.
Then $\rowspan(\Theta_k - \Theta_0) 
    \subseteq 
    \langle \bq_i : i \in \cI \rangle$ implies that 
    $$ \bh_j 
    \in 
    \langle \bq_i : i \in \cI \rangle + \langle \bh_i : i \in \Pa(i_k) \setminus \{ j \} \rangle,$$
since $(B_0)_{i_k,j} \neq 0$.
We partition $\cI$ into the descendants and non-descendants of $j$: let $\cI_d := \{ i \in \cI : j \in \an(i) \}$ and let $\cI_{nd} := \{ i \in \cI : j \notin \an(i) \}$.
By definition of the partial order RQ decomposition, we have $\bq_i \perp \bh_j$ whenever $j \in \an(i)$.
Thus $\bh_j  
    \in
    \langle \bq_i : i \in \cI_{nd} \rangle + \langle \bh_i : i \in \Pa(i_k) \setminus \{ j \} \rangle$.
Inverting the partial order RQ decomposition gives $\bq_i \in \langle \bh_{i'} : i' \in \An(i) \rangle$.
Hence
\begin{align*}
    \bh_j  
    &\in
    \langle \bh_i : i \in \An(\cI_{nd}) \cup ( \Pa(i_k) \setminus \{ j \}) \rangle,
\end{align*}
a contradiction, since $j \notin \An(\cI_{nd})$ and $H$ is full rank.
\end{proof}

\begin{proof}[Proof of \rref{prop:orthogonal-correctness}]
Assume that $\hatQ_{t-1}$ is the last $t-1$ rows of $S P_\sigma Q$, for some $\sigma \in S(\cG)$ and $S \in \Sig_d$.
%
%
Then $W_{t-1}= \langle \hatbq_i : i \in \cI_{t-1} \rangle.$
At step $t$, we pick $k$ such that $V_k = \proj_{W_t^\perp} \rowspan(\Theta_k - \Theta_0)$ has dimension one.
\rref{lemma:rowspan-inclusion} implies that such $k$ are those with $\pa(i_k) \subseteq \cI_{t-1}$.
\rref{algm:identify-ancestors} returns a set $\cA$ with $\pa(i_k) \subseteq \cA \subseteq \an_\cG(i_k)$. 
Thus \rref{line:add-ancestors} adds the relation $k \prec a'$ if and only if 
$a' \in \an_\cG(i_k)$. Hence $\prec$ is the partial order $\prec_\cG$.
\rref{line:pick-next-row-q} picks $\hatbq_k$ orthogonal to $\{ \bq_i : i \succ_\cG i_k \}$, such that $\rowspan(\hatQ_t)$ contains $\rowspan(\Theta_k - \Theta_0)$.
By \rref{lemma:rowspan-inclusion} and \rref{defn:partial-order-rq-decomposition}, this is $\pm \bq_{i_k}$.
Thus, $\hatQ_t$ is equal to the last $t$ rows of $S' P_{\sigma'} Q$, for some $\sigma' \in S(\cG)$ and $S' \in \Sig_d$. 
Repeating for $t = 1, \ldots, K$ gives the result.
\end{proof}

\begin{algorithm}[t]
	\caption{\IterativeDifferenceProjection}\label{algm:iterative-difference-projection}
	\begin{algorithmic}[1]
		\STATE \textbf{Input:} Precision matrices $(\Theta_0, \Theta_1, \ldots, \Theta_K)$
		\STATE \textbf{Output:} $\hatH$, $(\hatB_0, \hatB_1, \ldots, \hatB_K )$
		\STATE Let $d = \rank(\Theta_0)$
		\STATE Let $\hatQ, \prec = \IdentifyPartialOrder( (\Theta_0, \Theta_1, \ldots, \Theta_K), d)$
		\STATE Let $\hatC_k = \cholesky((\hatQ^{\dagger})^\top \Theta_k \hatQ^\dagger)$ for $k = 0, \ldots, K$
		\STATE Let $\hatR = I_d$
		\FOR{$k = 1, \ldots, K$}
    		\STATE Let $\hatD_k = \hatC_k - \hatC_0$
    		\STATE Let $\hati_k$ be index of the only nonzero row of $\hatD_k$ \label{line:pick-nonzero-row}
    		\STATE Let $\hatR_{\hati_k} = (\hatD_k)_{\hati_k} + (\hatC_0)_{\hati_k}$
		\ENDFOR
		\STATE Let $\hatH' = \hatR \hatQ$
		\STATE Let $\hatH = \hLambda \hatH'$, for $\hLambda$ diagonal such that $\hatH$ satisfies the conditions on $H$ in \rref{assumption:data-generating}(c)
		\label{line:scale-H}
		\STATE Let $\hatB_0 = \cholesky( (\hatH^\dagger)^\top \Theta_0 \hatH^\dagger)$ 
		\STATE Let $\hatB_k = \hatB_0 + \be_{\hati_k} \left( |\hLambda_{\hati_k,\hati_k}| \be_{\hati_k} - \hatB_0^\top \be_{\hati_k} \right)^\top$ for $k = 1, \ldots, K$
		\label{line:compute-bk}
		\STATE \textbf{return} $\hatH$, $( \hatB_0, \hatB_1, \ldots, \hatB_K )$
	\end{algorithmic}
\end{algorithm}

We prove \rref{thm:main_id_non_constructive} by proving the following result, which is its constructive analogue.

\begin{thm}\label{thm:main-identifiability}
Assume the setup in Assumptions \ref{assumption:data-generating} and \ref{assumption:perfect-interventions}, and that every latent node is intervened; i.e., $\{ i_k \}_{k=1}^K = [d]$.
Let $\hatH$ and $\{ \hatB_k \}_{k=0}^K$ be the output of \rref{algm:iterative-difference-projection}.
Then $\hatH = P_\sigma H$ and $\hatB_k = P_\sigma B_k P_\sigma^\top$ for all $k$, for some $\sigma \in S(\cG)$.
\end{thm}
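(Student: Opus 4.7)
My plan is to trace how \rref{algm:iterative-difference-projection} handles the sign and permutation ambiguity from \rref{prop:orthogonal-correctness}, combining uniqueness of the Cholesky factorization with the rank-one structure of perfect interventions. By \rref{prop:orthogonal-correctness}, $\hatQ = S P_\sigma Q$ for some $\sigma \in S(\cG)$ and $S \in \Sig_d$, where $H = R Q$ is the partial order RQ decomposition of $H$. Writing $\Theta_k = (B_k R Q)^\top (B_k R Q)$ and using $Q \hatQ^\dagger = P_\sigma^\top S$ (which follows from $Q Q^\dagger = I_d$), one computes
\[
    (\hatQ^\dagger)^\top \Theta_k \hatQ^\dagger = (B_k^\sigma R^\sigma S)^\top (B_k^\sigma R^\sigma S),
\]
where $B_k^\sigma := P_\sigma B_k P_\sigma^\top$ and $R^\sigma := P_\sigma R P_\sigma^\top$. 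The key structural fact is that $\sigma \in S(\cG)$ is itself a valid topological ordering of $\cG$, so both $B_k^\sigma$ and $R^\sigma$ remain upper triangular with positive diagonals; hence $S B_k^\sigma R^\sigma S$ is upper triangular with positive diagonal (the two $S$-factors cancel on the diagonal because $S_{ii}^2 = 1$), and Cholesky uniqueness pins down $\hatC_k = S B_k^\sigma R^\sigma S$.

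Next, \rref{assumption:perfect-interventions} lets me identify the intervention targets and recover $\hatR$ row by row. Because the $i_k$-th row of $B_k$ equals $\lambda_k \be_{i_k}^\top$ under a perfect intervention, $B_k^\sigma - B_0^\sigma$ has a single non-zero row at position $\sigma(i_k)$, and right-multiplying by non-singular $R^\sigma$ together with sandwiching by $S$ preserves this rank-one structure; hence $\hatD_k = \hatC_k - \hatC_0$ has its unique non-zero row at $\hati_k = \sigma(i_k)$, which is what \rref{line:pick-nonzero-row} detects. The assignment $\hatR_{\hati_k} = (\hatD_k)_{\hati_k} + (\hatC_0)_{\hati_k} = (\hatC_k)_{\hati_k}$ then equals $S_{\hati_k\hati_k}\lambda_k$ times the $\hati_k$-th row of $R^\sigma S$, and using $R^\sigma S \hatQ = R^\sigma P_\sigma Q = P_\sigma H$, the $\hati_k$-th row of $\hatH' = \hatR \hatQ$ is a signed scalar multiple of the $i_k$-th row of $H$. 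Because every latent node is intervened, every row of $\hatH'$ is so determined, and the normalization in \rref{line:scale-H} rescales each row to satisfy the conditions of \rref{assumption:data-generating}(c), absorbing both the scalar and the sign and yielding $\hatH = P_\sigma H$.

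Finally, from $\hatH = P_\sigma H$ and orthogonality of $P_\sigma$ one deduces $\hatH^\dagger = H^\dagger P_\sigma^\top$, so $(\hatH^\dagger)^\top \Theta_0 \hatH^\dagger = (B_0^\sigma)^\top B_0^\sigma$ and Cholesky uniqueness gives $\hatB_0 = B_0^\sigma$; \rref{line:compute-bk} then rebuilds $\hatB_k$ from $\hatB_0$ by overwriting its $\hati_k$-th row according to \rref{assumption:perfect-interventions}, yielding $\hatB_k = B_k^\sigma = P_\sigma B_k P_\sigma^\top$. The main technical obstacle is the careful bookkeeping of $S$ and $P_\sigma$ across the Cholesky factorizations and the normalization step: one must verify that the two occurrences of $S$ sandwiching $B_k^\sigma R^\sigma$ cancel on the diagonal so that Cholesky uniqueness pins $\hatC_k$ down exactly rather than only up to a sign, and that the per-row rescaling in \rref{line:scale-H} removes precisely the remaining scale/sign freedom, leaving the permutation $\sigma \in S(\cG)$ as the only indeterminacy.
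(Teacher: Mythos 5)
Your proof is correct and follows essentially the same route as the paper's: invoke \rref{prop:orthogonal-correctness} to get $\hatQ = S P_\sigma Q$, use Cholesky uniqueness to identify $\hatC_k = S\,(P_\sigma B_k P_\sigma^\top)(P_\sigma R P_\sigma^\top)\,S$, locate the single nonzero row of $\hatD_k$ at $\sigma(i_k)$ via the perfect-intervention structure, absorb the residual sign/scale into the row normalization of \rref{assumption:data-generating}(c), and finish with a second Cholesky step for $\hatB_0$ and the reconstruction of $\hatB_k$. The bookkeeping of $S$ and $P_\sigma$ that you flag as the main technical obstacle is handled exactly as in the paper, so no gap remains.
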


\begin{proof}
Let $H = RQ$.
Then
$\Theta_k = Q\T \! R\T \! B_k\T \! B_k R Q$, by \rref{eqn:theta-k}, and $\hatQ = S P_\sigma Q$ for some $\sigma \in S(\cG)$ and $S \in \Sig_d$, by \rref{prop:orthogonal-correctness}. Hence $(\hatQ^\dagger)^\top \Theta_k \hatQ^\dagger$ equals 
{
\setlength{\abovedisplayskip}{5pt}
\setlength{\belowdisplayskip}{5pt}
\begin{multline*}\label{eqn:orthogonalized}
    S \left(P_\sigma R^\top \! P_\sigma^\top \right)
    \left(P_\sigma B_k^\top P_\sigma^\top \right)
    \left( P_\sigma B_k P_\sigma^\top \right)
    \left( P_\sigma R P_\sigma^\top \right) S.
\end{multline*}
}
Let $\Csigma_k = S (P_\sigma B_k P_\sigma^\top) (P_\sigma R P_\sigma^\top) S$.
The matrix $\Csigma_k$ is upper triangular, since it is a product of four upper triangular matrices, by the definition of $S(\cG)$ and of the partial order RQ decomposition, where we use that $i \prec_\cG j$ implies $i < j$.
Moreover, $\Csigma_k$ has positive diagonal, since the matrices $R$ and $B_k$ have positive diagonal, by  \rref{defn:partial-order-rq-decomposition} and \rref{assumption:data-generating}(a) respectively.
Hence $\hatC_k := \Csigma_k$ is the Cholesky factor of $(\hatQ^\dagger)^\top \Theta_k \hatQ^\dagger$.
The differences $\hatD_k := \hatC_k - \hatC_0$ equal $S P_\sigma (B_k - B_0) R P_\sigma^\top S = S P_\sigma \be_{i_k} \bc_k^\top R P_\sigma^\top S$, by \rref{assumption:data-generating}(b). 
The intervention target $\sigma(i_k)$ is the only nonzero row of $\hatD_k$, i.e., $\hati_k = \sigma(i_k)$.
Observe that $(\hatD_k)_{\hati_k} + (\hatC_0)_{\hati_k} = S_{\sigma(i_k),\sigma(i_k)} \lambda_k (R P_\sigma^\top)_{\sigma(i_k)}$.
%
%
Thus, we have recovered $\hatR = \hLambda P_\sigma R P_\sigma^\top$ for $\hLambda$ diagonal such that $\hLambda_{\hati_k,\hati_k} = \pm \lambda_k$.
This gives $\hatH' = \hLambda P_\sigma R P_\sigma^\top P_\sigma Q = \hLambda P_\sigma H$.
The scaling in \rref{line:scale-H} recovers $\hatH = P_\sigma H$ and $\hLambda$.
We have $(\hatH^\dagger)^\top \Theta_0 \hatH^\dagger = P_\sigma B_0^\top P_\sigma^\top P_\sigma B_0 P_\sigma^\top$, where $P_\sigma B_0 P_\sigma^\top$ is upper triangular, and thus we recover $\hatB_0 = P_\sigma B_0 P_\sigma^\top$ from the Cholesky decomposition.
Finally, since $|\hLambda_{\hati_k,\hati_k}| = \lambda_k$, \rref{line:compute-bk} gives us $\hatB_k = P_\sigma B_k P_\sigma^T$.
\end{proof}

\rref{thm:main-identifiability} requires \rref{assumption:perfect-interventions}, see \rref{appendix:soft-intervention-counterexample}.
In \rref{appendix:weight-variant-comparison}, we compare our identifiability condition to that of \citet{liu2022weight}.
We show that \citet{liu2022weight} requires that the latent graph has fewer than $d$ edges.
In contrast, our condition imposes no constraints on the latent graph.

\begin{figure*}[t]
    \centering
    \begin{subfigure}{.3\textwidth}
        \includegraphics[width=\textwidth]{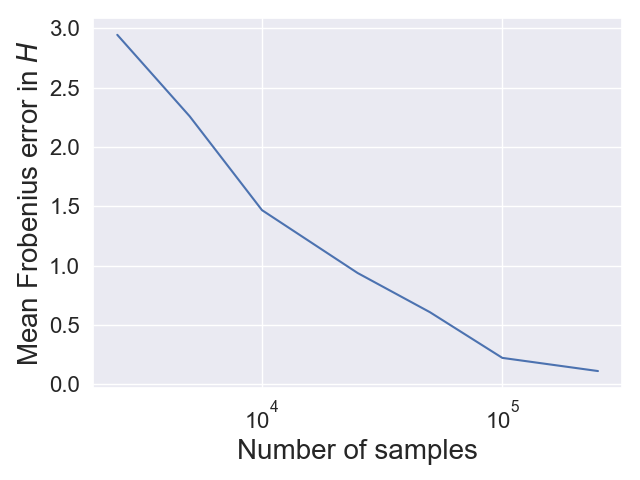}
        \caption{Error in estimating $H$}
    \end{subfigure}
    ~
    \begin{subfigure}{.3\textwidth}
        \includegraphics[width=\textwidth]{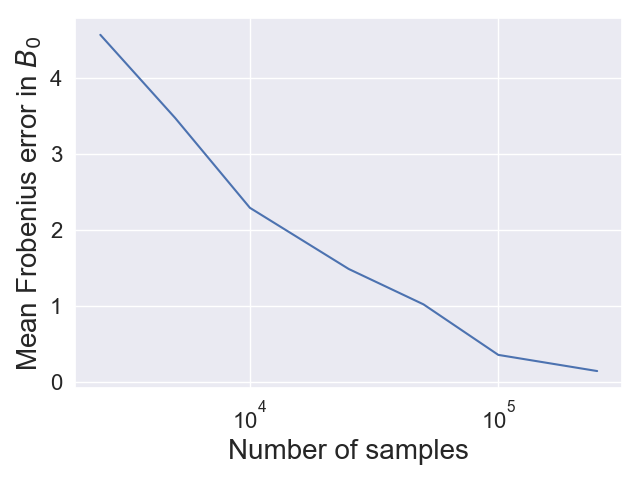}
        \caption{Error in estimating $B_0$}
    \end{subfigure}
    ~
    \begin{subfigure}{.297\textwidth}
        \includegraphics[width=\textwidth]{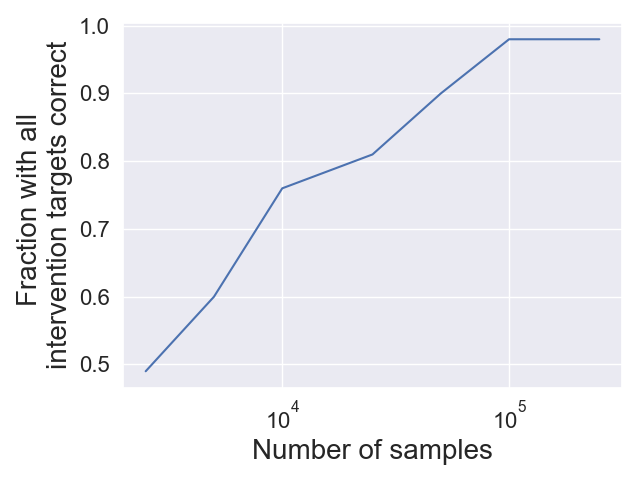}
        \caption{Intervention targets}
    \end{subfigure}
    \caption{
    \textbf{The adapted version of \rref{algm:iterative-difference-projection} is consistent for recovering $H$, $B_0$, and $\{ i_k \}_{k=1}^K$ from noisy data.}
    At each sample size, we generate 500 random models.
    Note the logarithmic scale on the x-axis.
    In \textbf{(a)}, we plot the median of $\| \hatH - H \|_2$, the error in Frobenius norm.
    In \textbf{(b)}, we plot the median of $\| \hatB_0 - \hatB \|_2$.
    In \textbf{(c)}, we plot the fraction of models where all intervention targets were correctly estimated.
    }
    \label{fig:results}
\end{figure*}

\subsection{Worst-case Necessity}\label{sec:theoretical-necessity}

We show that one intervention per latent node is \textit{necessary} for identifiability of our setup, in the worst case.
It is clear that observational data does not suffice for identifiability: from just observational data, we always have a solution with \textit{independent} latent variables, as follows.
Let $\hatH = \Lambda B_0 H$ and $\hatB_0 = \Lambda^\inv$, for $\Lambda$ a diagonal matrix with positive entries such that \rref{assumption:data-generating}(c) holds.
Then $\hatB_0 \hatH = B_0 H$, i.e. $\hatB_0$, $\hatH$ solve the causal disentanglement problem.
The new solution has independent latent nodes, since $\hatB_0$ is diagonal.

The next result, which follows from prior work in causal structure learning, says that $d-1$ interventions are required in the worst case, for a fully observed model.
This is the special case of our setup where $H$ is a permutation matrix.

\begin{prop}\label{prop:worst-case-necessity-observed}
Assume the setup in Assumptions \ref{assumption:data-generating} and \ref{assumption:perfect-interventions}, with $H$ a permutation matrix.
Let $K < d - 1$ with all intervention targets distinct.
Then, in the worst case over intervention targets $\{ i_k \}_{k=1}^K$, $B_0$ and $H$ are not identifiable.
\end{prop}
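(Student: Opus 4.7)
The plan is to reduce to the classical fully-observed setting and exhibit a worst-case counterexample based on the Markov equivalence of two bivariate Gaussian DAGs. Since $H$ is a permutation matrix, after relabelling latent nodes we may assume $H = I_d$, so $X = Z$ and $\Theta_k = B_k^\top B_k$. Because the $K \le d-2$ intervention targets $\{i_k\}$ are distinct, at least two latent nodes are never intervened on; after further relabelling, take these to be $d-1$ and $d$.

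For the counterexample, let $\cG$ have a single edge $d \to d-1$ with nonzero weight $a$, all other nodes isolated, and unit noise variances in the observational context. Each interventional $B_k$ differs from $B_0$ only in the $(i_k, i_k)$ diagonal entry, since $i_k \in [d-2]$ is isolated with no parents. Now define an alternative solution by swapping labels $d-1 \leftrightarrow d$: take $H^{(2)} = P$, the $(d-1,d)$ row transposition, and keep the same abstract graph $\cG$ (edge $d \to d-1$ in the new labels, corresponding to edge $d-1 \to d$ in observed coordinates), with edge weight and noise variances chosen by the standard inversion of a bivariate Gaussian regression so that the marginal on $(X_{d-1}, X_d)$ matches the first solution. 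All other parameters of the alternative solution coincide with those of the first.

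Next I would verify equality of $\{\Theta_k\}_{k=0}^K$ across the two solutions by decomposing each $\Theta_k$ into two marginals: the marginal on $(X_1, \dots, X_{d-2})$ is an identical product of Gaussians in every context, since these nodes are isolated and the interventions act on them identically; the marginal on $(X_{d-1}, X_d)$ coincides by Markov equivalence and is unchanged across contexts since no intervention target lies in $\{d-1, d\}$. The two solutions are inequivalent under \rref{prop:sigma_solutions}: matching $H^{(1)} = I_d$ to $H^{(2)} = P$ via $\sigma$ would force $\sigma$ to be the transposition $(d-1, d)$, but this $\sigma$ is not in $S(\cG)$ because the edge $d \to d-1$ requires $\sigma(d) > \sigma(d-1)$, contradicting $\sigma(d) = d-1 < d = \sigma(d-1)$. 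Hence $(B_0, H)$ is not identifiable.

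The main (but routine) obstacle is the bookkeeping needed to ensure the flipped solution satisfies \rref{assumption:data-generating}(a)--(b) and \rref{assumption:perfect-interventions}, in particular positivity of the noise variances and the topological-order condition on each new $B_k^{(2)}$; all of these follow from the explicit bivariate-flip formulas and the fact that the interventions only rescale independent Gaussians at isolated nodes.
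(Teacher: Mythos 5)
Your proof is correct, but it takes a genuinely different route from the paper. The paper's proof is a two-line citation: it invokes the fact that linear Gaussian DAGs are identifiable only up to their interventional Markov equivalence class (Proposition 3.3(ii) of Castelletti and Peluso, 2022) and then quotes Theorem 3.7 of Eberhardt et al., which says $d-1$ single-node interventions are worst-case necessary to shrink that class to a singleton. You instead build the canonical counterexample underlying that lower bound from scratch: two never-intervened adjacent nodes, a single edge between them, and the Markov equivalence of the two orientations of a bivariate Gaussian DAG, with all remaining nodes isolated so that the interventional precision matrices factor and match trivially. Your inequivalence argument via $S(\cG)$ (the transposition $(d-1,d)$ violates the topological-order constraint imposed by the edge $d\to d-1$) is also correct and is the right notion of ``not identifiable'' in this paper's framework. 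What each approach buys: the paper's citation is shorter and inherits the full refinement of the interventional-MEC theory (it characterizes exactly which target sets fail, not just that some do); your construction is self-contained, avoids sending the reader to two external results, and produces explicit non-identifiable pairs $(B_0,H)$ and $(B_0^{(2)},H^{(2)})$ in the paper's own parameterization, making it stylistically closer to the paper's proof of \rref{prop:non-identifiability}. The bookkeeping you defer (positivity of the reversed conditional variance, nonvanishing of the reversed edge weight, upper-triangularity of $B_0^{(2)}$ under the relabelling, and the vacuity of the genericity clause of \rref{assumption:data-generating}(b) at source-node targets) all checks out.
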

\vspace{-.25cm}
\begin{proof}
In the linear Gaussian setting with unknown-target interventions, the structure of a DAG is only identifiable up to its \textit{interventional Markov equivalence class} (MEC), see e.g. Proposition 3.3(ii) of \citet{castelletti2022network}.
For single-node interventions, $d - 1$ interventions are in the worst case necessary to ensure that the interventional MEC has size one, by Theorem 3.7 of \citet{eberhardt2005number}.
\end{proof}

\vspace{-.2cm}
We show that $d$ interventions are necessary, in the worst case, when $H$ is a general matrix. The proof is in \rref{appendix:proofs}.

\begin{prop}\label{prop:non-identifiability}
Assume the setup in Assumptions \ref{assumption:data-generating} and \ref{assumption:perfect-interventions}, with $K < d$.
Then there exist $H$ and $\{ B_k \}_{k=0}^K$ and a set of $K$ distinct intervention targets such that (i) $H$ and $B_k$ are not identifiable up to $S(\cG)$ and (ii) $\prec_\cG$ is not identifiable.
\end{prop}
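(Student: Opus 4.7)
The plan is to lift the two-node motivating example of \rref{sec:intro} to arbitrary $d$ via a block-diagonal embedding. Without loss of generality take $K = d - 1$, since any non-identifiable example at this cardinality yields one for smaller $K$ by discarding interventions; thus exactly one latent node, say node $1$, is not intervened and the intervention targets are $\{2, 3, \ldots, d\}$.

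Let $B_0^\star, H^\star$ and $\hatB_0^\star, \hatH^\star$ denote the two $2 \times 2$ pairs $(\Omega_0^{-1/2}(I_2 - A_0), G^{-1})$ coming from the original and alternative models of the motivating example. Take $\cG$ to have the single edge $2 \to 1$ with $3, \ldots, d$ as isolated sources, and $\hatcG$ to be the empty DAG. Define
\begin{equation*}
B_0 = \begin{bmatrix} B_0^\star & 0 \\ 0 & D \end{bmatrix}, \quad H = \begin{bmatrix} H^\star & 0 \\ 0 & I_{d-2} \end{bmatrix},
\end{equation*}
and define $\hatB_0, \hatH$ identically with the starred blocks replaced by $\hatB_0^\star, \hatH^\star$, where $D$ is a positive diagonal matrix. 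For each $k \in [d-1]$ with target $i_k \in \{2, \ldots, d\}$, obtain $B_k$ (resp.\ $\hatB_k$) from $B_0$ (resp.\ $\hatB_0$) by replacing the $(i_k, i_k)$ diagonal entry by a common positive value $\lambda_k$; this is a perfect intervention in both models, since $i_k$ is a source both in $\cG$ and in $\hatcG$.

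Verification that $\Theta_k = \hTheta_k$ for every $k$ factors through the block structure: the $\{3, \ldots, d\}$-block of both precision matrices equals the same expression built from $D$ and $\lambda_k$, while the $\{1,2\}$-block is precisely the identity already verified in the motivating example, both for $k = 0$ and for the intervention on node $2$. For the conclusion, note that rows $3, \ldots, d$ of $H$ are supported on disjoint single columns indexed $\geq 3$ while rows $1, 2$ are supported on columns $1, 2$, so any $\sigma \in S(\cG)$ with $P_\sigma H = \hatH$ must preserve the block structure; combined with $\sigma(2) > \sigma(1)$ from the edge $2 \to 1$, this forces $\sigma(1) = 1$ and $\sigma(2) = 2$. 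Since $H^\star \neq \hatH^\star$, no such $\sigma$ exists, giving (i); and $\prec_\cG$ contains $1 \prec 2$ while $\prec_{\hatcG}$ is discrete, giving (ii).

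The main obstacle is ensuring that the row-normalization of $H$ from \rref{assumption:data-generating}(c) and the perfect-intervention constraint of \rref{assumption:perfect-interventions} are met simultaneously for both models with the same intervention targets. Because every target is a source in both $\cG$ and $\hatcG$, perfect interventions reduce to diagonal rescalings with common positive scalars $\lambda_k$ read off from the motivating example; the starred blocks $H^\star$ and $\hatH^\star$ already satisfy the normalization, and padding with $I_{d-2}$ preserves this, so the block-diagonal form and the freedom in choosing $D$ leave enough slack to enforce all conditions without perturbing the identity of the precision matrices.
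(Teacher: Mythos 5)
Your construction is correct and establishes the existential claim, but it takes a genuinely different route from the paper. The paper's proof does not build a specific example: starting from an \emph{arbitrary} model in which some node (say node $1$) is never intervened and has at least one parent, it constructs the alternative solution by replacing the first row of every $B_k$ with $\be_1^\top$ and premultiplying the first row of $H$ by $(B_0)_1$, i.e., it absorbs node $1$'s structural equation into the mixing map. Since $i_k \neq 1$ for all $k$, the first row of $B_k H$ equals $(B_0)_1 H$ in every context, so $\hatB_k \hatH = B_k H$; fullness of the rank of $H$ then shows the new first row of $\hatH$ is not any row of $H$, and node $1$ becomes a source in the alternative, breaking both (i) and (ii). That argument shows the failure is pervasive — essentially any model with a non-intervened non-source node is a witness, for any latent graph — whereas your block-diagonal lift of the two-node motivating example produces a single concrete witness with a very sparse graph (one edge plus isolated sources). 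Both suffice for the existential statement, and your reduction to $K = d-1$ by discarding interventions is sound. One small inaccuracy: $\hatH^\star = \hatG^{-1}$ has first row $(3/2, 2)$, so it does \emph{not} already satisfy the row normalization of \rref{assumption:data-generating}(c); you must rescale that row (and compensate in $\hatB_k$), which is harmless by the scaling remark in \rref{sec:setup} but should be stated rather than asserted away.
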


\vspace{.1cm}
\begin{example}
Proposition~\ref{prop:non-identifiability} generalizes our motivating example from \rref{sec:intro}.
Fix $H \in \bbR^{2 \times 2}$ with entries $H_{ij}$, and fix upper triangular $B_0, B_1 \in \bbR^{2 \times 2}$ with entries $(B_0)_{ij}$ and $(B_1)_{ij}$, respectively.
Assume $i_1 = 2$; i.e., $(B_0)_{11} = (B_1)_{11}$ and $(B_0)_{12} = (B_1)_{12}$.
Let
\begin{small}
\begin{align*}
    \hatB_0 \! &= \! \begin{bmatrix}
    1 & 0
    \\
    0 & (B_0)_{22}
    \end{bmatrix},
    \quad
    \hatB_1 \! = \! \begin{bmatrix}
    1 & 0
    \\
    0 & (B_1)_{22}
    \end{bmatrix},
    \\ 
    \hatH \! &= \! \begin{bmatrix}
    (B_0)_{11} H_{11} + (B_0)_{12} H_{21} 
    & 
    (B_0)_{11} H_{12} + (B_0)_{12} H_{22} 
    \\
    H_{21} & H_{22}
    \end{bmatrix} \!.
\end{align*}
\end{small}
\vspace{-.5cm}

Then for $k \in \{ 0, 1\}$, we have $\hatB_k \hatH = B_k H$, both equal to
\begin{multline*}
    \begin{bmatrix}
    (B_0)_{11} H_{11} + (B_0)_{12} H_{21}
    &
    (B_0)_{11} H_{12} + (B_0)_{12} H_{22} 
    \\
    (B_k)_{22} H_{21}
    &
    (B_k)_{22} H_{22}
    \end{bmatrix}
\end{multline*}
\end{example}

\begin{remark}
\rref{prop:non-identifiability} pertains to the worst case over intervention targets. 
It is natural to ask if there exists a better choice of $K$ intervention targets, for $K < d$, such that $H$ and $\{ B_k \}_{k=0}^K$ are identifiable.
For example, when $d = 2$, consider $\cG = \{ 2 \to 1 \}$, with an intervention on $Z_1$; i.e., $i_1 = 1$.
Then $\rowspan(\Theta_1 - \Theta_0) \subseteq \langle \bh_i : i \in \cI \rangle$ if and only if $\cI = \{ 1, 2 \}$, by \rref{lemma:rowspan-inclusion}(a).
Thus, $\Theta_1 - \Theta_0$ is rank 2, and we can detect that $i_1 = 1$ and $\cG = \{ 2 \to 1 \}$; otherwise, we would have $\rank(\Theta_1 - \Theta_0) = 1$.
While $i_1$ and $\cG$ are identifiable, preliminary computational evidence suggests that the entries of $B_0, B_1$, and $H$ are not identifiable.
\end{remark}

\begin{proof}[Proof of~\rref{thm:G_and_Gbar}]
The necessity of $d$ interventions is \rref{prop:non-identifiability}.
Under Assumption~\ref{assumption:data-generating} and~\ref{assumption:perfect-interventions}, the sufficiency of $d$ interventions follows from \rref{thm:main_id_non_constructive}. Under \rref{assumption:data-generating}, the sufficiency is the recovery of $\prec_\cG$ in \rref{prop:orthogonal-correctness}.
\end{proof}

\section{Experimental Results}\label{sec:experimental}

We adapt our proof of \rref{thm:main-identifiability} into a method for causal disentanglement in the finite-sample setting. 
We modify our methods to (1) use matrices instead of vector spaces, (2) use scores based on singular values to test rank, and (3) choose a nonzero row based on norms.
The adapted algorithms are in \rref{appendix:noisy-algorithm}.
In this section, we investigate the performance of the method in a simulation study.
There is a single hyperparameter $\gamma \in [0, 1]$, the percentage of spectral energy associated to the largest singular value, above which we consider a matrix to have rank one.
We use $\gamma = 0.99$.

\subsection{Synthetic Data Generation}

We generate 500 random models following \rref{assumption:data-generating} for $d = 5$ latent and $p = 10$ observed variables, as follows.
We sample the graph $\cG$ from an Erd\H{o}s-R{\'e}nyi random graph model with density $0.75$.
We sample the nonzero entries of $A_0$ independently from $\Unif(\pm [0.25, 1])$, and the nonzero entries of $\Omega_0$ independently from $\Unif([2, 4])$.
We sample uniformly among permutations to generate the intervention targets $i_k$.
In context $k$, we have $A_k = A_0 - \be_{i_k} A_0^\top \be_{i_k}$; i.e., all entries in row $i_k$ are 0.
We change $(\Omega_0)_{i_k,i_k}$ into a new value $(\Omega_k)_{i_k,i_k}$, sampled from $\Unif([6, 8])$ to ensure a non-negligible change.
Finally, the entries of $H$ are sampled independently from $\Unif([-2, 2])$.
We consider sample sizes $n$ from $2500$ to $250000$ and use as input the sample precision matrices.
All code for data generation and for our adapted versions of Algorithms \ref{algm:identify-ancestors}, \ref{algm:identify-partial-order}, and \ref{algm:iterative-difference-projection} (that is, Algorithms \ref{algm:identify-ancestors-noisy}, \ref{algm:identify-partial-order-noisy} and \ref{algm:iterative-difference-projection-noisy}) can be found at the link in \rref{appendix:code-and-data}.

\subsection{Synthetic Data Results}

We show the results of applying our method in the finite-sample setting, presented in \rref{fig:results}.
We measure the error in estimating the parameters $H$ and $B_0$ and the intervention targets $\{ i_k \}_{k=1}^K$.
Since the problem is only identifiable up to the partial order $\prec_\cG$, we align our estimated $\hatH$, $\hatB_0$, and $\{ \hati_k \}_{k=1}^K$ with $H$, $B_0$, and $\{ i_k \}_{k=1}^K$ by picking $\sigma \in S(\cG)$ to maximize $\sum_{k=1}^K \kron\{i_k = \sigma(\hati_k)\}$.
For small $d$, this optimization can be solved by enumerating over $S(\cG)$.
For large $d$, we use the integer linear program in \rref{appendix:best-matching}, implemented in \texttt{gurobipy}.
As expected, the estimation errors for $H$ and $B_0$ decrease as the sample size increases, while the fraction of models with all intervention targets correctly estimated increases.

\subsection{Biological Data Results}

\begin{figure*}[hbtp]
    \centering
    \includegraphics[width=\textwidth]{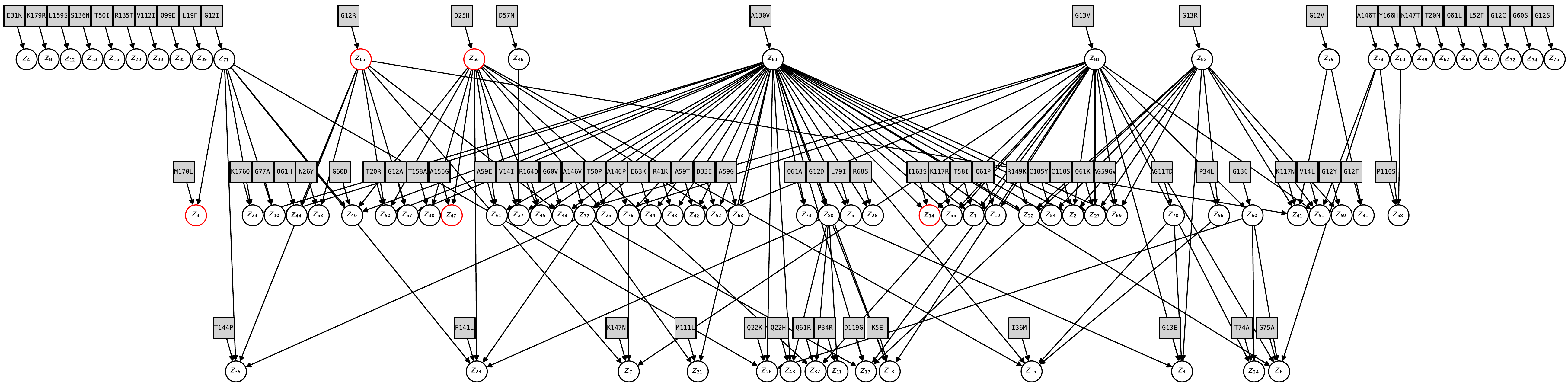}
    \caption{
    \textbf{The latent graph and intervention targets learned from scRNA-seq data.} 
Edges with weight of magnitude above 0.2 are shown.
    Boxes represent context indicators, corresponding to KRAS mutations, with edges to their respective targets.
    Red nodes are significantly associated with survival outcomes in the TCGA dataset.
    }
    \label{fig:latent-graph-real-data}
\end{figure*}

We evaluate our method
on a 
dataset from \citet{ursu2022massively}.
This single-cell RNA sequencing (scRNA-seq) dataset consists of 90,000 cells from a lung cancer cell line, with 83 different nonsynonymous mutations of the KRAS oncogene overexpressed.

\textbf{Semi-synthetic analysis.}
The authors divide the mutations into five categories based on the genes that they affect, and compute a score for the impact of each mutation.
Taking the two highest impact mutations from each category gives $K = 10$ contexts.
The wild type KRAS gene is taken as the observational context.
We select $p = 100$ observational features to be the most variable genes from the proliferation regulation category in the Gene Ontology \citep{ashburner2000gene}, which are significant modulators of cancer activity such as oncogenes and tumor suppressor genes.
We compute the sample precision matrices $\hTheta_0, \hTheta_1, \ldots, \hTheta_{10}$ and use them as input to \rref{algm:iterative-difference-projection-noisy} with $\gamma = 0.99$.

Given estimates $\{ \hatB_k \}_{k=1}^K, \hatH$ from our algorithm, we let $\tildeB_k = M_k \odot \hatB_k$, where $(M_k)_{ij} = \kron_{\{ |(\hatB_k)_{ij}| > 0.04 \}}$; i.e., we truncate the entries away from zero.
%
%
We treat the resulting parameters as a new generative model.
This tests our method in a more realistic setting, with parameters based on real data, while retaining the ability to measure performance.
Since the entries of the matrices $\tildeB_k$ are smaller than for our synthetic data, we consider larger sample sizes $n \in \{ 1e6, 5e6, 1e7, 5e7, 1e8 \}$.
As seen in \rref{fig:semisynthetic-results}, \rref{appendix:real-data}, we successfully recover the generative model.

\textbf{Hypothetical Workflow.}
We illustrate a hypothetical workflow of our method on biological data.
If we run our algorithm for all mutations ($K = 83$) on the $p = 83$ most variable genes, we obtain the graph in \rref{fig:latent-graph-real-data}.
We see that G12R, G12V, G13R, G13V, and G12I all perturb highly-connected latent nodes with several descendants.
The G12 and G13 positions in the KRAS protein are key functional residues whose mutations are known to be causal drivers of cancer \citep{huang2021kras}.
This indicates that the learned graph can highlight influential biological pathways which may be useful for prioritizing therapeutic development.
The matrix $\hatH$ from our algorithm gives a mapping from genes to latent variables that can be transferred across datasets and related to other observations.
For example, we compute estimates of the latent variables for the 589 lung cancer patients in the Cancer Genome Atlas \citep{liu2018integrated} and relate these variables to the patients' survival outcomes.
We find that 5 latent variables, those targeted by the M170L, I163S, A55G, G12R, and Q25H mutations, are significantly associated with survival outcomes (at significance $0.1$, after multiple testing correction).
See \rref{appendix:real-data} for details.
Note that the output of our method should be treated as exploratory; further theoretical and methodological development is required to better match complex real-world data.

\section{Discussion}\label{sec:discussion}

In this paper, we showed that a latent causal model is identifiable when data is available from an intervention on each latent variable.
Conversely, we showed that, in the worst case, such data is necessary for identifiability of the latent representation.
Our proof is constructive, consisting of an algorithm for recovering the latent representation, which can be adapted to the noisy setting.
Our algorithm was developed for maximal clarity of our identifiability result, and leaves open several directions for future work.

\textbf{Theory of latent interventional Markov equivalence.} We established sufficient and (worst-case) necessary conditions for the \textit{complete} identifiability of the parameters $H$ and $\{ B_k \}_{k=0}^K$.
However, in many settings, it is expected that these parameters (and corresponding combinatorial structures) are only \textit{partially} identifiable.
Indeed, Proposition~\ref{prop:non-identifiability} suggests that the problem parameters may be partially recoverable.
In future work, it would be interesting to develop a theory of identifiability for $K < p$ interventions.

\textbf{Non-linear setting.} Our results require that both the latent linear structural equation model and the mixing function are linear.
We expect that the insights developed here may apply when one or both of these elements are non-linear.
Notably, the contemporaneous work of \citet{ahuja2022interventional} shows that, under certain conditions, the case of \textit{polynomial mixing} can be reduced to the case of linear, which can be immediately applied to extend our result.

\textbf{Statistical analysis of causal disentanglement.} A next step beyond identifiability is to investigate the statistical properties of the setup.
This includes lower bounds on the accuracy of recovering the parameters $H$ and $\{ B_k \}_{k=0}^K$, along with corresponding combinatorial structures such as $\cG$ and the matching between $k$ and $i_k$, and computationally efficient algorithms that match these lower bounds.
Our identifiability result suggests that the differences of precision matrices may play a role.
These differences appear in \citet{varici2021scalable}, which uses techniques for directly estimating differences between precision matrices.
Moreover, it would be interesting to develop a score-based approach, e.g., (penalized) maximum likelihood estimation.
Our problem formulation suggests a natural parameterization for such an approach, which could be addressed using combinatorial optimization or gradient-based search.


\section*{Acknowledgements}
We thank Nils Sturma, Mathias Drton, Jiaqi Zhang, and Alvaro Ribot for helpful discussions.
We thank our anonymous reviewers for helpful suggestions that improved the paper.
Chandler Squires was partially supported by  an  NSF  Graduate  Research  Fellowship.
Anna Seigal was supported by the Society of Fellows at Harvard University.
Salil Bhate was supported by the Eric and Wendy Schmidt Center at the Broad Institute.
In addition, this work was supported by NCCIH/NIH (1DP2AT012345), ONR (N00014-22-1-2116), NSF (DMS-1651995), the MIT-IBM Watson AI Lab, and a Simons Investigator Award to Caroline Uhler.

\bibliography{bib}
\bibliographystyle{icml2023}

\clearpage
\let\addcontentsline\originaladdcontentsline
\onecolumn
\renewcommand{\contentsname}{Contents of Appendix}
\addtocontents{toc}{\protect\setcounter{tocdepth}{2}}
\tableofcontents

\newpage
\appendix
\onecolumn

\section{Additional related work}\label{appendix:additional-related}

\rref{fig:assumptions} shows the two graphical conditions assumed in some prior works.

\begin{figure}[h]
    \centering
    \begin{subfigure}{.27\textwidth}
        \includegraphics[width=\textwidth]{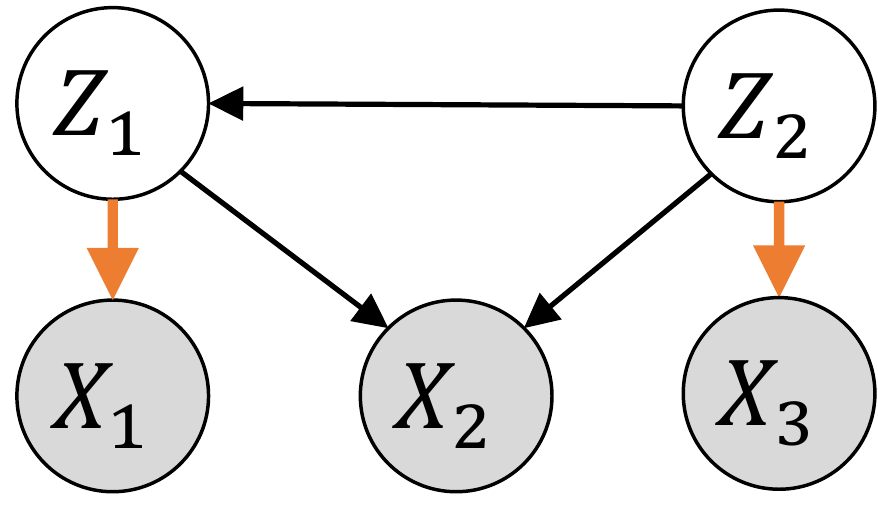}
        \caption{Pure children}
    \end{subfigure}
    \quad\quad
    \begin{subfigure}{.27\textwidth}
        \includegraphics[width=\textwidth]{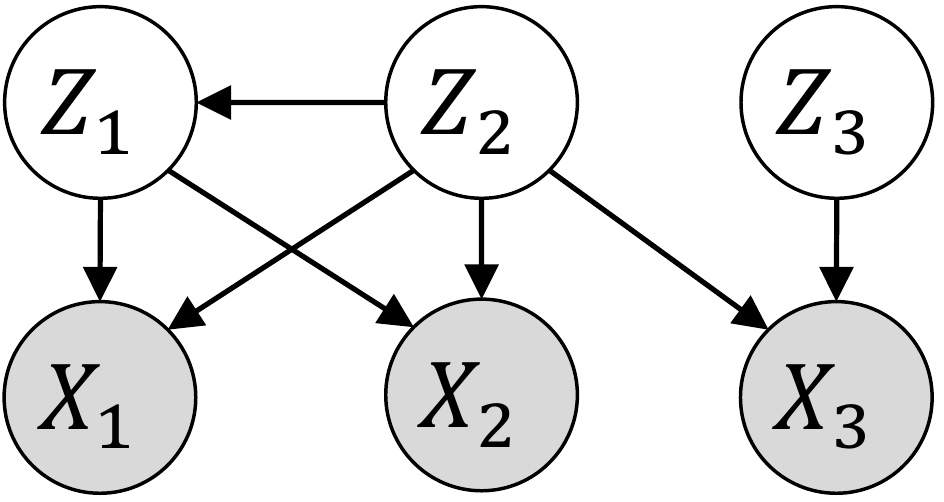}
        \caption{No twins}
    \end{subfigure}
    \caption{
    \textbf{Graphical conditions assumed in prior works.}
    In \textbf{(a)}, the orange edges link pure children ($X_1$ and $X_3$) to their parents ($Z_1$ and $Z_3$, respectively).
    In \textbf{(b)}, the no twins assumption is satisfied since the observed children of $Z_1$, $Z_2$, and $Z_3$ are, respectively, $\{ X_1, X_2 \}$, $\{X_1, X_2, X_3\}$, and $\{X_3 \}$, and these three sets are distinct.
    }
    \label{fig:assumptions}
\end{figure}
\section{Non-generic soft interventions}\label{appendix:unfaithful-counterexample}

We discuss the genericity condition in \rref{assumption:data-generating}(b). 
We show that for soft interventions in which this genericity condition fails to hold, identifiability of the causal disentanglement problem as in \rref{thm:G_and_Gbar} may fail.
The following matrices satisfy all of \rref{assumption:data-generating}, except for the genericity condition in \rref{assumption:data-generating}(b), since $B_1^\top \be_1 = 2 B_0^\top \be_1$:
\[
B_0 = \begin{bmatrix}
    1 & 1
    \\
    0 & 1
\end{bmatrix},
\qquad
B_1 = \begin{bmatrix}
    2 & 2
    \\
    0 & 1
\end{bmatrix},
\qquad
B_2 = \begin{bmatrix}
    1 & 1
    \\
    0 & 2
\end{bmatrix},
\qquad
G = \begin{bmatrix}
    1 & 0
    \\
    0 & 1
\end{bmatrix}.
\]
Consider the alternative matrices
\[
\hatB_0 = \begin{bmatrix}
    1 & 0
    \\
    0 & 1
\end{bmatrix},
\qquad
\hatB_1 = \begin{bmatrix}
    2 & 0
    \\
    0 & 1
\end{bmatrix},
\qquad
\hatB_2 = \begin{bmatrix}
    1 & 0
    \\
    0 & 2
\end{bmatrix},
\qquad
\hatG = \begin{bmatrix}
    1 & -1
    \\
    0 & 1
\end{bmatrix}.
\]
These do not differ from the original tuple of matrices via a permutation.
However, one can check that they are a valid solution, since
\begin{align*}
    \Theta_0 
    &=
    \hTheta_0
    =
    \begin{bmatrix}
        1 & 1
        \\
        1 & 2
    \end{bmatrix},
    \qquad 
    \Theta_1 
    =
    \hTheta_1
    =
    \begin{bmatrix}
        4 & 4
        \\
        4 & 5
    \end{bmatrix},
    \qquad 
    \Theta_1 
    =
    \hTheta_1
    =
    \begin{bmatrix}
        1 & 1
        \\
        1 & 5
    \end{bmatrix}.
\end{align*}
\section{Pseudoinverse of a covariance matrix}\label{appendix:precision-matrix}

\begin{prop}\label{prop:pseudoinverse-precision}
Let $X = G Z$ for $G \in \bbR^{p \times d}$ with full column rank.
Assume $\Cov(Z)$ is invertible and let $K := \Cov(Z)^\inv$.
Then $\Cov(X)^\dagger = H^\top K H$, 
where $H := G^\dagger$.
\end{prop}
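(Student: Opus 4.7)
The plan is to verify directly that $N := H^\top K H$ satisfies the four Moore--Penrose conditions with respect to $M := \Cov(X) = G K^{\inv} G^\top$. The only algebraic facts needed are (i) $HG = I_d$, which holds because $G$ has full column rank and so $H = G^\dagger = (G^\top G)^{\inv} G^\top$, and (ii) $GH = G G^\dagger$ is the orthogonal projector onto $\mathrm{col}(G)$ and in particular is symmetric.

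First I would compute $M N = G K^{\inv} G^\top H^\top K H$ and simplify $G^\top H^\top = (HG)^\top = I_d$ to get $MN = G K^{\inv} K H = G H$; a symmetric analogous calculation gives $NM = H^\top G^\top = (GH)^\top$. Since $GH$ is symmetric, this already delivers the two symmetry conditions $(MN)^\top = MN$ and $(NM)^\top = NM$, and moreover shows $MN = NM$.

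Next I would check the two ``generalized inverse'' identities. For $MNM$: using $MN = GH$, we have $MNM = GH \cdot G K^{\inv} G^\top = G (HG) K^{\inv} G^\top = G K^{\inv} G^\top = M$. For $NMN$: using $NM = GH$ (as an operator)--more carefully, using $MN = GH$ one computes $NMN = N(MN) = H^\top K H \cdot G H = H^\top K (HG) H = H^\top K H = N$. All four Penrose conditions hold, so by uniqueness $M^\dagger = N$, which is the claim.

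I do not anticipate a real obstacle: the full column rank hypothesis on $G$ (equivalently, $HG = I_d$) collapses all the mixed products, and symmetry of the projector $GH$ gives the remaining conditions for free. The only thing to be mildly careful about is distinguishing $HG = I_d$ (which holds) from $GH = I_p$ (which does not hold when $p > d$); the proof uses only the former together with symmetry of $GH$.
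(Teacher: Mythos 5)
Your proof is correct, but it takes a genuinely different route from the paper's. The paper writes $\Cov(X) = G\,\Cov(Z)\,G^\top$ and invokes the reverse-order law for pseudoinverses, $(UV)^\dagger = V^\dagger U^\dagger$ when $U$ has full column rank and $V$ has full row rank (citing Greville), applying it to peel off the factors and land directly on $(G^\top)^\dagger \Cov(Z)^{\inv} G^\dagger = H^\top K H$. You instead verify the four Penrose conditions by hand for the candidate $N = H^\top K H$, which is entirely self-contained: the only inputs are $HG = I_d$ and the symmetry of the projector $GH$, and your computations ($MN = GH$, $NM = (GH)^\top$, $MNM = M$, $NMN = N$) all check out. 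The paper's argument is shorter given the cited lemma but leans on an external result whose hypotheses (full column rank of $U$, full row rank of $V$) must be checked for each of the two applications of the reverse-order law; yours trades that citation for a few lines of elementary algebra and makes explicit exactly which structural facts about $G$ and $H$ are doing the work. Your closing caution about $HG = I_d$ versus $GH \neq I_p$ is exactly the right point to flag, and your argument correctly uses only the former.
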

\begin{proof}
The covariance matrices $\Cov(X)$ and $\Cov(Z)$ are related via $\Cov(X) = G \cdot \Cov(Z) \cdot G^\top$.
The property $(UV)^\dagger = V^\dagger U^\dagger$ holds whenever $U$ has full column rank and $V$ has full row rank \citep{greville1966note}.
The matrix $G$ has full column rank, $\Cov(Z)$ has full rank, and $G\T$ has full row rank.
Hence $\Cov(X)^\dagger = (G\T)^\dagger \Cov(Z)^\dagger G^\dagger = H^\top K H$.
%
\end{proof}

\section{The partial order RQ decomposition}\label{appendix:partial-order-rq}

Recall the partial order RQ decomposition from \rref{defn:partial-order-rq-decomposition}.
We present \rref{algm:partial-order-rq} to find the partial order RQ decomposition of a matrix. In \rref{line:normalize}, the $\normalize$ operator is $\normalize(\bv) := \frac{\bv}{\| \bv \|_2}$.
We let $\bzero_{d \times p}$ denote the $d \times p$ matrix of zeros and $Q_{j \succeq i}$ denote the submatrix of $Q$ on rows $j$ with $j \succeq i$.
We say that a partial order $\prec$ is consistent with the total order $1, 2, \ldots, d$ if $i \prec j$ implies $i < j$.
Any partial order can be put in this form by relabelling.

\begin{prop}
Let $H \in \bbR^{d \times p}$ be full rank and fix a partial order $\prec$ over $[d]$.
Then there exists a unique partial order RQ decomposition of $H$.
If $\prec$ is consistent with the total order $1, 2, \ldots, d$, the decomposition is returned by \rref{algm:partial-order-rq}.
\end{prop}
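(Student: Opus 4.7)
The plan is a reverse induction on $i$ from $d$ down to $1$ that simultaneously yields existence, uniqueness, and algorithmic correctness. Since $\prec$ is consistent with $1 < \cdots < d$, the inductive hypothesis is that $\bq_d, \ldots, \bq_{i+1}$ and the rows $R_{i+1}, \ldots, R_d$ of $R$ are forced by \rref{defn:partial-order-rq-decomposition}, and moreover $\SPAN\{\bq_j : j > i\} = \SPAN\{\bh_j : j > i\}$. The base case $i = d$ is immediate: $\{j : d \prec j\}$ is empty, so the conditions force $\bq_d = \bh_d / \|\bh_d\|$ and $R_{dd} = \|\bh_d\| > 0$, and both spans equal $\SPAN\{\bh_d\}$.

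At step $i$, set $W_i := \SPAN\{\bq_j : i \prec j\}$. Projecting the identity $\bh_i = R_{ii} \bq_i + \sum_{j : i \prec j} R_{ij} \bq_j$ onto $W_i^\perp$ annihilates every term with $i \prec j$, leaving $R_{ii} \bq_i = \proj_{W_i^\perp}(\bh_i)$. Combined with $R_{ii} \geq 0$ and $\|\bq_i\| = 1$, this pins down $R_{ii} = \|\proj_{W_i^\perp}(\bh_i)\|$ and $\bq_i = \proj_{W_i^\perp}(\bh_i) / R_{ii}$ uniquely, provided $R_{ii} > 0$. The remaining entries $R_{ij}$ with $i \prec j$ are then the coordinates of the parallel component $\bh_i - R_{ii} \bq_i \in W_i$ in the basis $\{\bq_j : i \prec j\}$, and these coordinates are unique because the basis is a subset of $\{\bq_j : j > i\}$, which by the inductive hypothesis spans the same space as the linearly independent rows $\{\bh_j : j > i\}$.

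The main obstacle is ruling out $R_{ii} = 0$. For this I use the inclusion $W_i \subseteq \SPAN\{\bq_j : j > i\} = \SPAN\{\bh_j : j > i\}$ from the inductive hypothesis, together with the full row rank of $H$, to conclude $\bh_i \notin W_i$ and hence $\proj_{W_i^\perp}(\bh_i) \neq 0$. The same identity propagates the span hypothesis: $\bh_i \in R_{ii} \bq_i + W_i \subseteq \SPAN\{\bq_j : j \geq i\}$, and conversely $\bq_i$ is a linear combination of $\bh_i$ and vectors in $W_i \subseteq \SPAN\{\bh_j : j > i\}$, so $\bq_i \in \SPAN\{\bh_j : j \geq i\}$.

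Finally, I will verify that \rref{algm:partial-order-rq} implements this induction step by step: at each iteration it decomposes $\bh_i$ into components parallel and perpendicular to $W_i$, normalizes the perpendicular component to obtain $\bq_i$, records its norm as $R_{ii}$, and stores the expansion coefficients of the parallel component as the off-diagonal entries of row $i$ of $R$. Because every quantity the algorithm produces coincides with the forced choice at the corresponding step of the uniqueness argument, the algorithm returns the unique partial order RQ decomposition.
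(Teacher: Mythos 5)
Your proposal is correct and follows essentially the same approach as the paper's proof: a Gram--Schmidt-style argument in which $\bq_i$ is forced to be the normalized projection of $\bh_i$ onto $\langle \bq_j : i \prec j\rangle^\perp$ and the row of $R$ is then the unique coordinate vector, processing indices in an order compatible with $\prec$. Your write-up is in fact more complete than the paper's terse argument, since you explicitly rule out $R_{ii}=0$ via the full-rank hypothesis and establish uniqueness of $Q$ itself (not merely of $R$ given $Q$) through the inductive span identity $\SPAN\{\bq_j : j>i\}=\SPAN\{\bh_j : j>i\}$.
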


\begin{proof}
The matrix $Q_{j \succeq i}$ has fewer rows than columns and $\bh_i \in \langle \bq_j : j \succeq i \rangle$, by its construction in \rref{algm:partial-order-rq}. Hence the vector $\br$ of non-zero entries in the $i$-th row of $R$ is the unique solution to $Q_{j \succeq i}\T \br = \bh_i$.
By construction, we see that $H = R Q$ and, moreover, $R_{ij} = 0$ if $j \not\succeq i$, and $\bq_i$ is orthogonal to $\bq_j$ for $i \prec j$.
Furthermore, the entry $R_{ii}$ is positive, since $\bq_i$ is the (normalized) projection of $\bh_i$ onto $W_i$.
\end{proof}

\begin{algorithm}[htbp]
	\caption{Partial Order RQ Decomposition}\label{algm:partial-order-rq}
	\begin{algorithmic}[1]
		\STATE \textbf{Input:} Matrix $H \in \bbR^{d \times p}$, partial order $\prec$ over $[d]$ consistent with the total order $1, 2, \ldots, d$
		\STATE \textbf{Output:} A partial order RQ decomposition $R$, $Q$
		\STATE Let $R = \bzero_{d \times d}$, $Q = \bzero_{d \times p}$
		\FOR{$i = 1, \ldots, d$}
		    \STATE Let $\bh_i$ be the $i$-th row of $H$
		    \STATE Let $W_i = \langle \bq_j : j \succ i \rangle$
		    \STATE Let $\bq_i = \normalize(\proj_{W_i^\perp} \bh_i)$ be the $i$-th row of $Q$ \label{line:normalize}
		    \STATE Let $\br = (Q_{j \succeq i}\T)^\dagger \bh_i$
		    \label{line:linear-solution}
		    \STATE For $i \preceq j$, let $R_{ij} = r_j$
		\ENDFOR
		\STATE \textbf{return} $R$, $Q$
	\end{algorithmic}
\end{algorithm}
\section{Further preliminaries for identifiability and reduction}\label{appendix:further-preliminaries}

We prove \rref{fact:rank-one-decomposition} and discuss the reduction described in \rref{sec:reduction}.

\begin{proof}[Proof of \rref{fact:rank-one-decomposition}]
    We have $B^\top \! B= B^\top \! \! \left( \sum_{i=1}^d \be_i \be_i^\top \right) \! B = \sum_{i=1}^d B^\top \! \! \left( \be_i \be_i^\top \right) \! B = \sum_{i=1}^d (B^\top \be_i )^{\otimes 2}$.  
\end{proof}

\begin{prop}\label{prop:difference-in-contexts}
    Consider the setup in \rref{assumption:data-generating}.
    For $k, \ell \in [K]$, we have
    \begin{equation}\label{eqn:difference-latent-precision}
        B_k^\top B_k - B_\ell^\top B_\ell
        = 
        (B_k^\top \be_{i_k})^{\otimes 2} - (B_0^\top \be_{i_k})^{\otimes 2}
        -
        (B_\ell^\top \be_{i_\ell})^{\otimes 2} + (B_0^\top \be_{i_\ell})^{\otimes 2},
    \end{equation}
    and thus
    \begin{equation*}
        \Theta_k - \Theta_\ell = 
        (H B_k^\top \be_{i_k})^{\otimes 2} - (H B_0^\top \be_{i_k})^{\otimes 2}
        -
        (H B_\ell^\top \be_{i_\ell})^{\otimes 2} + (H B_0^\top \be_{i_\ell})^{\otimes 2}.
    \end{equation*}
\end{prop}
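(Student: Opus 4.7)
The plan is to derive both equations as immediate consequences of Proposition \ref{prop:key-identity}, by applying it in contexts $k$ and $\ell$ and subtracting: the $B_0^\top B_0$ (respectively $\Theta_0$) terms cancel, leaving exactly the claimed four-term expressions. The statement is a mild bookkeeping extension of the ``key identity''; the only substantive content is Fact \ref{fact:rank-one-decomposition} combined with the single-node intervention structure in Assumption \ref{assumption:data-generating}(b), both of which are already encapsulated in Proposition \ref{prop:key-identity}.

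First, I would recall the intermediate identity from the proof of Proposition \ref{prop:key-identity}: Fact \ref{fact:rank-one-decomposition} gives $B_k^\top B_k = \sum_i (B_k^\top \be_i)^{\otimes 2}$, and Assumption \ref{assumption:data-generating}(b) ensures $B_k^\top \be_i = B_0^\top \be_i$ for every $i \neq i_k$. Subtracting the analogous expansion for $B_0$ term-by-term, all summands with $i \neq i_k$ cancel, yielding $B_k^\top B_k - B_0^\top B_0 = (B_k^\top \be_{i_k})^{\otimes 2} - (B_0^\top \be_{i_k})^{\otimes 2}$. Writing the same identity with $k$ replaced by $\ell$ and taking the difference of the two equations makes the $B_0^\top B_0$ contributions cancel and produces the first displayed equation of the proposition.

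For the second equation, I would conjugate the first by $H$: left-multiply by $H^\top$ and right-multiply by $H$. Using $\Theta_k = H^\top B_k^\top B_k H$ from Equation \ref{eqn:theta-k}, the left-hand side becomes $\Theta_k - \Theta_\ell$; each rank-one term transforms as $H^\top (B^\top \be_{i})^{\otimes 2} H = (H^\top B^\top \be_{i})^{\otimes 2}$, which gives the four rank-one summands on the right-hand side (up to the obvious $H$ vs.\ $H^\top$ convention already fixed by Proposition \ref{prop:key-identity}). Equivalently, one can simply subtract the two instances of Proposition \ref{prop:key-identity} for contexts $k$ and $\ell$, cancelling $\Theta_0$ in both.

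There is no real obstacle: the proof is purely mechanical, consisting of two applications of an already-established identity and one subtraction. The only care required is tracking signs when the $B_0$ (or $\Theta_0$) terms cancel, and matching the $H^\top$ convention used in Proposition \ref{prop:key-identity}.
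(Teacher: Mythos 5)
Your proposal is correct and matches the paper's proof, which likewise obtains \rref{eqn:difference-latent-precision} by writing $B_k^\top B_k - B_\ell^\top B_\ell = (B_k^\top B_k - B_0^\top B_0) - (B_\ell^\top B_\ell - B_0^\top B_0)$ and applying the steps of \rref{prop:key-identity} to each difference, then conjugating by $H$. Your remark about the $H$ versus $H^\top$ convention is also apt, since the second display should carry $H^\top$ on the left to be consistent with \rref{eqn:theta-k}.
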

\begin{proof}
    The proof follows the same steps as \rref{prop:key-identity}, together with the fact that
    \begin{align*}
        B_k^\top B_k - B_\ell^\top B_\ell
        &=
        (B_k^\top B_k - B_0^\top B_0)
        -
        (B_\ell^\top B_\ell - B_0^\top B_0). \qedhere
    \end{align*}
\end{proof}

\section{Reduction}\label{appendix:reduction}

In this section, we show that a more general causal disentanglement problem can be simplified to one that satisfies our assumptions.
We consider an unknown observational context, and multiple contexts with the same target.
We focus here on the case of perfect interventions.


\subsection{Reducing to one intervention per node}

We identify which contexts correspond to interventions on the same node.
Thus, we can reduce to the case of one intervention per node by removing any redundant contexts.
We do not use knowledge of which context is the observational context here.

\begin{prop}\label{prop:identify-same-target}
    Consider the setup in Assumptions \ref{assumption:data-generating} and \ref{assumption:perfect-interventions}.
    Assume generic parameters for $B_0$, $\lambda_k$, and $\lambda_\ell$.
    For $k, \ell \in [K]$, we have $r_{k,\ell} = 1$ if and only if $i_k = i_\ell$, where $r_{k, \ell}:= \rank(\Theta_k - \Theta_\ell)$.
\end{prop}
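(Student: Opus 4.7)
The plan is to first reduce the rank question about $\Theta_k - \Theta_\ell$ to one about the latent precision difference, and then handle the two cases $i_k = i_\ell$ and $i_k \neq i_\ell$ separately.

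\textbf{Step 1: Reduction to the latent precision difference.} Let $M_k := B_k^\top B_k$, so that $\Theta_k = H^\top M_k H$ and $\Theta_k - \Theta_\ell = H^\top (M_k - M_\ell) H$. Since $H \in \bbR^{d \times p}$ has full row rank, pre-multiplication by $H^\top$ and post-multiplication by $H$ preserve rank. Thus $\rank(\Theta_k - \Theta_\ell) = \rank(M_k - M_\ell)$, and it suffices to analyse the latter.

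\textbf{Step 2: Closed form for $M_k - M_\ell$ under perfect interventions.} A short calculation using $B_k = B_0 + \be_{i_k}\bc_k^\top$ with $\bc_k = \lambda_k \be_{i_k} - B_0^\top \be_{i_k}$ gives $B_k^\top \be_{i_k} = \lambda_k \be_{i_k}$ and $B_k^\top \be_j = B_0^\top \be_j$ for $j \neq i_k$. Combined with \rref{fact:rank-one-decomposition}, this yields
\begin{equation*}
M_k - M_0 \,=\, \lambda_k^2\, \be_{i_k}\be_{i_k}^\top \,-\, \bb_{i_k}\bb_{i_k}^\top, \qquad \text{where } \bb_i := B_0^\top \be_i.
\end{equation*}
Subtracting the analogous formula for $\ell$,
\begin{equation*}
M_k - M_\ell \,=\, \lambda_k^2 \be_{i_k}\be_{i_k}^\top - \bb_{i_k}\bb_{i_k}^\top - \lambda_\ell^2 \be_{i_\ell}\be_{i_\ell}^\top + \bb_{i_\ell}\bb_{i_\ell}^\top.
\end{equation*}

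\textbf{Step 3: The case $i_k = i_\ell$.} The $\bb$-terms cancel and we obtain $M_k - M_\ell = (\lambda_k^2 - \lambda_\ell^2)\be_{i_k}\be_{i_k}^\top$, which has rank exactly $1$ whenever $\lambda_k \neq \lambda_\ell$. This holds for generic $\lambda_k, \lambda_\ell$.

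\textbf{Step 4: The case $i_k \neq i_\ell$.} Here I would show $\rank(M_k - M_\ell) \geq 2$ by exhibiting a $2 \times 2$ minor with nonzero determinant for generic parameters. Take the principal minor on rows and columns $\{i_k, i_\ell\}$. Because $i_k \neq i_\ell$, the diagonal entries read
\begin{equation*}
(M_k - M_\ell)_{i_k,i_k} = \lambda_k^2 + \alpha, \qquad (M_k - M_\ell)_{i_\ell,i_\ell} = -\lambda_\ell^2 + \beta,
\end{equation*}
while the off-diagonal entry $(M_k - M_\ell)_{i_k,i_\ell} = \gamma$ depends only on $B_0$ (and not on $\lambda_k, \lambda_\ell$), where $\alpha, \beta, \gamma$ are polynomials in the entries of $B_0$. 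The determinant of this minor therefore expands as
\begin{equation*}
(\lambda_k^2 + \alpha)(-\lambda_\ell^2 + \beta) - \gamma^2,
\end{equation*}
whose coefficient of $\lambda_k^2 \lambda_\ell^2$ is $-1$. Hence the determinant is a nonzero polynomial in $(\lambda_k, \lambda_\ell)$ (for any fixed $B_0$), and in particular a nonzero polynomial in the joint parameters. Consequently it is nonzero for generic $(B_0, \lambda_k, \lambda_\ell)$, forcing $\rank(M_k - M_\ell) \geq 2$.

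The one place to be careful is Step 4: one must verify that the $\lambda_k^2\lambda_\ell^2$ monomial cannot arise from the off-diagonal square $\gamma^2$. This is immediate from the closed form in Step 2, since the $\lambda$'s appear only through the rank-one pieces $\lambda_k^2 \be_{i_k}\be_{i_k}^\top$ and $\lambda_\ell^2 \be_{i_\ell}\be_{i_\ell}^\top$, which contribute to $(M_k - M_\ell)_{i_k,i_\ell}$ only via $\delta_{i_k, i_\ell} = 0$.
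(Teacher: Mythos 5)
Your proposal is correct and follows essentially the same route as the paper's proof: reduce to $\rank(B_k^\top B_k - B_\ell^\top B_\ell)$ via the full row rank of $H$, apply the rank-one update identity, observe the cancellation when $i_k = i_\ell$, and exhibit a nonsingular $2\times 2$ principal minor on $\{i_k, i_\ell\}$ when $i_k \neq i_\ell$. Your Step 4 is in fact slightly more explicit than the paper, which simply asserts genericity of the minor, whereas you verify it by isolating the $\lambda_k^2\lambda_\ell^2$ monomial with coefficient $-1$.
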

\begin{proof}
    Since $H$ is full rank, we have $r_{k,\ell} = \rank(B_k^\top B_k - B_\ell^\top B_\ell)$.
    Thus, we consider $\rank(B_k^\top B_k - B_\ell^\top B_\ell)$.
    Suppose $i_k = i_\ell = i$.
We have
    \[
    B_k^\top B_k - B_\ell^\top B_\ell
    =
    \left( B_k^\top \be_i \right)^{\otimes 2}
    -
    \left( B_\ell^\top \be_i \right)^{\otimes 2},
    \] 
    by \rref{eqn:difference-latent-precision} in \rref{prop:difference-in-contexts}.
    Both $B_k^\top \be_i$ and $B_\ell^\top \be_i$ have a single nonzero entry at the $i$-th coordinate,
    by \rref{assumption:perfect-interventions}.
    Thus $r_{k,\ell} = 1$.

    Suppose $i_k \neq i_\ell$ and assume without loss of generality that $i_k < i_\ell$.
    Given a matrix $M$, let $M_{U}$ denote the submatrix of $M$ with rows and columns indexed by the elements of the set $U$.
    We have
    \[
    \left(
    B_k^\top B_k - B_\ell^\top B_\ell
    \right)_{ \{ i_k,i_\ell \}}
    =
    \begin{bmatrix}
        \lambda_k^2 - (B_0)_{i_k, i_k}^2 & -(B_0)_{i_k, i_k} (B_0)_{i_k, i_\ell}
        \\
        -(B_0)_{i_k, i_k} (B_0)_{i_k, i_\ell} & -\lambda_\ell^2 + (B_0)_{i_\ell, i_\ell}^2 - (B_0)_{i_k,i_\ell}^2
    \end{bmatrix},
    \]
    by \rref{eqn:difference-latent-precision} in \rref{prop:difference-in-contexts} and \rref{assumption:perfect-interventions}.
    For generic parameters, this submatrix has rank two, so the full matrix has rank at least two; i.e., $r_{k,\ell} \geq 2$.
\end{proof}

\subsection{Reducing to a known observational context}

The previous section explains how to reduce to the case with one intervention per latent node.
We may also reduce to the case with only one observational context: if more than one context is the observational context, they will all have the same inverse covariance matrix, so we may select only one of these contexts to serve as the observational context $k = 0$.
Next we show that, with one intervention per node, and one observational context, we can identify the observational context.
We show that the observational context has the ``sparsest" changes from the other contexts.
We formalize this intuition with the following definition.

\begin{defn}
The \emph{deviation score} of context $k$ is 
\[
r_k := \sum_{\ell \in [K] \setminus \{ k \}} r_{k,\ell},
\]
where $r_{k,\ell} := \rank(\Theta_k - \Theta_\ell)
for all~k, \ell \in [K]$.
%
\end{defn}

\begin{prop}\label{prop:identify-observational}
    Consider the setup in \rref{assumption:data-generating}.
Then $k^* \in \{ 0 \} \cup [K]$ is an observational context if and only if $k^* = \arg\min_{k \in \{ 0 \} \cup [K]} r_k$.
\end{prop}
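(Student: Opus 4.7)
The strategy is to show that the observational context uniquely minimizes the deviation score, i.e.\ that $r_0 < r_k$ strictly for every interventional context $k \in [K]$; both directions of the ``if and only if'' then follow immediately.

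The upper bound on $r_0$ is immediate from \rref{prop:key-identity}: for each $\ell \in [K]$, $r_{0,\ell} = \rank(\Theta_0 - \Theta_\ell) \in \{1,2\}$, so $r_0 \leq 2K$. For a matching lower bound on $r_k$ with $k \in [K]$, I would apply \rref{prop:difference-in-contexts} to write $\Theta_k - \Theta_\ell = V_{k\ell}\, D\, V_{k\ell}^\top$ with $D = \diag(1,-1,-1,1)$ and $V_{k\ell}$ having as its four columns $H^\top B_k^\top \be_{i_k}$, $H^\top B_0^\top \be_{i_k}$, $H^\top B_\ell^\top \be_{i_\ell}$, and $H^\top B_0^\top \be_{i_\ell}$. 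The first pair spans $W_k := \rowspan(\Theta_k - \Theta_0)$, of dimension $r_{0,k}$, while the last pair spans $W_\ell := \rowspan(\Theta_\ell - \Theta_0)$, of dimension $r_{0,\ell}$; by \rref{lemma:rowspan-inclusion}(a), $W_k \subseteq \langle \bh_i : i \in \Pa(i_k)\rangle$ and $W_\ell \subseteq \langle \bh_i : i \in \Pa(i_\ell)\rangle$.

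The central claim is that, generically, these two subspaces form a direct sum, so that $r_{k,\ell} = \rank(V_{k\ell} D V_{k\ell}^\top) = \dim(W_k + W_\ell) = r_{0,k} + r_{0,\ell}$. Granting this, substituting into the definition $r_k = \sum_{\ell \in [K] \setminus \{k\}} r_{k,\ell}$ and using $r_0 = \sum_{\ell \in [K]} r_{0,\ell}$ yields
\[
r_k - r_0 \;=\; \sum_{\ell \in [K] \setminus \{k\}} \big(r_{k,\ell} - r_{0,\ell}\big) \,-\, r_{0,k} \;=\; (K-2)\, r_{0,k},
\]
which is strictly positive for $K \geq 3$, since $r_{0,k} \geq 1$. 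Hence $0$ is the unique $\arg\min$, giving both directions of the biconditional.

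The main obstacle is justifying the direct-sum property $W_k \cap W_\ell = \{0\}$. Since $H$ has full row rank (so the rows $\bh_i$ are linearly independent) and the edge weights in $B_0$ are generic by \rref{assumption:data-generating}(b), this reduces to a combinatorial condition on the parent sets; however, it can genuinely fail in degenerate configurations (for instance, when $\Pa(i_\ell) \subseteq \Pa(i_k)$ and $W_\ell$ is already contained in $W_k$), in which case the contribution $r_{k,\ell} - r_{0,\ell}$ drops to zero. A careful argument must therefore verify that for every interventional $k$ there is at least one $\ell \neq k$ for which the sum is direct, ensuring $r_k - r_0 > 0$; the small-$K$ boundary and graphs containing many parent-set containments may require separate treatment.
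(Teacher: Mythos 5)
There is a genuine gap, and it sits exactly where you flagged it: the lower bound on $r_{k,\ell}$ for two interventional contexts $k,\ell \in [K]$ is the entire content of the proposition, and your proposal does not establish it. The direct-sum claim $W_k \cap W_\ell = \{0\}$, giving $r_{k,\ell} = r_{0,k}+r_{0,\ell}$, is both stronger than needed and, as you note, false in general (e.g.\ when $\Pa(i_\ell)\subseteq\Pa(i_k)$, or more simply whenever $|\Pa(i_k)\cup\Pa(i_\ell)| < r_{0,k}+r_{0,\ell}$). Your proposed repair — find, for each $k$, one $\ell$ with a direct sum — does not close the argument even on its own terms: under your accounting a single direct-sum term contributes exactly $r_{0,k}$ to $\sum_{\ell\neq k}(r_{k,\ell}-r_{0,\ell})$, which is then cancelled by the $-r_{0,k}$ correction, yielding only $r_k - r_0 \geq 0$. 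A second, related problem is bookkeeping: by summing $r_k$ over $\ell\in[K]\setminus\{k\}$ (excluding $\ell=0$) you compare a sum of $K-1$ terms against a sum of $K$ terms, which is what produces the factor $(K-2)$ that vanishes at $K=2$.

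The paper's proof needs much less. Including the observational context in every deviation score, the terms $r_{k,0}=r_{0,k}$ cancel, and the comparison reduces to showing, term by term, that $r_{k,\ell} \geq r_{0,\ell}+1$ for each $\ell\in[K]\setminus\{k\}$: by \rref{prop:key-identity}, $r_{0,\ell} = 1+\kron_{\pa(i_\ell)\neq\varnothing}$ exactly, and the paper asserts $r_{k,\ell}\geq 2+\kron_{\pa(i_\ell)\neq\varnothing}$ for generic parameters (in the spirit of the explicit $2\times 2$ minor computation in \rref{prop:identify-same-target}, using the four-term decomposition of \rref{prop:difference-in-contexts}). Each of the $K-1$ remaining comparisons then contributes at least $+1$, so $r_k - r_0 \geq K-1 > 0$ for all $K\geq 2$, with no case analysis on parent-set containments and no restriction to $K\geq 3$. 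If you want to salvage your route, replace the direct-sum claim by this weaker per-term inequality — the extra rank-one pair indexed by $i_k$ generically adds at least one direction outside $\langle \bh_i : i\in\Pa(i_\ell)\rangle$ — and include $r_{k,0}$ in the score so the comparison is term-by-term.
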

\begin{proof}
    Let $\source(\cG)$ denote the set of source nodes in $\cG$.
    By \rref{prop:key-identity}, $r_{0,\ell} = 1 + \kron_{\pa(i_\ell) \neq \varnothing}$ for all $\ell \in [K]$.
    Thus, $r_0 = 2K - |\source(\cG)|$.
    
    For $k \neq 0$, we have $r_{k,\ell} \geq 2 + \kron_{\pa(i_\ell) \neq \varnothing}$ for all $\ell \in [K] \setminus \{ k \}$.
    Thus, $2K$.
    Since $\cG$ must have at least one source node, we see that $r_k > r_0$ for all $k \neq 0$.
\end{proof}
\section{Hypothesis testing a necessary condition for model membership}\label{appendix:hypothesis-testing}

We define the null hypothesis
\begin{equation*}
    H_0: \rank(\Theta_k - \Theta_0) \leq 2
    \qquad
    \forall~k \in [K]
\end{equation*}

\rref{assumption:data-generating}(b) implies that $H_0$ holds, by
\rref{prop:key-identity}.
The null hypothesis $H_0$ is a necessary condition for membership of $(\Theta_0, \Theta_1, \ldots, \Theta_K)$ in the model defined by \rref{assumption:data-generating}.
However, $H_0$ is not a sufficient condition for model membership: we may have $\rank(\Theta_k - \Theta_0) \leq 2$ for all $k \in [K]$, despite some interventions not targeting single nodes.
For example, if $\cG$ is the empty graph, and all interventions have two targets, then $H_0$ holds.
These cases may be ruled out with other conditions implied by model membership.
We leave a membership test for our model to future work.
Here, we focus on developing a test for $H_0$.

Prior work on testing latent variables models \cite{drton2007algebraic,squires2022causal} use such rank constraints.
To test whether a matrix $M \in \bbR^{p \times p}$ is rank $k$, one can test that all minors of size $k+1$ vanish; i.e., the collection of hypotheses
\[
H_{A,B}: t_{A, B} = 0
\qquad
A, B \subseteq [p], |A| = |B| = k+1
\]
where $t_{A,B} := \det(M_{A,B})$.

For example, \citet{squires2022causal} use this to test whether certain submatrices of a covariance matrix are rank one, as follows.
Let $\hatM$ be the sample covariance matrix computed from $n$ samples.
If the underlying distribution is multivariate Gaussian, it is well-known that $\hatM$ follows a Wishart distribution.
Now, for each pair of subsets $A, B$, compute the empirical minor $\hatt_{A,B} := \det(\hatM_{A, B})$.
Then, compute an estimate $\widehat{\Var}(t_{A,B})$.
Such an estimate can be obtained by evaluating the expression for $\Var(t_{A,B})$ in \citet{drton2008moments}, which characterizes the moments of minors for Wishart matrices.
Given this estimate, compute the z-score $z_{A,B} = \hatt_{A,B} / \sqrt{\widehat{\Var}(t_{A,B})}$.
By typical asymptotic theory, $z_{A,B} \to \cN(0,1)$ as $n \to \infty$, so we can use the z-score to compute an asymptotically correct p-value.
Finally, the p-values for all pairs of subsets $A, B$ can be aggregated into a single p-value using a multiple hypothesis testing procedure such as Bonferroni correction or Sidak adjustment.

In principle, a similar procedure can be performed to test our null hypothesis $H_0$.
However, under a Gaussianity assumption, $\Theta_k$ and $\Theta_0$ follow \textit{inverse} Wishart distributions, rather than Wishart distribution.
This would require expressions for the moments of minors for \textit{inverse} Wishart matrices.
We leave such a hypothesis test, which could give guarantees on false discovery rate control, to future work.
Instead, we demonstrate the performance of a simple hypothesis test based on the singular values of the matrix $\hTheta_k - \hTheta_0$.
Let
\[
\rho_2(M) := 
\left( \sigma_1^2(M) + \sigma_2^2(M) \right) / \left( \sum\nolimits_{i=1}^p \sigma_i^2(M) \right)
\]
If $\rank(M) \leq 2$, then $\rho_2(M) = 1$, otherwise, $\rho_2(M) < 1$.
Thus, we may test $H_0$ by checking where $\rho_2(\hTheta_k - \hTheta) > \tau$ for some threshold $\tau$ near 1.

We demonstrate the performance of this procedure for testing model membership.
We generate 500 random models following \rref{assumption:data-generating}, using the same hyperparameters as in \rref{sec:experimental}.
These models satisfy $H_0$.
We also generate 500 random models where the interventions target two nodes instead of one.
For each $k$, we pick intervention targets $I_k \subset [d]$ with $|I_k| = 2$, uniformly at random among all subsets of size two.
We hold all other hyperparameters of the simulation fixed.
We consider only $n = 2500$ samples, the smallest sample size used in \rref{sec:experimental}, and vary the threshold $\tau$ from 0.97 to 0.999, linearly spaced over 20 values.
The singular value based test is able to determine model membership at a rate well above random guessing, see \rref{fig:rank-test-roc}.

\begin{figure}
    \centering
    \includegraphics[width=0.4\textwidth]{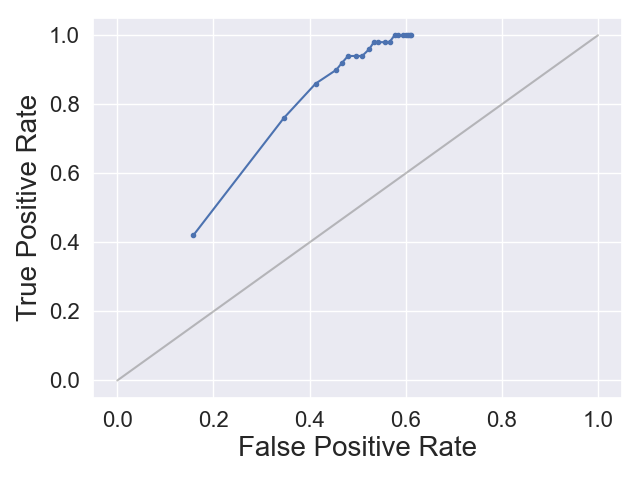}
    \caption{\textbf{Performance of our singular-value-based hypothesis test for $H_0$, the hypothesis that all precision matrix differences are of rank at most two.} The gray line indicates the performance of randomly guessing. Our test performs thresholding on the value $\rho_2(\hTheta_k - \hTheta_0)$. Here, we vary the threshold from 0.97 to 0.999.}
    \label{fig:rank-test-roc}
\end{figure}

\section{Computational complexity}\label{appendix:computational-complexity}

\rref{algm:iterative-difference-projection} takes as input $K$ precision matrices in $\bbR^{p \times p}$.
If these are each computed from at most $n$ samples, then the total cost is $\cO(K p^3 + K n p^2)$.
\rref{algm:identify-partial-order} runs for $K$ rounds and computes at most $2 K$ projections per round.
Each projection costs $\cO(p^3)$, so the cost of this step is $\cO(K^2 p^3)$.
In the remainder of \rref{algm:iterative-difference-projection}, we perform $\cO(K)$ matrix multiplications and Cholesky decompositions.
Each matrix multiplication costs $\cO(p^2)$ and each Cholesky decomposition costs $\cO(p^3)$.
The other operations (e.g. selecting rows) in \rref{algm:iterative-difference-projection} are negligible.
Therefore, the overall runtime of \rref{algm:iterative-difference-projection} is dominated by \rref{algm:identify-partial-order}, with a total cost of $\cO(K^2 p^3)$.




\section{Proofs}\label{appendix:proofs}

\subsection[Proof of non-identifiability for one missing intervention]{Proof of \rref{prop:non-identifiability}}

\begin{proof}
For (i), it suffices to find $\hatH$ and $\{ \hatB_k \}_{k=1}^K$ such that $\hatB_k \hatH = B_k H$ for all $k \in [K]$, and such that there is no $\sigma \in S(\cG)$ satisfying $\hatB_k = P_\sigma B_k P_\sigma\T$, $H = P_\sigma H$, by \rref{eqn:theta-k}.
Suppose $i_k \neq 1$ for any $k$.
Let
\[
\hatB_k =
\begin{bmatrix}
\text{-----}~~\be_1~~\text{-----}
\\
(B_k)_{2:d,1:d}
\end{bmatrix},
\quad\quad
\hatH =
\begin{bmatrix}
(B_0)_1^\top H
\\
H_{2:d,1:p}
\end{bmatrix}.
\]
Then, for all $k$, we have
$\hatB_k \hatH = B_k H$.
Suppose that $Z_1$ has at least one parent, i.e., $(B_0)_{1j} \neq 0$ for at least one $j > 1$.
Then the first row of $\hatH$ is a nonzero combination of at least two rows of $H$.
Hence it is not equal to a single row of $H$, since $H$ is full rank.
Thus, $\hatH$ is not equal to $H$ up to any permutation of rows.

For (ii), observe that for the stated example, the partial order $\prec$ given by $\hatB_0$ differs in general from the partial order $\prec_\cG$, since $Z_1$ has no predecessors in $\prec$.
\end{proof}
\section{Non-identifiability for imperfect interventions}\label{appendix:soft-intervention-counterexample}

\subsection{Parameter non-identifiability}
In this section, we show that \rref{assumption:perfect-interventions} is necessary to identify $H$.
Let
\begin{align*}
    B_0 &= \begin{bmatrix}
    (B_0)_{11} & (B_0)_{12}
    \\
    0 & (B_0)_{22}
    \end{bmatrix},
    \quad\quad
    B_1 = \begin{bmatrix}
    (B_1)_{11} & (B_1)_{12}
    \\
    0 & (B_0)_{22}
    \end{bmatrix},
    \\
    B_2 &= \begin{bmatrix}
    (B_0)_{11} & (B_0)_{12}
    \\
    0 & (B_2)_{22}
    \end{bmatrix},
    \quad\quad
    H = \begin{bmatrix}
    1 
    & 
    H_{12} 
    \\
    0 & 1
    \end{bmatrix}.
\end{align*}
Then, for any value $\hatH_{12} \in \bbR$, we have $B_k H = \hatB_k \hatH$ for all $k$, where
\begin{align*}
    \hatB_0 &= \begin{bmatrix}
    (B_0)_{11} & (B_0)_{12} + (B_0)_{11} H_{12} - (B_0)_{11} \hatH_{12}
    \\
    0 & (B_0)_{22}
    \end{bmatrix},
    \qquad 
    \hatB_1 = \begin{bmatrix}
    (B_1)_{11} & (B_1)_{12} + (B_1)_{11} H_{12} - (B_1)_{11} \hatH_{12}
    \\
    0 & (B_0)_{22}
    \end{bmatrix},
    \\
    \hatB_2 &= \begin{bmatrix}
    (B_0)_{11} & (B_0)_{12} + (B_0)_{11} H_{12} - (B_0)_{11} \hatH_{12}
    \\
    0 & (B_2)_{22}
    \end{bmatrix},
    \qquad
    \hatH = \begin{bmatrix}
    1 
    &
    \hatH_{12}
    \\
    0 & 1
    \end{bmatrix}.
\end{align*}

\subsection{Graph non-identifiability}
Suppose that $\cG$ has edges $2 \to 1$, $3 \to 2$, and $3 \to 1$.
Then the weight matrices have the form
\begin{align*}
    B_0 &= \begin{bmatrix}
    (B_0)_{11} & (B_0)_{12} & (B_0)_{13}
    \\
    0 & (B_0)_{22} & (B_0)_{23}
    \\
    0 & 0 & (B_0)_{33}
    \end{bmatrix},
    \quad\quad
    B_1 = \begin{bmatrix}
    (B_0)_{11} & (B_0)_{12} & (B_0)_{13}
    \\
    0 & (B_1)_{22} & (B_1)_{23}
    \\
    0 & 0 & (B_0)_{33}
    \end{bmatrix},
    \\
    B_2 &= \begin{bmatrix}
    (B_0)_{11} & (B_0)_{12} & (B_0)_{13}
    \\
    0 & (B_2)_{22} & (B_2)_{23}
    \\
    0 & 0 & (B_0)_{33}
    \end{bmatrix},
    \quad\quad
    B_3 = \begin{bmatrix}
    (B_0)_{11} & (B_0)_{12} & (B_0)_{13}
    \\
    0 & (B_0)_{22} & (B_0)_{23}
    \\
    0 & 0 & (B_3)_{33}
    \end{bmatrix},
    \\
    H &= \begin{bmatrix}
    1 & H_{12} & H_{13}
    \\
    0 & 1 & H_{23}
    \\
    0 & 0 & 1
    \end{bmatrix}.
\end{align*}
Let $\hatcG$ be the DAG with edges $2 \to 1$ and $3 \to 2$.
We show that there exist $\hatH$  and matrices $\hatB_k$, following the support of $\hatcG$, such that $B_k H = \hatB_k \hatH$ for all $k$. 
Note that $\cG$ and $\hatcG$ have the same transitive closure.
Let $\hatH_{12} \in \bbR$ and let $\hatH_{13}, \hatH_{23}$ be a solution to the system of equations:
\begin{align*}
    \begin{bmatrix}
    (B_0)_{11} & (B_0)_{12} + (B_0)_{11} (H_{12} - \hatH_{12})
    \\
    (B_1)_{11} & (B_1)_{12} + (B_1)_{11} (H_{12} - \hatH_{12})
    \end{bmatrix}
    \begin{bmatrix}
    \hatH_{13}
    \\
    \hatH_{23}
    \end{bmatrix}
    =
    \begin{bmatrix}
    (B_0)_{11} H_{13} + (B_0)_{12} H_{23} + (B_0)_{13}
    \\
    (B_1)_{11} H_{13} + (B_1)_{12} H_{23} + (B_1)_{13}
    \end{bmatrix}
\end{align*}
This system generically has a solution, since for generic parameters the matrix on the left hand side is rank two. Then a solution, with all matrices $\hatB_k$ have vanishing entry $(1,3)$, is as follows.
\begin{align*}
    \hatB_0 &=
    \begin{bmatrix}
    (B_0)_{11} & (B_0)_{12} + (B_0)_{11} (H_{12} - \hatH_{12}) & 0
    \\
    0 & (B_0)_{22} & (B_0)_{23} + (B_0)_{22} (H_{23} - \hatH_{23})
    \\
    0 & 0 & (B_0)_{33}
    \end{bmatrix}
    \\
    \hatB_1 &=
    \begin{bmatrix}
    (B_1)_{11} & (B_1)_{12} + (B_1)_{11} (H_{12} - \hatH_{12}) & 0
    \\
    0 & (B_0)_{22} & (B_0)_{23} + (B_0)_{22} (H_{23} - \hatH_{23})
    \\
    0 & 0 & (B_0)_{33}
    \end{bmatrix}
    \\
    \hatB_2 &=
    \begin{bmatrix}
    (B_0)_{11} & (B_0)_{12} + (B_0)_{11} (H_{12} - \hatH_{12}) & 0
    \\
    0 & (B_2)_{22} & (B_2)_{23} + (B_2)_{22} (H_{23} - \hatH_{23})
    \\
    0 & 0 & (B_0)_{33}
    \end{bmatrix}
    \\
    \hatB_3 &=
    \begin{bmatrix}
    (B_0)_{11} & (B_0)_{12} + (B_0)_{11} (H_{12} - \hatH_{12}) & 0
    \\
    0 & (B_0)_{22} & (B_0)_{23} + (B_0)_{22} (H_{23} - \hatH_{23})
    \\
    0 & 0 & (B_3)_{33}
    \end{bmatrix}
\end{align*}

\section{Comparison to \texorpdfstring{\citet{liu2022weight}}{Liu et al., 2022}}\label{appendix:weight-variant-comparison}

We compare \rref{thm:main_id_non_constructive} to \citet{liu2022weight}.
We restate their main result for convenience and notation.

\begin{thm*}[Theorem 4.1 of \citet{liu2022weight}]
\, Let $Z = A_k Z + \varepsilon_k$
and $X = g(Z) + \varepsilon_x$. Let $\boldeta_k$ be the sufficient statistic for the distribution of $Z$ in environment $k$.
That is, $\boldeta_k = \VEC(\tTheta_k)$, where $\tTheta_k$ denotes the inverse covariance matrix of $Z$ in the $k$th setting and $\VEC$ denotes the vectorization of a matrix.
We assume that $\VEC$ ignores zeros and repetitions.
Assume that
\begin{itemize}
    \item[(i)] $\{ x \in \cX \mid \varphi_{\varepsilon_x}(x) = 0 \}$ has measure zero, where $\varphi_{\varepsilon_x}$ is the characteristic function for $\varepsilon_x$,
    
    \item[(ii)] $g$ is bijective, and
    
    \item[(iii)] There exists $K + 1$ environments such that the following matrix is invertible:
    \[
    L = \begin{bmatrix}
    	| & | & & |
    	\\
    	\boldeta_1 - \boldeta_0 & \boldeta_2 - \boldeta_0 & \ldots & \boldeta_K - \boldeta_0
    	\\
    	| & | & & |
    \end{bmatrix}.
    \]
\end{itemize}
Then we can recover $g$ up to permutation and scaling.
\end{thm*}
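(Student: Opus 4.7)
My plan is to reduce identifiability of $g$ to a pointwise functional equation for $h := \tilde g^{-1}\circ g$, and then to exploit the quadratic sufficient statistic of the Gaussian latent distribution to force $h$ to be a coordinate permutation composed with a diagonal scaling.

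First, assumption (i) says $\varphi_{\varepsilon_x}$ is nonzero almost everywhere, so the factorization $\varphi_X = \varphi_{g(Z)}\cdot\varphi_{\varepsilon_x}$ allows recovery of the law of $g(Z)$ in each environment $k$ by pointwise division in Fourier space. Any second parameter set $(\tilde g,\{\tilde{\boldeta}_k\})$ matching the observed distribution must therefore satisfy $g_* p_k = \tilde g_*\tilde p_k$ for every $k$. The bijectivity in (ii) then lets me set $h := \tilde g^{-1}\circ g$; change of variables gives the pointwise identity $p_k(z) = \tilde p_k(h(z))\,|\det J_h(z)|$.

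Next, I would use the Gaussian exponential-family form $\log p_k(z) = -\tfrac12 T(z)^\top \boldeta_k - \psi(\boldeta_k)$ with $T(z) := \VEC(zz^\top)$ under the same convention on zeros and repetitions as $\boldeta_k$. Subtracting the $k=0$ log-density identity from each $k \in [K]$ identity cancels the Jacobian term and produces
\[
T(z)^\top(\boldeta_k - \boldeta_0) \;=\; T(h(z))^\top(\tilde{\boldeta}_k - \tilde{\boldeta}_0) + \beta_k,
\]
for constants $\beta_k$ coming from the log-partition functions. Stacking the $K$ identities yields $L^\top T(z) = \tilde L^\top T(h(z)) + \beta$, and assumption (iii) lets me invert $L$ to obtain a pointwise affine relation $T(z) = M\,T(h(z)) + c$ for fixed $M := L^{-\top}\tilde L^\top$ and vector $c$.

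The main obstacle is the rigidity step: concluding from $T(z) = M T(h(z)) + c$ that $h(z) = PDz$ for a permutation $P$ and diagonal $D$. Since each entry of $T$ is a degree-two monomial, the identity forces every product $h_i(z)h_j(z)$ to be a quadratic polynomial in $z$; applying $\nabla^2$ to $h_i(z)^2$ and using smoothness of the bijection $h$ would force each $h_i$ to be affine. One then needs to rule out nontrivial coordinate mixing: if some row of the linear part of $h$ has two nonzero entries, the cross terms produced by $h_i(z)^2$ must be reproduced through $M$, and matching them against the image of $T$ across all $K$ environments without contradicting invertibility of $M$ or the matching of pure-square entries forces each $h_i$ to depend on a single coordinate. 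Carrying out this combinatorial rigidity over all coordinates, and absorbing $c$ and the remaining sign and scale freedom into $D$, is where the bulk of the technical work sits; it is also the step whose reliance on all $d(d+1)/2$ components of $\boldeta_k$ translates into the restriction on the number of edges in the latent graph that the present paper ultimately removes.
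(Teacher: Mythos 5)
This statement is not something the paper proves: it is Theorem~4.1 of \citet{liu2022weight}, restated verbatim (up to notation) in the appendix solely so that the paper can compare assumptions. The only argument the paper makes about it is downstream of the theorem — checking that conditions (i) and (ii) hold in the paper's own setting and that condition (iii) forces the latent graph to have at most $d$ edges. So there is no proof in the paper to compare your attempt against; you have reconstructed a proof of an external result.

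On its own terms, your outline follows the standard iVAE-style identifiability argument (deconvolution via the characteristic function, change of variables through $h=\tilde g^{-1}\circ g$, subtraction of the reference environment to cancel the Jacobian and base measure, inversion of $L$ to get $T(z)=M\,T(h(z))+c$), and that is indeed the right skeleton for results of this type. The genuine gap is exactly where you locate it, but your proposed fix does not work as stated: knowing that $h_i(z)^2$ is a quadratic polynomial and that $h$ is smooth does \emph{not} force $h_i$ to be affine (e.g.\ $h_i(z)=\sqrt{z_1^2+z_2^2+1}$ is smooth with quadratic square). To close this you need the full system of constraints — each product $h_ih_j$ is a quadratic $r_{ij}$ with $r_{ij}^2=q_iq_j$ where $q_i:=h_i^2$, and a factorization argument in the polynomial ring (ruling out irreducible $q_i$ via injectivity of $h$, then forcing $q_i$ to be a positive multiple of the square of a linear form) — before you can conclude each $h_i$ is affine, and only then does the combinatorial step eliminating coordinate mixing begin. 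As written, the rigidity step, which is the entire content of the theorem beyond bookkeeping, is asserted rather than proved.
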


First, we show that (i) and (ii) hold in our setting.
Our assumption that $X = GZ$ for $G$ invertible guarantees (ii).
Our assumption that $X$ is a deterministic function of $Z$ corresponds to taking $\varepsilon_x \sim \delta_0$, i.e., $\varepsilon_x = 0$ with probability one.
The characteristic function is $\varphi_\varepsilon(t) = 1$, thus satisfying (i).

We now show that (iii) is only satisfied in our setting when the number of edges in the latent graph is at most $d$.
The vector $\VEC(\tTheta_k - \tTheta_0)$ is of length $d + |E|$, where $|E|$ is the number of edges in the graphical model defined by $\Theta_0$.
To be invertible, $L$ must be a square matrix, and hence we require $K \geq d + |E|$. 
If $|E| > d$, then $K > 2d$, and we must have an intervention target $i$ such that $i = i_k$ for at least three values of $k$.
We have $\tTheta_k = B_k^\top B_k$, and thus 
\[
\tTheta_k - \tTheta_1 
= 
( \lambda_k \be_{i_k})^{\otimes 2} 
- 
(B_0\T \be_{i_k})^{\otimes 2}
-
( \lambda_\ell \be_{i_1})^{\otimes 2} 
+ 
(B_0\T \be_{i_1})^{\otimes 2}
\]
Given $k_1$, $k_2$, and $k_3$ such that $i_{k_1} = i_{k_2} = i_{k_3} = i$, we see that $\boldeta_{k_1}, \boldeta_{k_2}$ and $\boldeta_{k_3}$ differ only at position $(i, i)$.
The space of vectors that differ in at most one entry is at most two-dimensional.
Thus $\boldeta_{k_1}$, $\boldeta_{k_2}$, and $\boldeta_{k_3}$ are not linearly independent, and $L$ is not invertible.

\begin{algorithm}[t]
	\caption{\IdentifyPartialOrderNoisy}\label{algm:identify-partial-order-noisy}
	\begin{algorithmic}[1]
	    \STATE \textbf{Hyperparameters:} $\gamma$
		\STATE \textbf{Input:} Precision matrices $(\Theta_0, \Theta_1, \ldots, \Theta_K)$
		\STATE \textbf{Output:} Factor $\hatQ$, partial order $\prec$
		\STATE Let $\cI_0 = \{ \}$, $\hatQ = \bzero_{d \times d}$
		\FOR{$t = 1, \ldots K$}
		    \STATE Let $M_k = \proj_{\hatQ_{t-1}^\perp} (\Theta_k - \Theta_0)$ for each $k \not\in \cI_{t-1}$
    		\STATE Let $\rho_k = \sigma_1^2(M_k) / (\sum_{i=1}^p \sigma_i^2(M_k))$ for each $k \not\in \cI_{t-1}$
    		\STATE Pick $k \in \argmax_{k \not\in \cI_{t-1}} \rho_k$
    		\label{line:dimension-test-partial-order-noisy}
    		\STATE Let $\hatbq_k, \cA = \IdentifyAncestorsNoisy(\Theta_k, \Theta_0, \{ \hatbq_i \}_{i \in \cI_{t-1}} ; \gamma)$
    		\STATE Add $a' \succ k$ for any $a' \succeq a$, $a \in \cA$
    		\STATE Let $\cI_t = \cI_{t-1} \cup \{ k \}$, $\hatQ_t = [\hatbq_k; \hatQ_{t-1}]$
		\ENDFOR
		\STATE \textbf{return} $\hatQ$, $\prec$
	\end{algorithmic}
\end{algorithm}
\begin{algorithm}[htbp]
	\caption{\IdentifyAncestorsNoisy}\label{algm:identify-ancestors-noisy}
	\begin{algorithmic}[1]
	    \STATE \textbf{Hyperparameters:} $\gamma$
		\STATE \textbf{Input:} $\Theta_k$, $\Theta_0$, $\{ \hatbq_i \}_{i \in \cI}$
		\STATE \textbf{Output:} Vector $\hatbq_k$, ancestor set $\cA$
		\STATE Let $\cA = \cI$
		\FOR{$i \in \cI$}
		    \STATE Let $W_{\neg i} = [\hatbq_i : j \in \cI \setminus \{ i \}]$
		    \STATE Let $M_{\neg i} = \proj_{W_i^\perp} (\Theta_k - \Theta_0)$
    		\STATE Let $\rho_k = \sigma_1^2(M_k) / (\sum_{i=1}^p \sigma_i^2(M_k))$ for each $k \not\in \cI_{t-1}$
    		\label{line:dimension-test-ancestors-noisy}
    		\STATE If $\rho_k \geq \gamma$, let $\cA = \cA \setminus \{ i \}$
    		\label{line:pick-top-singular-vector-ancestors-noisy}
		\ENDFOR
		\STATE Let $W = [\hatbq_a : a \in \cA ]$
		\STATE Let $M = \proj_{W^\perp} (\Theta_k - \Theta_0)$
		\label{line:m-definition-ancestors-noisy}
		\STATE Let $\hatbq_k$ be the (normalized) leading left singular vector of $M$
		\STATE \textbf{return} $\hatbq_k$, $\cA$
	\end{algorithmic}
\end{algorithm}

\section{Finite-sample algorithms}\label{appendix:noisy-algorithm}

\textbf{Matrix rank scoring.} 
In \rref{line:dimension-test-ancestors} of \rref{algm:identify-ancestors} and \rref{line:dimension-test-partial-order} of \rref{algm:identify-partial-order}, we check whether a subspace is rank one.
In the finite-sample setting, we represent these subspaces by matrices and measure how close the matrices are to rank one.

We use the score $\rho(M) := \sigma_1^2(M) / \left( \sum_{i=1}^p \sigma_i^2(M) \right)$, where $\sigma_i(M)$ is the $i$th largest singular value of $M$; 
$\rho(M)$ can be interpreted as the percentage of \textit{spectral energy} associated to the largest singular value of $M$.
Using this score to choose the next element of the partial order does not require hyperparameters, see \rref{line:dimension-test-partial-order-noisy} of \rref{algm:identify-partial-order-noisy}.
In contrast, using this score to prune the set of ancestors requires a hyperparameter to determine whether a matrix is close enough to rank one, see $\gamma$ in  \rref{line:dimension-test-ancestors-noisy} of \rref{algm:identify-ancestors-noisy}.
Larger values of $\gamma$ (e.g., 0.999) result in a more conservative algorithm and will output a denser latent graph, while smaller values of $\gamma$ (e.g., 0.8) result in more aggressive pruning of the latent graph.

\textbf{Picking $\bq_k$.}
The matrix $M$ in \rref{line:m-definition-ancestors-noisy} of \rref{algm:identify-ancestors-noisy} is not guaranteed to be rank one in the finite-sample case. 
We instead select the leading left singular vector of $M$. 

\textbf{Picking nonzero rows.}
In the finite-sample case, the matrix $\hatD_k$ will not usually have only one nonzero row, see \rref{line:pick-nonzero-row} of \rref{algm:iterative-difference-projection-noisy}.
We estimate the intervention target $i_k$ by picking the row of largest norm.
Since we assume that $i_k$ is distinct for distinct $k$, we maintain a set $\cN$ of candidate intervention targets and do not allow replicates.

\begin{algorithm}[tbp]
	\caption{\IterativeDifferenceProjectionNoisy}\label{algm:iterative-difference-projection-noisy}
	\begin{algorithmic}[1]
	    \STATE \textbf{Hyperparameters:} $\gamma$
		\STATE \textbf{Input:} Precision matrices $(\Theta_0, \Theta_1, \ldots, \Theta_K)$
		\STATE \textbf{Output:} $\hatH$, $(\hatB_0, \hatB_1, \ldots, \hatB_K)$
		\STATE Let $d = K$
		\STATE Let $\hatQ, \prec = \IdentifyPartialOrderNoisy( (\Theta_0, \Theta_1, \ldots, \Theta_K) ; \gamma)$
		\STATE Let $\hatC_k = \cholesky( (\hatQ^\dagger)^\top \Theta_k \hatQ^\dagger)$ for $k = 0, \ldots, K$
		\STATE Let $\cN = [p]$
		\STATE Let $\hatR = I_d$
		\FOR{$k = 1, \ldots, K$}
    		\STATE Let $\hatD_k = \hatC_k - \hatC_0$
    		\STATE Pick $\hati_k \in \argmax_{i \in \cN} \| (\hatD_k)_i \|_2$ 
    		\label{line:pick-nonzero-row-noisy}
    		\STATE Let $\cN = \cN \setminus \{ \hati_k \}$
    		\STATE Let $\hatR_{\hati_k} = (\hatD_k)_{\hati_k} + (\hatC_0)_{\hati_k}$
		\ENDFOR
		\STATE Let $\hatH' = \hatR \hatQ$
		\STATE Let $\hatH = \Lambda \hatH'$, for $\Lambda$ diagonal such that $\hatH$ satisfies the conditions on $H$ in \rref{assumption:data-generating}(c)
		\STATE Let $\hatB_0 = \cholesky( (\hatH^\dagger)^\top \Theta_0 \hatH^\dagger)$ 
		\STATE Let $\hatB_k = \hatB_0 + \be_{\hati_k} \left( |\hLambda_{\hati_k,\hati_k}| \be_{\hati_k} - \hatB_0^\top \be_{\hati_k} \right)^\top$ for $k = 1, \ldots, K$
		\label{line:compute-bk-noisy}
		\STATE \textbf{return} $\hatH$, $(\hatB_0, \hatB_1, \ldots, \hatB_K)$
	\end{algorithmic}
\end{algorithm}

\section{Code and Data}\label{appendix:code-and-data}

Our code can be found at 
\begin{center}
    \small
    \url{https://github.com/csquires/linear-causal-disentanglement-via-interventions}.
\end{center}

\subsection[Optimizing over permutations]{Optimizing over $S(\cG)$}\label{appendix:best-matching}

Consider a partial order $\prec_\cG$, a set of true intervention targets $i_1, \ldots, i_K$, and a set of estimated intervention targets $\hati_1, \ldots, \hati_K$. 
The integer linear program \eqref{eqn:ilp} computes the topological order $\pi^*$ consistent with $\prec_\cG$ that maximizes the number of agreements between $i_k$ and $\hati_k$.
The topological order $\pi^*$ can be recovered by letting $\pi^*(i) = j$ for the unique $j$ such that $A_{ij} = 1$.
The first two lines of constraints ensure this uniqueness, and that $\pi^*(i) \neq \pi^*(i')$ for $i \neq i'$.

The final line of constraints ensures that $\pi^*$ is consistent with $\prec_\cG$.
If $\pi^*$ is not consistent with $\prec_\cG$, then there exists $i, i', j$ such that $i \prec_\cG i'$, $A_{i'j} = 1$, and $A_{ij'} = 0$ for all $j' \leq j$, which violates the constraint $\sum_{j' \leq j} \left( A_{ij'} - A_{i'j'} \right) \geq 0$.

\begin{equation}\label{eqn:ilp}
    \begin{aligned}
        \max_{A_{ij} \in \{ 0, 1 \}^{d \times d}} &\sum_{k=1}^K A_{i_k \hati_k}
        \\
        \st~&\sum_{i=1}^d A_{ij} = 1 \quad\forall~j \in [d]
        \\
        &\sum_{j=1}^d A_{ij} = 1 \quad\forall~i \in [d]
        \\
        &\sum_{j' \leq j} \left( A_{ij'} - A_{i'j'} \right) \geq 0 \quad\forall~i \prec_\cG i', \forall~j \in [d]
    \end{aligned}
\end{equation}

\subsection{Additional information on real data}\label{appendix:real-data}

The scRNA-seq dataset of \citet{ursu2022massively} is available at \url{https://www.ncbi.nlm.nih.gov/geo/query/acc.cgi?acc=GSE161824}.
The TCGA dataset of \citet{liu2018integrated} is available at \url{https://gdc-hub.s3.us-east-1.amazonaws.com/download/TCGA-LUAD.survival.tsv} and \url{https://gdc-hub.s3.us-east-1.amazonaws.com/download/TCGA-LUAD.htseq_fpkm.tsv.gz}.

\textbf{Processing.} We use \texttt{EnrichR} \citep{kuleshov2016enrichr} to pick the $p = 100$ and $p = 83$ most variable genes in the proliferation regulation gene set from the Gene Ontology.
We use the values from the processed dataset from \citet{ursu2022massively}; the only additional processing removes cells which were assigned to synonymous mutations (i.e., those that do not change any amino acids and hence do not have structural effects).

\textbf{Semi-synthetic analysis.} Our algorithm recovers the problem parameters for the semi-synthetic data, 
see \rref{fig:semisynthetic-results}.

\textbf{Comparison to TCGA dataset.} Our survival analysis is performed using the Cox proportional hazards model from the \texttt{lifelines} package \citep{davidson2019lifelines}.
To correct for multiple hypothesis testing, we use the Benjamini-Hochberg procedure from the \texttt{statsmodels} package \citep{seabold2010statsmodels}.

\begin{figure*}
    \centering
    \begin{subfigure}{.32\textwidth}
        \includegraphics[width=\textwidth]{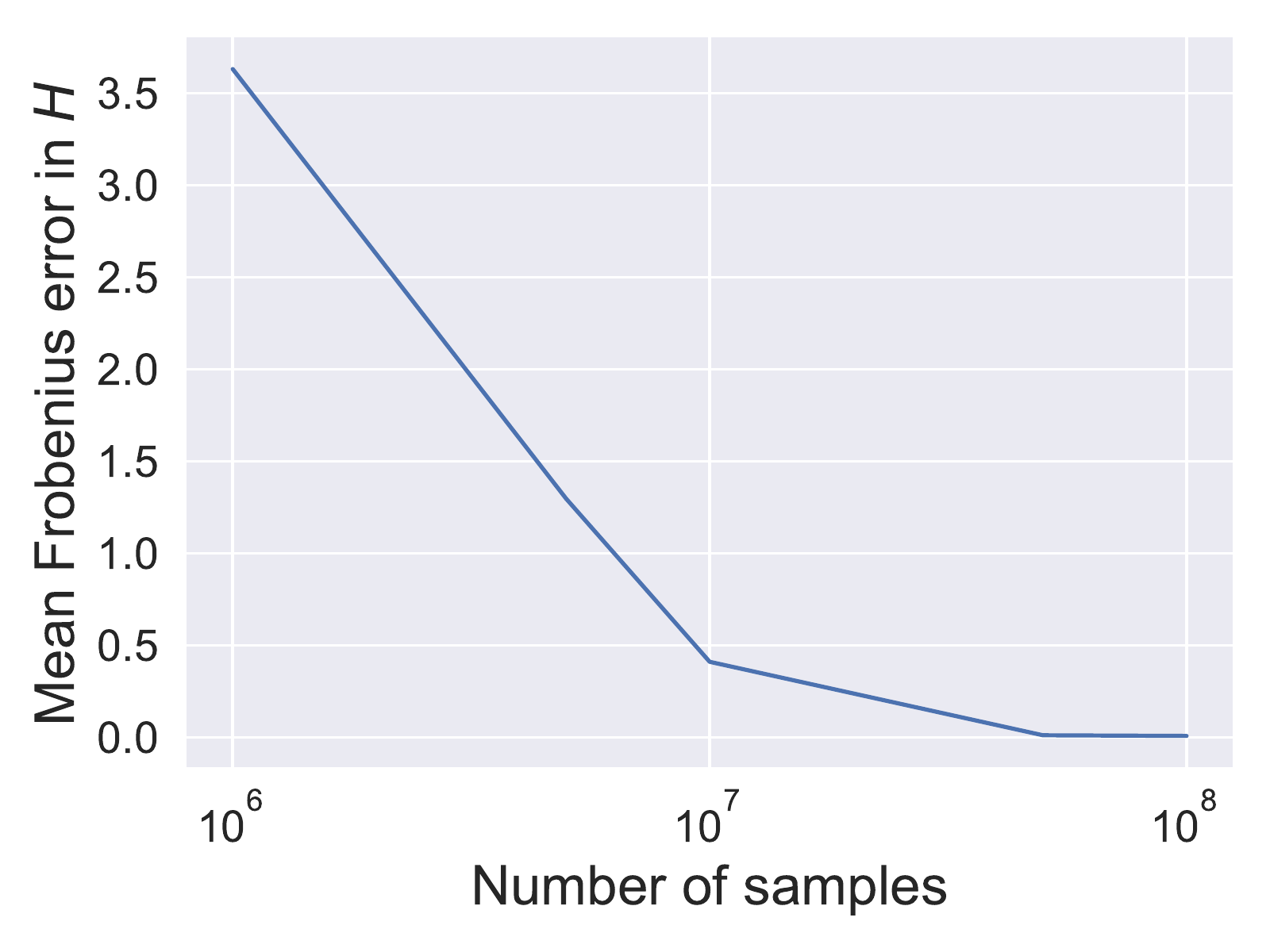}
        \caption{Error in estimating $H$}
    \end{subfigure}
    ~
    \begin{subfigure}{.32\textwidth}
        \includegraphics[width=\textwidth]{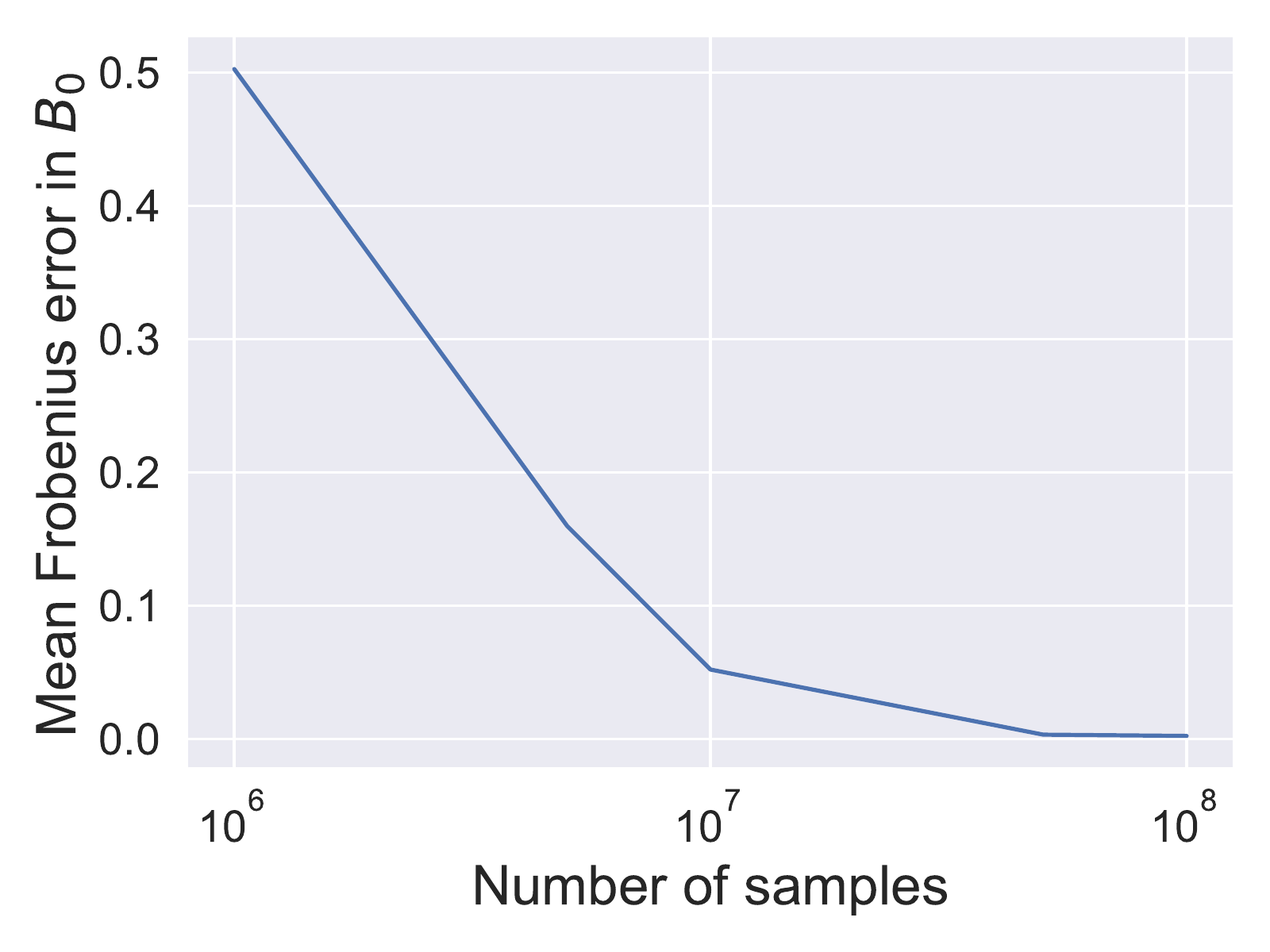}
        \caption{Error in estimating $B_0$}
    \end{subfigure}
    ~
    \begin{subfigure}{.315\textwidth}
        \includegraphics[width=\textwidth]{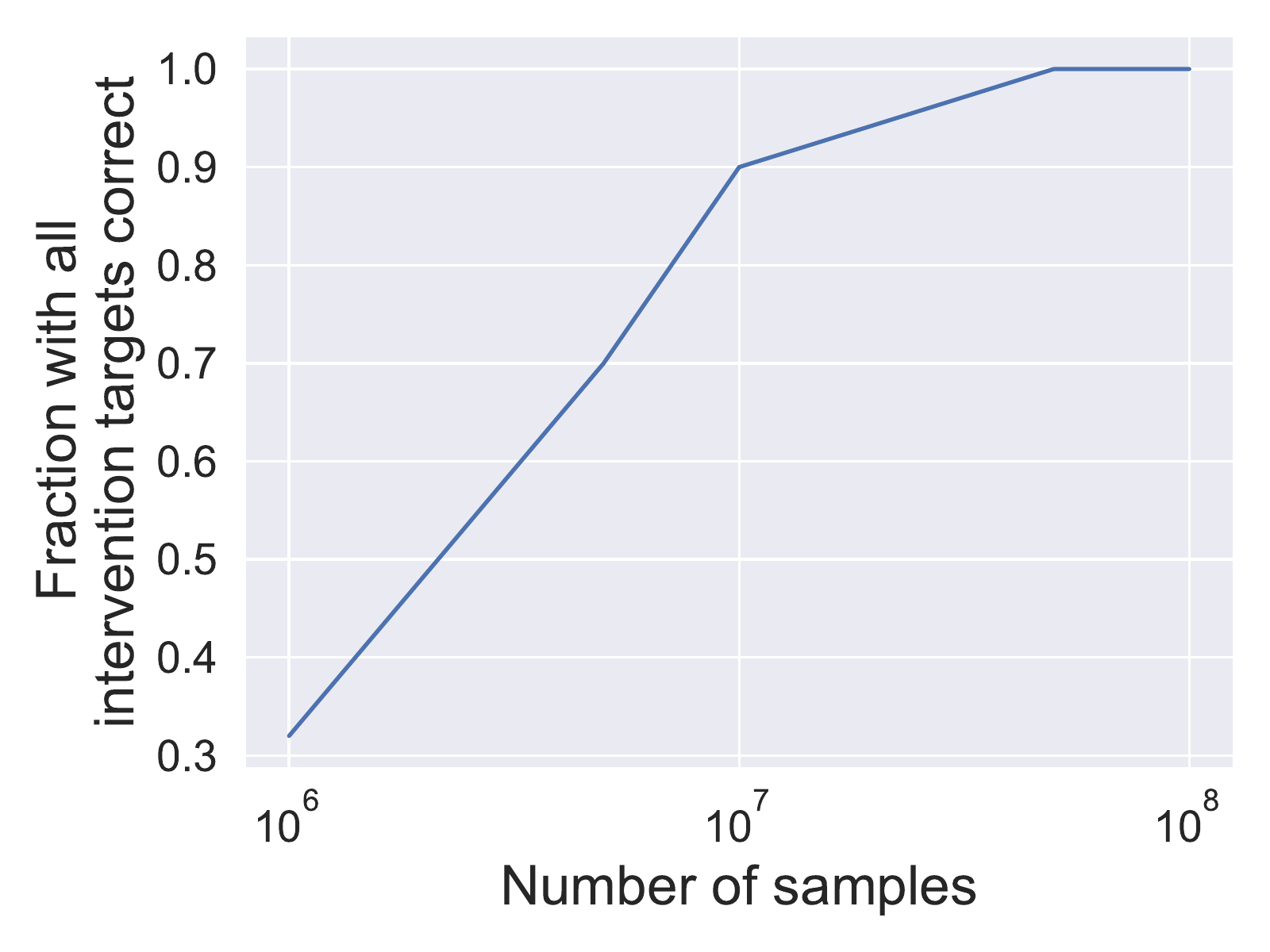}
        \caption{Intervention targets}
    \end{subfigure}
    \caption{
    \textbf{(Semi-synthetic) The adapted version of \rref{algm:iterative-difference-projection} is consistent for recovering $H$, $B_0$, and $\{ i_k \}_{k=1}^K$ from semi-synthetic data.}
    At each sample size, we generate 50 datasets.
    Note the logarithmic scale on the x-axis.
    In \textbf{(a)}, we plot the mean of $\| \hatH - H \|_2$, the error in Frobenius norm.
    In \textbf{(b)}, we plot the mean of $\| \hatB_0 - B_0 \|_2$.
    In \textbf{(c)}, we plot the fraction of models where all intervention targets were correctly estimated.
    }
    \label{fig:semisynthetic-results}
\end{figure*}

\end{document}